\newcommand{\todot}[2][]{\todo[size=\scriptsize,color=red!20!white,#1]{Tor: #2}}
\newcommand\blfootnote[1]{%
  \begingroup
  \renewcommand\thefootnote{}\footnote{#1}%
  \addtocounter{footnote}{-1}%
  \endgroup
}
\newcommand{\cX}{\mathcal{X}}
\DeclareMathOperator*{\argmin}{arg\,min}
\newcommand{\geo}{\text{\sc geo}}
\newcommand{\Prob}[1]{\mathbb P\left( #1 \right)}
\newcommand{\ind}[1]{\mathds{1}\{#1\}}
\newcommand{\norm}[1]{\left\Vert #1 \right\Vert}
\newcommand{\E}{\mathbb E}
\newcommand{\ip}[1]{\left< #1 \right>}
\newcommand{\logit}{\operatorname{logit}}
\newcommand{\cdbar}{\; | \;}
\newcommand{\mnist}{{\sc mnist}}
\newcommand{\supp}{\operatorname{Supp}}
\newcommand{\sign}{\operatorname{sign}}
\newcommand{\cI}{\mathcal{I}}
\newcommand{\cC}{\mathcal{C}}
\newcommand{\cB}{\mathcal B}
\newcommand{\cF}{\mathcal F}
\newcommand{\cG}{\mathcal G}
\newcommand{\cD}{\mathcal D}
\newcommand{\cW}{\mathcal W}
\newcommand{\cR}{\mathcal R}
\newcommand{\cS}{\mathcal S}
\newcommand{\cZ}{\mathcal Z}
\newcommand{\cM}{\mathcal M}
\newcommand{\N}{\mathbb N}
\newcommand{\R}{\mathbb R}
\newcommand{\Q}{\mathbb Q}
\newcommand{\cN}{\mathcal N}
\let\empty\epsilon
\let\epsilon\varepsilon
\newtheorem{myprop}{Proposition}
\begin{document}

\title{Online Learning with Gated Linear Networks}

\author{\name Joel Veness$^\dagger$ \email aixi@google.com \\
		\name Tor Lattimore$^\dagger$ \email lattimore@google.com \\
		\name Avishkar Bhoopchand$^\dagger$ \email avishkar@google.com\\
		\name Agnieszka Grabska-Barwinska \email agnigb@google.com\\
		\name Christopher Mattern \email cmattern@google.com \\
		\name Peter Toth \email petertoth@google.com \\
       \addr Google, DeepMind
}

\editor{}

\maketitle

\begin{abstract}
This paper describes a family of probabilistic architectures designed for online learning under the logarithmic loss.
Rather than relying on non-linear transfer functions, our method gains representational power by the use of data conditioning.
We state under general conditions a learnable capacity theorem that shows this approach can in principle learn any bounded Borel-measurable function on a compact subset of euclidean space; the result is stronger 
than many universality results for connectionist architectures because we provide both the model and the learning procedure for which convergence is guaranteed.
\end{abstract}

\begin{keywords}
Compression, online learning, geometric mixing, logarithmic loss.
\end{keywords}

\blfootnote{$\dagger$ denotes joint first authorship.}

\section{Introduction}

This paper explores the use of techniques from the online learning and data compression communities for the purpose of high dimensional density modeling, with a particular emphasis on image density modeling.
The current state of the art is almost exclusively dominated by various deep learning based approaches from the machine learning community.
These methods are trained offline, can generate plausible looking samples, and generally offer excellent empirical performance. But they also have some drawbacks. Notably, they require multiple passes through
the data (not online) and do not come with any kind of meaningful theoretical guarantees.
Of course density modeling is not just a machine learning problem; in particular, it has received considerable study from the statistical data compression community. 
Here the emphasis is typically on \emph{online} density estimation, as this in conjunction with an adaptive arithmetic encoder \citep{Witten1987} obviates the need to encode the model parameters along with the encoded data when performing compression.
Our main contribution is to show that a certain family of neural networks, composed of techniques originating from the data compression and online learning communities, can in some circumstances match the performance 
of deep learning based approaches in just a single pass through the data, while also enjoying universal source coding guarantees.

\subsection{Related Work}

\subsubsection{Online, neural, density estimation}

Although neural networks have been around for a long time, their application to data compression has been restricted until relatively recently due to hardware limitations.
To the best of our knowledge, the first promising empirical results were presented by \cite{schmidhuber96}, who showed that an offline trained 3-layer neural network could outperform Lempel-Ziv based approaches on text prediction; interestingly online prediction was noted as a promising future direction if the computational performance issues could be addressed.

Building on the work of \cite{schmidhuber96}, \cite{Mahoney2000} introduced a number of key algorithmic and architectural changes tailored towards computationally efficient online density estimation.
Rather than use a standard fully connected MLP, neurons within a layer were partitioned into disjoint sets; given a single data point, hashing was used to select only a single neuron from each set in each layer, implementing a kind of hard gating mechanism. 
Since only a small subset of the total weights were used for any given data point, a speed up of multiple orders of magnitude was achieved.
Also, rather than using backpropagation, the weights in each neuron were adjusted using a local learning rule, which was found to work better in an online setting.

The next major enhancement was the introduction of \emph{context-mixing}, culminating in the PAQ family of algorithms \citep{Mahoney2013}. 
At a high level, the main idea is to combine the predictions of multiple history-based models using a network architecture similar in spirit to the one described above.
Common choices of models to combine include $n$-grams, skip-grams \citep{Guthrie06}, match models \citep{Mahoney2013}, Dynamic Markov Coding \citep{Cormack87}, as well as specialized models for commonly occurring data sources.
Many variants along this theme were developed, including improvements to the network architecture and weight update mechanism, with a full history given in the book by \cite{Mahoney2013}.
At the time of writing, context mixing variants achieve the best compression ratios on widely used benchmarks such as the Calgary Corpus \citep{bell90}, 
Canterbury Corpus \citep{bell97}, and Wikipedia \citep{hutterprize}.

The excellent empirical performance of these methods has recently motivated further investigation into understanding why these techniques seem to perform so well and whether they have applicability beyond data compression.
\cite{knoll12} investigated the PAQ8 variant \citep{Mahoney2013} from a machine learning perspective, providing a useful interpretation of the  as a kind of \emph{mixture of experts} \citep{Jacobs1991} architecture, as well as providing an up-to-date summary of the algorithmic developments since the tech report of \cite{Mahoney2005}.
They also explored the performance of PAQ8 in a variety of standard machine learning settings including text categorization, shape recognition and classification, 
showing that competitive performance was possible.
One of the key enhancements in PAQ7 was the introduction of logistic mixing, which applied a logit based non-linearity to a neurons input; \cite{Mattern12, Mattern13} generalized 
this technique to non-binary alphabets and justified this particular form of input combination via a KL-divergence minimization argument.
Furthermore he showed that due to the convexity of the instantaneous loss (in the single neuron case), the local gradient based weight update used within PAQ7 and later versions 
is justified in a regret-minimizing sense with respect to piece-wise stationary sources.
Subsequent work \citep{Mattern16} further analyzed the regret properties of many of the core building blocks used to construct many of the base models used within PAQ models, 
but a theoretical understanding of the mixing network with multiple interacting geometric mixing neurons remained open.

\subsubsection{Batch Neural Density Estimation}

Generative image modeling is a core topic in unsupervised learning and
has been studied extensively over the last decade from a deep learning perspective. The topic is sufficiently developed that there are now practical techniques for generating 
realistic looking natural images \citep[and others]{larochelle2011, icml2015_gregor15, VanDenOord2016, ChenKSDDSSA16, GulrajaniKATVVC16}.
Modeling images is necessarily a high dimensional problem, which precludes the use of many traditional density estimation techniques on both computational and statistical grounds.
Although image data is high dimensional, it is also quite structured, and is well suited to deep learning approaches which can easily incorporate appropriate notions of locality and translation invariance into the architecture.

The aforementioned methods all require multiple passes over the dataset to work well, which makes it significantly more challenging to incorporate them into naturally online settings such as reinforcement learning, where density modeling is just starting to see promising applications \citep{VenessBHCD15,OstrovskiBOM17}. 

\subsubsection{Miscellaneous}

Our work shares some similarity to that of \cite{Balduzzi16}, who used techniques from game theory and online convex programming to analyze some convergence properties of deep convolutional ReLU neural networks by introducing a notion of gated regret.
While extremely general, this notion of gated regret is defined with respect to an oracle that knows which ReLU units will fire for any given input; while this is sufficient to guarantee convergence, it does not provide any guarantees on the quality of the obtained solution.
Our work fundamentally differs in that we explicitly construct a family of gated neural architectures where it is easy to locally bound the usual notion of regret, allowing us to provide far more meaningful theoretical guarantees in our restricted setting.

\cite{foerster17a} recently introduced a novel recurrent neural architecture whose modeling power was derived from using a data-dependent affine transformation as opposed to a non-linear activation function. 
As we shall see later, this is similar in spirit to our approach; we use a product of data-dependent weight matrices to provide representation power instead of a non-linear activation function.
Our work differs in that we consider an online setting, and use a local learning rule instead of backpropagation to adjust the weights.

\subsection{Contribution}

Our main contributions are:
\begin{itemize}
\item to introduce a family of neural models, Gated Linear Networks, which consist of a sequence of data dependent linear networks coupled with an appropriate choice of gating function; we show that the high performance PAQ models are a special case of this framework, opening up many potential avenues for improvement; 
\item to theoretically justify the local weight learning mechanism in these architectures, and highlight how the structure of the loss allows many other techniques 
to give similar guarantees. Interestingly, while the original motivation for introducing gating was computational \citep{Mahoney2000,knoll12}, we show that this technique also adds meaningful 
representational power as well;
\item introduce an adaptive regularization technique which allows a Gated Linear Network to have competitive loss guarantees with respect to all possible sub-networks obtained via pruning the original network; 
\item to provide an \emph{effective} capacity theorem showing that with an appropriate choice of gating function, a sufficiently large network size and a Hannan consistent/no-regret \citep{Cesa-Bianchi2006} local learning method, Gated Linear Networks \emph{will} learn any continuous density function to an arbitrary level of accuracy. 
\end{itemize}

\subsection{Paper Outline}

After introducing the required notation, we proceed by describing a notion of ``neuron'' suitable for our purposes, which we will call a \emph{gated geometric mixer}; this will form 
the elementary building block for all of our subsequent work.
We then proceed to describe \emph{gated linear networks}, which are feed-forward architectures composed of multiple layers of such gated geometric mixing neurons.
Our theoretical results are then presented, which include a capacity analysis of this family of models. 
A technique for adaptive regularization, \emph{sub-network switching}, is then introduced, along with its associated theoretical properties.
Next we describe an architecture built from these principles, reporting empirical results on \mnist, both classification and density modelling.

\section{Gated Geometric Mixing}

This section introduces the basic building block of our work, the \emph{gated geometric mixer}.
We start with some necessary notation, before describing (ungated) \emph{geometric mixing} and some of its properties.
We then generalize this technique to the gated case, giving rise to the gated geometric mixer.

\subsection{Notation}
\label{sec:notation}

Let $\Delta_d = \{x \in [0,1]^d : \norm{x}_1 = 1\}$ be the $d$ dimensional probability simplex embedded in $\R^{d+1}$ and
$\cB = \{ 0, 1 \}$ the set of binary elements.
The indicator function for set $A$ is $\mathds{1}_A$ and satisfies $\mathds{1}_A(x) = 1$ if $x \in A$ and $\mathds{1}_A(x) = 0$ otherwise.
For predicate $P$ we also write $\mathds{1}[P]$, which evaluates to $1$ if $P$ is true and $0$ otherwise.
The scalar element located at position $(i,j)$ of a matrix $A$ is $A_{ij}$, with the $i$th row and $j$ column denoted by $A_{i*}$ and $A_{*j}$ respectively. 
For functions $f:\R \to \R$ and vectors $x \in \R^d$ we adopt the convention of writing $f(x) \in \R^d$ for the coordinate-wise image of $x$ under $f$ so that $f(x) = \left( f(x_1), \dots, f(x_d) \right)$.
If $p,q \in [0,1]$, then $\cD(p,q) = p \log p/q + (1-p) \log(1-p)/(1-q)$ is the Kullback-Leibler (KL) divergence between Bernoulli distributions with parameters $p$ and $q$ respectively.
Let $\cX$ be a finite, non-empty set of symbols, which we call the alphabet.
A string of length $n$ over $\cX$ is a finite sequence $x_{1:n} = x_1x_2 \ldots x_n \in \cX^n$ with $x_t \in \cX$ for all $t$. 
For $t \leq n$ we introduce the shorthands $x_{<t} = x_{1:t-1}$ and $x_{\leq t} = x_{1:t}$. 
The string of length zero is $\empty$ and the set of all finite strings is $\cX^* = \{\empty\} \cup \bigcup_{i=1}^{\infty} \cX^i$.
The concatenation of two strings $s,r \in \cX^*$ is denoted by $sr$.
A sequential, probabilistic model $\rho$ is a probability mass function $\rho : \cX^* \to [0,1]$, satisfying the constraint that 
$\rho(x_{1:n}) = \sum_{y\in\cX} \rho(x_{1:n}y)$
for all $n \in \mathbb{N}$, $x_{1:n} \in \cX^n$, with $\rho(\empty) = 1$.
Under this definition, the conditional probability of a symbol $x_n$ given previous data $x_{<n}$ is defined as $\rho(x_n \, | \, x_{<n}) = \rho(x_{1:n}) / \rho(x_{<n})$ 
provided $\rho(x_{<n}) > 0$, with the familiar chain rules $\rho(x_{1:n}) = \prod_{i=1}^n \rho(x_i \, | \, x_{<i})$ and $\rho(x_{i:j}\, |\, x_{<i}) = \prod_{k=i}^j \rho(x_k \, | \, x_{<k})$ applying as usual.

\subsection{Geometric Mixing}

\begin{figure}[t!]
\centering
\vspace{-3.5em}
\includegraphics{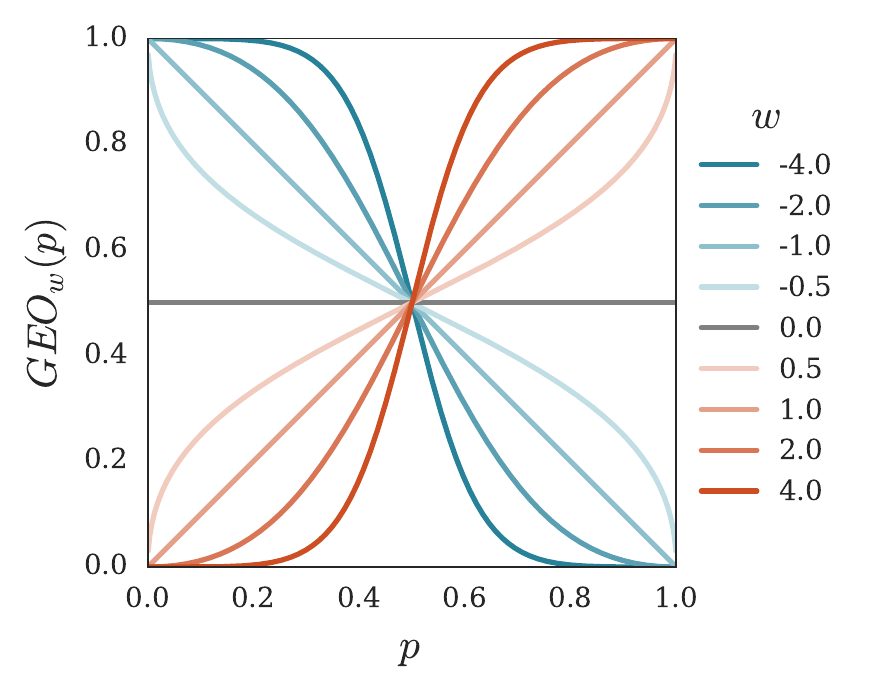}
\vspace{-1.5em}
\caption{Role of $w$ in a one-dimensional geometric mixer with input probability $p$.}
\label{pic:geo}
\end{figure}

We now describe geometric mixing, an adaptive online ensemble technique for obtaining a single conditional probability estimate from the output of multiple models. 
We provide a brief overview here, restricting our attention to the case of binary sequence prediction for simplicity of exposition; for more information see the work of \cite{Mattern12,Mattern13,Mattern16}. 

\subsubsection{Model}

Given $m$ sequential, probabilistic, binary models $\rho_1, \dots, \rho_m$, Geometric Mixing provides a principled way of combining the $m$ associated conditional probability distributions into a single conditional probability distribution, giving rise to a probability measure on binary sequences that has a number of desirable properties.
Let $x_t \in \{ 0, 1 \}$ denote the Boolean target at time $t$.
Furthermore, let $p_t = \left( \, \rho_1(x_t=1 | x_{<t}), \dots, \rho_m(x_t =1| x_{<t}) \, \right)$.
Given a convex set $\cW \subset \mathbb{R}^m$ and parameter vector $w \in \cW$, the Geometric Mixture is defined by
\begin{equation}\label{eq:geomix}
\geo_{w}(x_t = 1; p_t) = \frac{\prod_{i=1}^m p_{t,i}^{w_i}}{ \prod_{i=1}^m p_{t,i}^{w_i} + \prod_{i=1}^m (1 - p_{t,i})^{w_i} },
\end{equation}
with $\geo_{w}(x_t = 0; p_t) = 1 - \geo_{w}(x_t = 1; p_t)$. 

\paragraph{Properties.}

A few properties of this formulation are worth discussing.
Setting $w_i = 1/m$ for $i \in [1,m]$ is equivalent to taking the geometric mean of the $m$ input probabilities.
As illustrated in Figure \ref{pic:geo}, higher absolute values of $w_i$ translate into an increased belief into model $i$'s prediction; for negative values of $w_i$, the prediction needs to be reversed (see blue lines in Fig.\,\ref{pic:geo}).
If $w = 0$ then $\geo_{w}(x_t = 1 ; p_t) = 1/2$; and in the case where $w_i = 0$ for $i \in \cS$ where $\cS$ is a proper subset of $[1,m]$, the contributions of the models in $\cS$ are essentially ignored (taking $0^0$ to be $1$).
Due to the product formulation, every model also has ``the right of veto", in the sense that a single $p_{t,i}$ close to 0 coupled with a $w_i > 0$ drives $\geo_{w}(x_t = 1; p_t)$ close to zero.
These properties are graphically depicted in Figure \ref{pic:geo}.

\paragraph{Alternate form.}

Via simple algebraic manipulation, one can also express Equation \ref{eq:geomix} as 
\begin{equation}\label{eq:geomix2}
\geo_{w}(x_t = 1; p_t) = \sigma \left(w \cdot \logit(p_t)\right)\,,
\end{equation}
where $\sigma(x) = 1 / (1 + e^{-x})$ denotes the sigmoid function, and $\logit(x) = \log(x/(1-x))$ is its inverse. This form is best suited for numerical implementation.
Furthermore, the property of having an input non-linearity that is the inverse of the output non-linearity is the reason why a linear network is obtained when layers of geometric mixers are stacked on top of each other.

\paragraph{Remarks}
This form of combining probabilities is certainly not new; for example it is discussed by \cite{Genest86} under the name \emph{logarithmic opinion pooling} and is closely related to the Product of Experts model of \cite{hinton2002}.
While one can attempt to justify this choice of probability combination via appealing to notions such as its \emph{externally Bayesian} properties \citep{Genest86}, 
or as the solution to various kinds of weighted redundancy minimization problems \citep{Mattern13,Mattern16}, the strongest case for this form of model mixing is 
simply its superior empirical performance when $w$ is adapted over time via online gradient descent \citep{zinkevich03} 
compared with other similarly complex alternates such as linear mixing \citep{Mattern12,Mahoney2013}. 

\subsubsection{Properties under the Logarithmic Loss}\label{sec:loss-prop}

We assume a standard online learning setting, whereby at each round $t \in \mathbb{N}$ a predictor outputs a binary distribution $q_t~:~\cB~\to~[0,1]$, with the environment responding with an observation $x_t \in \cB$.
The predictor then suffers the logarithmic loss 
\begin{align*}
\ell_t(q_t, x_t) = -\log q_t(x_t),
\end{align*}
before moving onto round $t+1$.
The loss will be close to 0 when the predictor assigns high probability to $x_t$, and large when low probability is assigned to $x_t$; in the extreme cases, a zero loss is obtained when $q_t(x_t)=1$, and an infinite loss is suffered when $q_t(x_t)=0$. 
In the case of geometric mixing, which depends on both the $m$ dimensional input predictions $p_t$ and the parameter vector $w \in \cW$, we abbreviate the loss by defining
\begin{align}
\ell^{\geo}_t(w) = \ell_t(\geo_w( \cdot \, ;p_t), x_t). \label{eq:geo-loss}
\end{align}
The following proposition, proven in Appendix~\ref{app:loss-prop}, establishes some useful properties about the logarithmic loss when applied to Geometric Mixing.

\begin{myprop}\label{prop:loss-prop}
For all $t \in \mathbb{N}$, $x_t \in \cB$, $p_t \in (0,1)^m$ and $w \in \cW$ we have:
\begin{enumerate}
\item $\nabla \ell^\geo_t(w) = \left( \geo_w( 1 ; p_t) - x_t \right) \logit(p_t)$.
\item $\norm{\nabla \ell^\geo_t(w)}_2 \leq \norm{\logit(p_t)}_2$.
\item $\ell^\geo_t(w)$ is a convex function of $w$.
\item If $p_t \in [\epsilon, 1 - \epsilon]^{m}$ for some $\epsilon \in (0,1/2)$, then:
 \begin{enumerate}
	\item $\ell^\geo_t:\cW \to \R$ is $\alpha$-exp-concave with 
	$\alpha = \sigma \left( \log\left( \frac{\epsilon}{1 - \epsilon} \right)^{\max_{w \in \cW} \norm{w}_1} \right) ;$ 
	\item $\norm{\nabla \ell^\geo_t(w)}_2 \leq \sqrt{m} \log \left( \frac{1}{\epsilon} \right)$.
 \end{enumerate}
\end{enumerate}
\end{myprop}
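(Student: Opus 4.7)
The plan is to reduce everything to the one-dimensional variable $z=w\cdot\logit(p_t)\in\R$ and exploit the familiar identities for the sigmoid. Using the alternate form \eqref{eq:geomix2}, the instantaneous log loss is
\begin{equation*}
\ell^\geo_t(w) \;=\; -x_t\log\sigma(z) \;-\; (1-x_t)\log(1-\sigma(z)),
\end{equation*}
so differentiating in $z$ and using $\sigma'(z)=\sigma(z)(1-\sigma(z))$ together with $\frac{d}{dz}\log\sigma(z)=1-\sigma(z)$ and $\frac{d}{dz}\log(1-\sigma(z))=-\sigma(z)$ collapses both cases into $\frac{d\ell}{dz}=\sigma(z)-x_t$. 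Applying the chain rule with $\nabla_w z=\logit(p_t)$ immediately yields item 1. Item 2 then follows from $|\sigma(z)-x_t|\leq 1$ since $\sigma(z)\in(0,1)$ and $x_t\in\{0,1\}$.

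For item 3, the same computation shows $\frac{d^2\ell}{dz^2}=\sigma(z)(1-\sigma(z))\geq 0$, so $\ell$ is a convex function of $z$; convexity in $w$ then follows because $z$ is linear in $w$. Equivalently, one can observe $\ell$ is of the form $\log(1+e^{\pm z})$ (softplus) composed with a linear map.

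For item 4(a), I would use the standard second-order characterization: a convex $C^2$ function is $\alpha$-exp-concave iff $\nabla^2\ell\succeq \alpha\,\nabla\ell\,\nabla\ell^\top$. Here $\nabla^2_w \ell = \sigma(z)(1-\sigma(z))\,gg^\top$ and $\nabla_w\ell\nabla_w\ell^\top = (\sigma(z)-x_t)^2 gg^\top$ with $g=\logit(p_t)$, so the condition reduces to the scalar inequality
\begin{equation*}
\alpha \;\leq\; \frac{\sigma(z)(1-\sigma(z))}{(\sigma(z)-x_t)^2}.
\end{equation*}
Evaluating at $x_t\in\{0,1\}$ gives $e^z$ and $e^{-z}$, so the worst case is $\alpha\leq e^{-|z|}$. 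Finally I bound $|z|\leq \|w\|_1\|\logit(p_t)\|_\infty$, and the coordinate bound $p_{t,i}\in[\epsilon,1-\epsilon]$ forces $|\logit(p_{t,i})|\leq\log\tfrac{1-\epsilon}{\epsilon}$; substituting and taking the worst $w\in\cW$ gives the claimed $\alpha$ (up to the harmless reparameterization $\sigma(\log x)=x/(1+x)\leq x$ used in the statement). For item 4(b), I combine $|\sigma(z)-x_t|\leq 1$ with $\|\logit(p_t)\|_2\leq \sqrt{m}\,\|\logit(p_t)\|_\infty\leq\sqrt{m}\log(1/\epsilon)$, where the last step uses $1-\epsilon<1$.

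The only step requiring any care is 4(a): one must be clear which characterization of exp-concavity is being used, and make sure the bound on $|z|$ is taken with respect to the worst $w$ in the feasible set $\cW$ and the worst coordinate of $\logit(p_t)$. Everything else is a routine chain-rule and sigmoid calculation.
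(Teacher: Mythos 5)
Your proof is correct and follows essentially the same route as the paper: reduce to the scalar $z=w\cdot\logit(p_t)$, use the sigmoid identities for the gradient and Hessian, and invoke the second-order characterization $\nabla^2\ell \succeq \alpha\,\nabla\ell\,\nabla\ell^\top$ for exp-concavity. Your scalar reduction in 4(a) actually yields the slightly tighter constant $\alpha=\left(\tfrac{\epsilon}{1-\epsilon}\right)^{\max_{w\in\cW}\norm{w}_1}$ (the paper's stated constant is $\sigma$ of its logarithm, which is smaller and hence also valid), and you correctly flag this as a harmless reparameterization.
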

Note also that Part 4.(a) of Proposition \ref{prop:loss-prop} implies that $\alpha = \Theta \left( \left( \epsilon / (1 - \epsilon) \right)^{\max_{w \in \cW} \norm{w}_1} \right)$. 

\paragraph{Remarks}
The above properties of the sequence of loss functions make it straightforward to apply one of the many different online convex programming techniques to adapt $w$ at the end of each round.
The simplest to implement and most computationally efficient is Online Gradient Descent \citep{zinkevich03} with $\cW$ equal to some choice of hypercube.
The convexity of $\ell^\geo_t(w)$ allows one to derive an $O(\sqrt{T})$ regret bound using standard techniques with respect to the best $w^* \in \cW$ chosen in hindsight 
provided an appropriate schedule of decaying learning rates is used, where $T$ denotes the total number of rounds.
Furthermore, when a fixed learning rate is used one can show $O(s \sqrt{T})$ regret guarantees with respect to data sequences composed of $s$ pieces \citep{Mattern13}.
More refined algorithms based on coin betting allow the $s$ to be moved inside the square root as shown by \cite{JOWW17}. 
The $\alpha$-exp-concavity also allows second order techniques such as Online Newton Step \citep{HazanAK07} and its sketched variants \citep{LuoACL16} to be applied. 
These techniques allow for $O(\log{T})$ regret guarantees with respect to the best $w^* \in \cW$ chosen in hindsight at the price of a more computationally demanding 
weight update procedure and further dependence on $\epsilon$ as given in Part 4.(a) of Proposition \ref{prop:loss-prop}.

\subsection{Gated Geometric Mixing}

We are now in a position to define our notion of a single neuron, a \emph{gated geometric mixer}, which we obtain by adding a contextual gating procedure to geometric mixing.
Here, contextual gating has the intuitive meaning of mapping particular examples to particular sets of weights.
More precisely, associated with each neuron is a context function $c : \cZ \to \cC$, where $\cZ$ is the set of possible \textit{side information} and $\cC = \{0,\dots,k-1 \}$ for some $k \in \mathbb{N}$ is the \textit{context space}.
Given a convex set $\cW \subset \mathbb{R}^d$, each neuron is parametrized by a matrix
\begin{equation*}
W = \begin{bmatrix}
w_0\\
\vdots \\
w_{k-1}
\end{bmatrix}
\end{equation*}
with each row vector $w_i \in \cW$ for $0 \leq i < k$.
The context function $c$ is responsible for mapping a given piece of side information $z_t \in \cZ$ to a particular row $w_{c(z_t)}$ of $W$, which we then use with standard geometric mixing.
More formally, we define the gated geometric mixture prediction as
\begin{equation}\label{eq:gated_geomix}
\geo^c_{W}(x_t = 1 \,;\, p_t, z_t) = \geo_{w_{c(z_t)}}(x_t = 1 \,;\, p_t),
\end{equation}
with $\geo^c_{W}(x_t = 0 \,;\, p_t, z_t) := 1 - \geo^c_{W}(x_t = 1 \,;\, p_t, z_t)$. 
Once again we have the following equivalent form
\begin{equation}\label{eq:gated_geomix2}
\geo^c_{W}(x_t = 1; p_t, z_t) =  \sigma \left(w_{c(z_t)} \cdot \logit(p_t)\right).
\end{equation}
The key idea is that our neuron can now specialize its weighting of the input predictions based on some property of the side information $z_t$.
The side information can be arbitrary, for example it could be some additional input features, or even functions of $p_t$. 
Ideally the choice of context function should be informative in the sense that it simplifies the probability combination task.

\subsubsection{Context Functions}

Here we introduce several classes of general purpose context functions that have proven useful empirically, theoretically, or both. 
All of these context functions take the form of an indicator function $\mathds{1}_{\cS}(z) : \cZ \to \cB$ on a particular choice of set $\cS \subseteq \cZ$, with $\mathds{1}_{\cS}(z) := 1$ if $z \in \cS$ and $0$ otherwise.
This list, while sufficient for understanding the content in this paper, is by no means exhaustive; practitioners can and should choose context functions that make sense for the given domain of interest.

\paragraph{Half-space contexts}

This choice of context function is useful for real-valued side information.
Given a normal $v \in \R^d$ and offset $b \in \R$, consider the associated affine hyperplane $\{x \in \mathbb{R}^d : x \cdot v = b \}$.
This divides $\mathbb{R}^d$ in two, giving rise to two half-spaces, one of which we denote $H_{v,b} = \{x \in \mathbb{R}^d : x \cdot v \geq b\}$.
The associated half-space context is then given by $\mathds{1}_{H_{v,b}}(z)$.

\paragraph{Skip-gram contexts}

The following type of context function is useful when we have multi-dimensional binary side information and can expect single components of $\cZ$ to be informative.
If $\cZ = \cB^d$, given an index $i \in [1,d]$, a skip-gram context is given by the function
$\mathds{1}_{\cS_i}(z)$
where $S_i := \{ z \in \cZ : z_i = 1 \}$.
One can also naturally extend this notion to categorical multi-dimensional side information or real valued side information by using thresholding. 

\subsubsection{Context Function Composition}
\label{sec:context_composition}

Richer notions of context can be created from composition.
In particular, any finite set of $d$ context functions $\{ c_i : \cZ \to \cC_i \}_{i=1}^d$ with associated context spaces $\cC_1, \dots, \cC_d$ can be composed into a single higher order context function $c : \cZ \to \cC$, where $\cC = \left\{0,1,\ldots, -1 + \prod_{i=1}^d |\cC_i| \right\}$ by defining
\begin{equation*}
c(z) = \sum_{i=1}^d c_i(z) \left ( \prod_{j=i+1}^d |\cC_j| \right).
\end{equation*}

For example, we can combine four different skip-gram contexts into a single context function with a context space containing 16 elements.
The combined context function partitions the side information based on the values of the four different binary components of the side information.

\section{Gated Linear Networks}

We now introduce \emph{gated linear networks} (GLNs), which are feed-forward networks composed of many layers of gated geometric mixing neurons.
Each neuron in a given layer outputs a gated geometric mixture over the predictions from the previous layer, with the final layer consisting of just a single neuron that determines the output of the entire network.

\subsection{Model}

\begin{figure}[t!]
\centering
\vspace{-2em}
\includegraphics[scale=0.7]{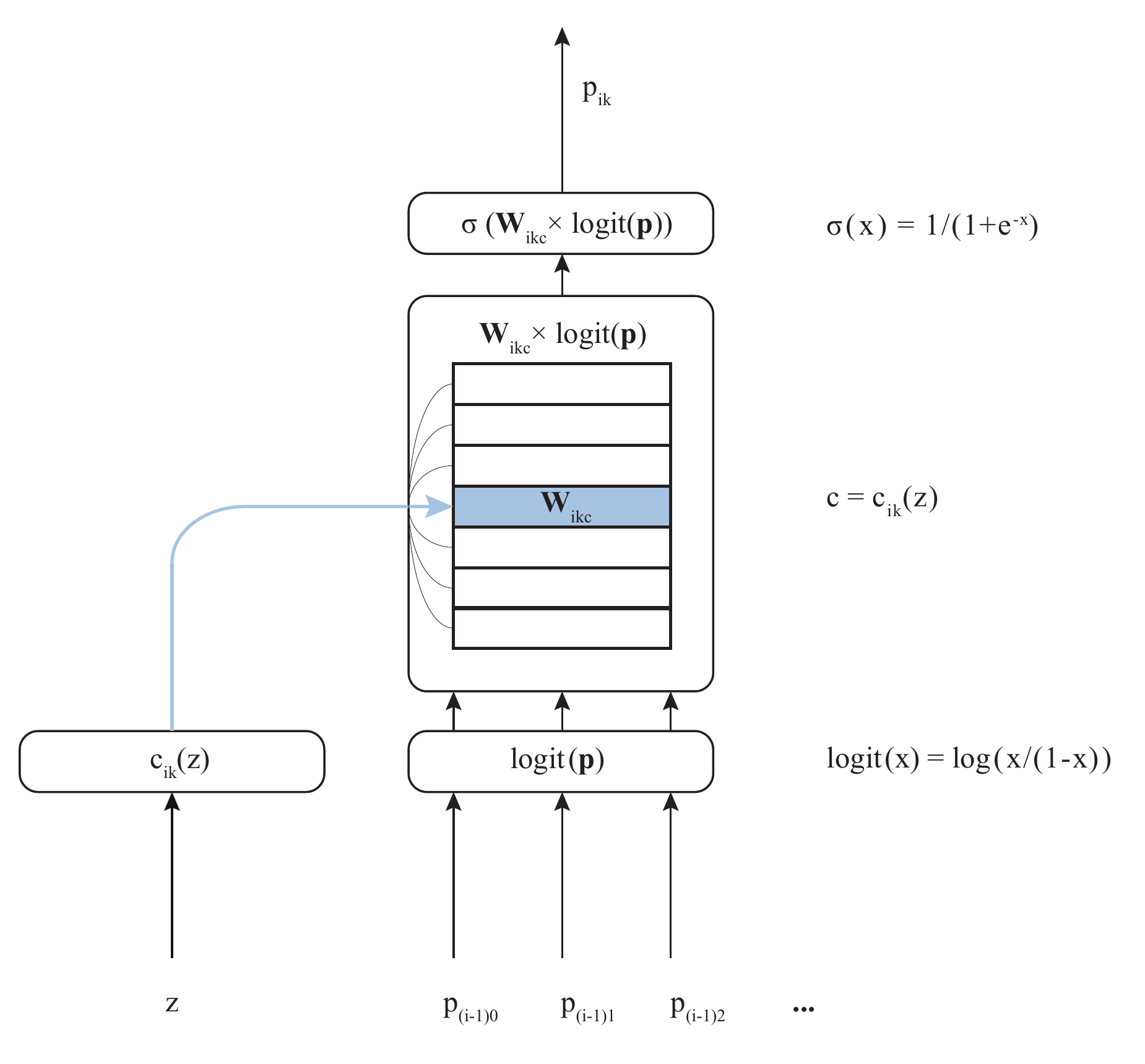}
\caption{Behaviour of the $k$th neuron in layer $i > 0$ of a Gated Linear Network.}\label{fig:neuron}
\end{figure}

Once again let $\cZ$ denote the set of possible side information and $\cC \subset \mathbb{N}$ be a finite set called the context space.
A GLN is a network of sequential, probabilistic models organized in $L+1$ layers indexed by $i \in \{0,\ldots,L\}$, with $K_i$ models in each layer.
Models are indexed by their position in the network when laid out on a grid; for example, $\rho_{ik}$ will refer to the $k$th model in the $i$th layer.
The zeroth layer of the network is called the \textit{base layer} and is constructed from $K_0$ probabilistic base models $\left \{ \rho_{0k} \right \}_{k=0}^{K_0-1}$ of the form given in Section \ref{sec:notation}.
Since each of their predictions is assumed to be a function of the given side information and all previously seen examples, these base models can essentially be arbitrary.
The nonzero layers are composed of gated geometric mixing neurons. 
Associated to each of these will be a fixed context function $c_{ik} : \cZ \to \cC$ that determines the behavior of the gating. 
In addition to the context function, for each context $c \in \cC$ and each neuron $(i,k)$ there is an associated weight vector $w_{ikc} \in \R^{K_{i-1}}$ which is used to geometrically mix the inputs.
We also enforce the constraint of having a non-adaptive bias model on every layer, which will be denoted by $\rho_{i0}$ for each layer $i$.
Each of these bias models will correspond to a Bernoulli Process with parameter $\beta$.
These bias models play a similar role to the bias inputs in MLPs.

\paragraph{Network Output}

\begin{figure}[t!]
\centering
\vspace{-3.5em}
\includegraphics[scale=0.5]{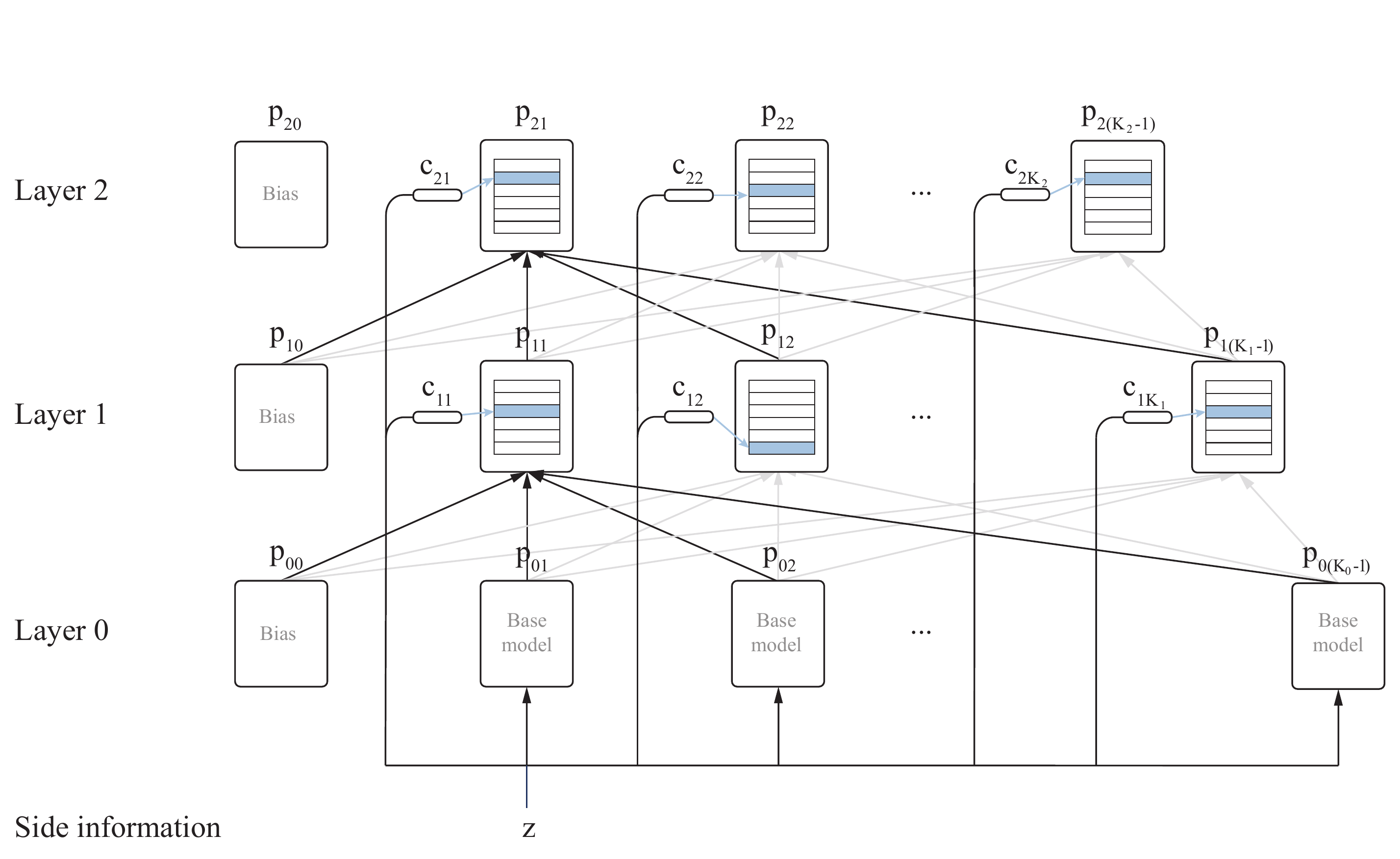}
\caption{A graphical illustration of a Gated Linear Network.}
\label{pic:network}
\end{figure}

Given a $z \in \cZ$, a weight vector for each neuron is determined by evaluating its associated context function.
The output of each neuron can now be described inductively in terms of the outputs of the previous layer. 
To simplify the notation, we assume an implicit dependence on $x_{<t}$ and let $p_{ij}(z) = \rho_{ij} ( x_t = 1 \, | \, x_{<t} ; z)$ denote the output of the $j$th neuron in the $i$th layer, and $p_i(z) = \left( p_{i0}(z), p_{i1}(z), \dots, p_{i K_{i-1}} (z) \right)$ 
the output of the $ith$ layer.
The bias output for each layer is defined to be $p_{i0}(z) = \beta$ for all $z \in \cZ$, for all $0 \leq i \leq L+1$, where $\beta \in (0,1) \setminus \{  1/2 \}$.\footnote{The seemingly odd technical constraint $\beta \neq 0.5$ is required to stop the partial derivatives of the loss with respect to the bias weight being 0 under geometric mixing.}
From here onwards we adopt the convention of setting $\beta = e / (e+1)$ so that $\logit(\beta) = 1$.
For layers $i \geq 1$, the $k$th node in the $i$th layer receives as input the vector of dimension $K_{i-1}$ of predictions of the preceding layer (see Figure \ref{pic:network}).
The output of a single neuron is the geometric mixture of the inputs with respect to a set of weights that depend on its context, namely
\begin{align*}
p_{ik}(z) = \sigma \left( w_{ikc_{ik}(z)} \cdot \logit \left( p_{i-1} \left( z \right) \right) \right), 
\end{align*}
as illustrated by Figure \ref{fig:neuron}.
The output of layer $i$ can be re-written in matrix form as
\begin{align}\label{eq:neuron}
p_i(z) = \sigma(W_i(z) \logit(p_{i-1}(z)))\,,
\end{align}
where $W_i(z) \in \R^{K_i \times K_{i-1}}$ is the matrix with $k$th row equal to $w_{ik}(z) = w_{ikc_{ik}(z)}$.
Iterating Equation \ref{eq:neuron} once gives 
\begin{equation*}
p_i(z) = \sigma\left( W_i \left(z \right) \logit \left( \sigma \left( W_{i-1} \logit(p_{i-2}\left(z \right) \right) \right) \right).
\end{equation*}
Since $\logit$ is the inverse of $\sigma$, the $i$th iteration of Equation \ref{eq:neuron} simplifies to
\begin{align}
\label{eq:linear}
p_i(z)= \sigma\Bigl( W_{i}(z) W_{i-1}(z) \dots W_{1}(z) \logit(p_0(z)) \Bigr).
\end{align}
Equation~(\ref{eq:linear}) shows the network behaves like a linear network \citep{Baldi1989,SaxeMG13}, but with weight matrices that are data-dependent.
Without the data dependent gating, the product of matrices would collapse to single linear mapping, giving the network no additional modeling power over a single neuron \citep{minsky69perceptrons}.

\subsection{Learning in Gated Linear Networks}

We now describe how the weights are learnt in a Gated Linear Network.
While architecturally a GLN appears superficially similar to the well-known multilayer perception (MLP), what and how it learns is very different.
The key difference is that every neuron in a GLN probabilistically predicts the target. 
This allows us to associate a loss function to each neuron. 
This loss function will be defined in terms of just the parameters of the neuron itself; thus, unlike backpropagation, \emph{learning will be local}.
Furthermore, this loss function will be convex, which will allow us to avoid many of the difficulties associated with training typical deep architectures.
For example, we can get away with simple deterministic weight initializations, which aids the reproducibility of empirical results.
The convexity allows us to learn from correlated inputs in an online fashion without suffering significant degradations in performance.
And as we shall see later, GLNs are extremely data efficient, and can produce state of the art results in a single pass through the data.
One should think of each layer as being responsible for trying to directly improve the predictions of the previous layer, rather than a form of implicit non-linear feature/filter construction as is the case with MLPs trained offline with back-propagation \citep{Rumelhart1988}.

\paragraph{Weight space and initialization}
For our subsequent theoretical analysis, we will require that the weights satisfy the following mild technical constraints:
\begin{enumerate}
\item $w_{ijc0} \in [a,b] \subset \mathbb{R}$ for some real $a < 0$ and $b > 0$;
\item $w_{ijc} \in \cS \subset \R^{K_{i-1}}$ where $\cS$ is a compact, convex set such that $\Delta_{K_{i-1}-1} \subset \cS$.
\end{enumerate}
One natural way to simultaneously meet these constraints is to restrict each neurons contextual weight vectors to lie within some (scaled) hypercube:
$w_{ijc} \in [-b,b]^{K_{i-1}}$, where $b \geq 1$. 
For what follows we will need notation to talk about weight vectors at specific times.
From here onwards we will use $w^{(t)}_{ijc}$ to denote the weight vector $w_{ijc}$ at time $t$.
As each neuron will be solving an online convex programming problem, initialization of the weights is straightforward and is non-essential to the theoretical analysis.
Choices found to work well in practice are:
\begin{itemize}
\item \emph{zero initialization:} $w^{(1)}_{ikc} = 0$ $\quad$ for all $i,k,c$.
\item \emph{geometric average initialization:} $w^{(1)}_{ikc} = 1/K_{i-1}$ $\quad$ for all $i,k,c$.
\end{itemize}
The zero initialization can be seen as a kind of sparsity prior, where each input model is considered a-priori to be unimportant, which has the effect of making the geometric mixture rapidly adapt to incorporate the predictions of the best performing models.
The geometric average initialization forces the geometric mixer to (unsurprisingly) initially behave like a geometric average of its inputs, which makes sense if one believes that the predictions of each input model are reasonable.
One could also use small random weights, as is typically done in MLPs, but we recommend against this choice because it makes little practical difference and has a negative impact on reproducibility.

\paragraph{Weight update}

Learning in GLNs is straightforward in principle; as each neuron probabilistically predicts the target, one can treat the current input to any neuron as a set of expert predictions and apply a single step of local online learning using one of the no-regret methods discussed in Section \ref{sec:loss-prop}.
The particular choice of optimization method will not in any way affect the capacity results we present later.
For practical reasons however we focus our attention on Online Gradient Descent \citep{zinkevich03} with $\cW_{ik}$ a hypercube.
This allows the weight update for any neuron at layer $i$ to be done in time complexity $O(K_{i-1})$, which permits the construction of large networks.

More precisely, let $\ell^{ij}_t(w_{ijc})$ denote the loss of the $j$th neuron in layer $i$. 
Using Equation~(\ref{eq:geo-loss}) we have
\begin{equation}
\ell^{ij}_t(w_{ijc}) 
= \ell_t(\geo_{w}( \cdot \, ; p_{i-1}(z_t), x_t). 
\end{equation} 
Now, for all $i \in [1,L]$, $j\in K_i$, and for all $c = c_{ij}(z_t)$, we set
\begin{equation}\label{eq:gln_weight_update}
w^{(t+1)}_{ijc} = \Pi_i \left( w^{(t)}_{ijc} - \eta_t \nabla \ell^{ij}_t(w^{(t)}_{ijc}) \right),
\end{equation}
where 
$\Pi_i$ is the projection operation onto hypercube $[-b,b]^{K_{i-1}}$:
\begin{align*}
\Pi_i(x) = \argmin_{y \in [-b,b]^{K_{i-1}} } \norm{y - x}_2\,.
\end{align*}
The projection is efficiently implemented by clipping every component of $w^{(t)}_{ijc}$ to the interval $[-b,b]$.
The learning rate $\eta_t \in \mathbb{R}_+$ can depend on time; we will revisit the topic of appropriately setting the learning rate in subsequent sections.

\paragraph{Performance guarantees for individual neurons}
Let $(i,k)$ refer to a non-bias neuron. 
We now state a performance guarantee for a single neuron when $\cW_{ik} = [-b,b]^{K_{i-1}}$.
Given $a \in \cC$ consider the index set $N_{ika}(n) = \{t \le n : c_{ik}(z_t) = a\}$, the set of rounds when context $a$ is observed by neuron $(i,k)$. 
The regret experienced by the neuron on $N_{ika}(n)$ is defined as the difference between the losses suffered by the neuron and the losses it would have suffered using the best choice of weights in hindsight.
\begin{align}
R_{ika}(n) = \sum_{t \in N_{ika}(n)} \ell^{ik}_t(w_{ika}^{(t)}) - \min_{w \in \cW_{ik}} \sum_{t \in N_{ika}(n)}  \ell^{ik}_t(w) \,.
\label{def:regret-context}
\end{align}
\cite{zinkevich03} showed that if the learning rates are set to $\eta_t = \frac{D}{G \sqrt{t}}$, then the regret of gradient descent is at most 
\begin{align}
R_{ika}(n) \leq \frac{3DG}{2} \sqrt{|N_{ika}(n)|}\,, 
\label{eq:regret-guarantee}
\end{align}
where $D = \max_{x,y \in \cW} \norm{x - y}_2$ is the diameter of $\cW_{ik}$ and $G$ is an upper bound on the $2$-norm of the gradient of the loss.
In our case we have $D = 2b\sqrt{K_{i-1}}$, with a bound $G \leq \sqrt{K_{i-1}} \log(\frac{1}{\epsilon})$ following from Proposition~\ref{prop:loss-prop} Part 4.(b) by ensuring the input to each neuron is within $[\epsilon, 1-\epsilon]$. 
The regret definition given in \cref{def:regret-context} is on the subsequence of rounds when $c_{ik}(z_t) = a$. The total regret for neuron $(i,k)$ is
\begin{align}
R_{ik}(n) = \sum_{a \in \cC} R_{ika}(n)\,.
\label{def:regret}
\end{align}
Combining the above and simplifying, we see that
\begin{align*}
R_{ik}(n) &\leq \frac{3DG}{2} \sum_{a \in \cC} \sqrt{|N_{ika}(n)|} 
= \frac{3DG|\cC|}{2} \sum_{a \in \cC} \frac{1}{|\cC|} \sqrt{|N_{ika}(n)|}\\
&\leq \frac{3DG|\cC|}{2} \sqrt{ \sum_{a \in \cC} \frac{1}{|\cC|} |N_{ika}(n)|}
= \frac{3DG}{2} \sqrt{ |\cC| \sum_{a \in \cC}  |N_{ika}(n)|},
\end{align*}
where the first and second inequalities follow from \cref{eq:regret-guarantee} and Jensen's inequality respectively.
Noting that $\bigcup_{a \in \cC} N_{ika}(n) = \{1, \dots, n \}$ allows us to further simplify the bound to
\begin{align*}
R_{ik}(n) \leq \frac{3DG}{2} \sqrt{|\cC| n}\,.
\end{align*}
In the particular case where each input probability is bounded between $[\epsilon, 1-\epsilon]$ and the weights reside in the hypercube $[-b,b]^{K_{i-1}}$, the above bound becomes
\begin{align}
R_{ik}(n) \leq 3 b K_{i-1} \sqrt{|\cC| n} \, \log \left(\frac{1}{\epsilon} \right).
\label{eq:gated_bound}
\end{align}
The cumulative loss suffered by the omniscient algorithm is:
\begin{align*}
\sum_{a \in \cC} \min_{w \in \cW_{ik}} \sum_{t \in N_{ika}(n)} \ell^{ik}_t(w)\,.
\end{align*}
The regret is the difference between the losses suffered by the learning procedure and the above quantity. Since the latter usually grows like $\Omega(n)$, the
$O(\sqrt{n})$ additional penalty suffered by the algorithm is asymptotically negligible.

How should we interpret Equation \ref{eq:gated_bound}? 
First of all, note the linear dependence on the number of inputs; this is the price a single neuron pays in the worst case for increasing the width of the preceding layer.
There is also a linear dependence on $b$, which is expected as when the competitor set gets larger it should be more difficult to compete with the optimal solution in hindsight.
Clipping inputs is common implementation trick for many deep learning systems; here this has the effect of stopping the 2-norm of the gradient exploding, which would adversely affect the worst case regret.
Furthermore, by assuming the unit basis vector $e_j \in \cW_{ik}$ we guarantee that the cumulative loss of neuron $(i,k)$ is never much larger than the cumulative loss of neuron $(i-1,j)$; to see this, note that if the weights were set to $e_j$ then geometric mixing simply replicates the output of the $j$th neuron in the preceding layer. 
This is the bare minimum we should expect, but the guarantee actually provides much more than this by 
ensuring that the cumulative loss of neuron $(i,k)$ is never much worse than the best geometric mixture of its inputs. As we will observe in Section~\ref{sec:capacity}, 
if the context functions are sufficiently rich, then the layering of geometric mixtures combined with \cref{eq:regret-guarantee} leads to a network that 
learns to approximate arbitrary functions.

\subsection{Computational properties}

Some computational properties of Gated Linear Networks are now discussed.

\paragraph{Complexity of a single online learning step}

Generating a prediction requires computing the contexts from the given side information for each neuron, and then performing $L$ matrix-vector products.
Under the assumption that multiplying a $m \times n$ by $n \times 1$ pair of matrices takes $O(mn)$ work, the total time complexity to generate a single prediction is $O \left( \sum_{i=1}^L K_i K_{i-1} \right)$ for the matrix-vector products, which in typical cases will dominate the overall runtime.
Using online gradient descent just requires updating the rows of the weight matrices using Equation \ref{eq:gln_weight_update}; this again takes time $O \left( \sum_{i=1}^L K_i K_{i-1} \right)$.

\paragraph{Parallelism}

When generating a prediction, parallelism can occur within a layer, similar to an MLP.
The local training rule however enables all the neurons to be updated simultaneously, as they have no need to communicate information to each other.
This compares favorably to back-propagation and significantly simplifies any possible distributed implementation.
Furthermore, as the bulk of the computation is primarily matrix multiplication, large speedups can be obtained straightforwardly using GPUs.

\paragraph{Static prediction with pre-trained models}

In the case where no online updating is desired, prediction can be implemented potentially more efficiently depending on the exact shape of the network architecture.
Here one should implement Equation \ref{eq:linear} by first solving a Matrix Chain Ordering problem \citep{hu1982} to determine the optimal way to group the matrix multiplications.

\section{Effective Capacity of Gated Linear Networks}\label{sec:capacity}

Neural networks have long been known to be capable of approximating arbitrary continuous functions with almost any reasonable
activation function \cite[and others]{Hor91}. We will show that provided the contexts are chosen sufficiently richly, then GLNs also have the capacity to approximate large classes of functions. More than this, the capacity is \textit{effective} in the
sense that gradient descent will eventually find the approximation. In contrast, similar results for neural networks show the existence
of a choice of weights for which the neural network will approximate some function, but do not show that gradient descent (or any other single algorithm) will converge to these weights.
Of course we do not claim that gated linear networks are the only model with an effective capacity result. For example, $k$-nearest-neighbour with appropriately chosen $k$ is also universal
for large function classes. We view universality as a good first step towards a more practical theoretical understanding of a model, but the ultimate goal is to provide meaningful finite-time
theoretical guarantees for \textit{interesting} and \textit{useful} function classes. Gated linear networks have some advantages over other architectures in the sense that they are constructed
from small pieces that are well-understood in isolation and the nature of the training rule eases the analysis relative to neural networks.

Our main theorem is the following abstract result that connects the richness of the context functions used in each layer to the ability of a network with no base predictors
to make constant improvements. After the theorem statement we provide intuition on simple synthetic examples and analyse the richness of certain classes of context functions.

\begin{theorem}[Capacity]\label{thm:capacity}
Let $\mu$ be a measure on $(\R^d, \cF)$ with $\cF$ the Lebesgue $\sigma$-algebra and let $z_t \in \R^d$ be a sequence of independent random variables sampled from $\mu$.
Furthermore, let $x_1,x_2,\ldots$ be a sequence of independent Bernoulli random variables with $\Prob{x_t = 1|z_t} = f(z_t)$ for some $\cF$-measurable function $f:\R^d \to [0,1]$.
Consider a gated linear network and for each layer $i$ let $\cG_i = \{c_{ik} : 1 \leq k \leq K_i\}$ be the set of context functions in that layer.
Assume there exists a $\delta > 0$ such that for each non-bias neuron $(i,k)$ the weight-space $\cW_{ik}$ is compact and $(-\delta, \delta) \times \Delta^{K_{i-1}-2} \subseteq \cW_{ik}$.
Provided the weights are learned using a no-regret algorithm, then
the following hold with probability one:
\begin{enumerate}
\item For each non-bias neuron $(i,k)$ there exists a non-random function $p^*_{ik}: \R^d \to (0,1)$ such that
\begin{align*}
\lim_{n\to\infty} \frac{1}{n} \sum_{t=1}^n \sup_{z \in \supp(\mu)} \left|p_{ik}(z;w^{(t)}) - p^*_{ik}(z)\right| = 0\,.
\end{align*}
\item The average of $p^*_{ik}$ converges to the average of $f$ on the partitions of $\R^d$ induced by the contexts:  
\begin{align*}
\sum_{i=1}^\infty \max_k \max_{c \in \cG_{i+1}} \sum_{a \in \cC} \left(\int_{c^{-1}(a)} \left(f(z) - p^*_{ik}(z)\right) d\mu(z)\right)^2 \leq \max\left\{2, \frac{4}{\delta}\right\}\log\left(\frac{1}{\beta}\right)\,,
\end{align*}
where $\beta = e/(1+e)$ is the output of bias neurons $p_{i0}(z)$.
\item There exists a non-random $\cF$-measurable function $p_\infty^*: \R^d \to (0,1)$ such that 
\begin{align*}
\lim_{i\to\infty} \max_k \int_{\R^d} (p^*_{ik}(z) - p_\infty^*(z))^2 d\mu(z) = 0\,.
\end{align*}
\end{enumerate}
\end{theorem}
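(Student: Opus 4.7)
I will prove the three parts by layer-wise induction on $i$, combining convexity and $\alpha$-exp-concavity of the log-loss (Proposition~\ref{prop:loss-prop}), the no-regret guarantee on each neuron, and a per-layer Bregman/KL improvement inequality that converts the $\delta$-slack in the hypothesis into a quantitative loss drop. The base case $i=0$ of Part~1 is immediate since layer zero contains only the bias, so $p^*_{00}(z) \equiv \beta$. For the inductive step, fix a non-bias neuron $(i,k)$ and a context $a \in \cC$, and restrict to the subsequence $N_{ika}(n) = \{t \le n : c_{ik}(z_t) = a\}$; by i.i.d.\ sampling this subsequence is itself drawn i.i.d.\ from $\mu(\cdot \mid c_{ik}^{-1}(a))$, and by the inductive hypothesis the inputs $\logit(p_{i-1}(z_t))$ converge in Ces\`aro sup-sense to the deterministic vector $\logit(p^*_{i-1}(z_t))$. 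The time-averaged empirical loss on $N_{ika}$ thus converges uniformly in $w \in \cW_{ik}$ to the stationary population loss $\bar\ell_a(w) = \E_\mu[\ell(\geo_w(\cdot\,;\, p^*_{i-1}(z)); x) \mid c_{ik}(z)=a]$, which by Proposition~\ref{prop:loss-prop}(4) is strongly convex on $\cW_{ik}$ once inputs are clipped to $[\epsilon, 1-\epsilon]$, admitting a unique minimizer $w^*_{ika}$. No-regret combined with strong convexity yields $\tfrac{1}{n}\sum_{t=1}^n \|w^{(t)}_{ika} - w^*_{ika}\|_2^2 \to 0$, and uniform-in-$z$ Lipschitz continuity of $w \mapsto \sigma(w\cdot\logit(p^*_{i-1}(z)))$ upgrades this to the required sup-Ces\`aro convergence, with $p^*_{ik}(z) := \sigma(w^*_{ikc_{ik}(z)} \cdot \logit(p^*_{i-1}(z)))$.

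\textbf{Part 2.} Letting $L_{ik} := \E_\mu[\cD(f(z), p^*_{ik}(z))]$, the key quantitative ingredient is a per-layer improvement lemma: for any layer-$i$ index $k$ and any context $c \in \cG_{i+1}$ used by some layer-$(i{+}1)$ neuron $\tilde{k}$,
\begin{align*}
\sum_{a \in \cC} \Bigl(\int_{c^{-1}(a)}(f - p^*_{ik})\,d\mu\Bigr)^2 \le \max\{2, 4/\delta\} \cdot (L_{ik} - L_{i+1,\tilde{k}})\,.
\end{align*}
The competitor witnessing this takes weight $e_k$ in neuron $(i{+}1,\tilde{k})$, which copies $p^*_{ik}$ since $e_k \in \Delta^{K_{i-1}-2} \subseteq \cW_{i+1,\tilde{k}}$, and then shifts the bias coordinate through $(-\delta, \delta)$, allowed by the hypothesis together with $\logit(\beta)=1$. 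A second-order expansion of $\E_\mu[\cD(f, \cdot)]$ in the logit parametrisation around $e_k$, restricted to cell $c^{-1}(a)$, has first-order term $-\lambda \int_{c^{-1}(a)}(f - p^*_{ik})\,d\mu$ and Fisher-information curvature bounded by $\mu(c^{-1}(a))/4$; optimising the scalar $\lambda \in (-\delta,\delta)$ yields a per-cell drop of at least $(\int_{c^{-1}(a)}(f - p^*_{ik})\,d\mu)^2 / \max\{2, 4/\delta\}$, where the two regimes of the max correspond to the interior optimum and the boundary-clipped optimum. Summing over $a$, maxing over $k$ and $c$, and telescoping in $i$ against the base bound $L_{0,0} = \E_\mu[\cD(f, \beta)] \le \log(1/\beta)$ then produces the theorem's inequality; the telescoping is well-controlled because every layer-$(i{+}1)$ neuron can copy the best layer-$i$ neuron via a unit-vector weight, giving $\max_k L_{i+1,k} \le \min_k L_{ik}$ and in particular taming the drift of the tracked index across layers.

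\textbf{Part 3 and main obstacle.} Part 3 combines three facts: (a) the Bregman-Pythagoras \emph{equality} for KL, $L_{ik'} - L_{i+1,k} = \E_\mu[\cD(p^*_{i+1,k}, p^*_{ik'})]$, which holds because $p^*_{i+1,k}$ is the I-projection of $f$ onto the weight-logit convex family $\{\geo_w(\cdot; p^*_i) : w \in \cW_{i+1,k}\}$ and this family contains $p^*_{ik'}$ via $e_{k'}$; (b) the monotone uniform loss convergence $\max_k L_{ik} \to L_\infty$, obtained by chaining $\max_k L_{i+1,k} \le \min_k L_{ik}$ with the monotonicity of $\min_k L_{ik}$ that copying forces; (c) a weak-$L^2(\mu)$ compactness argument extracting a limit $p^*_\infty$, whose value is pinned down by weak lower semicontinuity of $\E[\cD(f,\cdot)]$ and then upgraded to strong $L^2$ convergence by applying Pinsker to the Pythagoras identity between $p^*_{ik}$ and $p^*_\infty$; values stay in $(0,1)$ by the uniform clipping. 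The principal obstacle is the per-layer improvement lemma of Part~2: converting the $\delta$-slack into a \emph{quantitative quadratic} drop (rather than a merely non-negative one) requires combining the bias coordinate being interior to the weight-space with explicit slack $\delta$, the non-degeneracy $\logit(\beta) \neq 0$, and careful handling of interior-versus-boundary optima in the one-dimensional minimisation, which together produce the $\max\{2, 4/\delta\}$ constant; without the slack the argument collapses to a non-negative-only drop and the telescoping sum fails to converge.
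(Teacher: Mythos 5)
Your plan follows the paper's own proof nearly step by step: for Part~1 a layer-wise induction converting the no-regret guarantee on the empirical subsequence $N_{ika}(n)$ into Ces\`aro convergence of the per-context outputs; for Part~2 a one-coordinate bias perturbation of a copying weight $e_j$, a second-order expansion in logit coordinates with clipping at scale $\delta$, and a telescoping of the resulting KL drops against the base bound $\int\cD(f,\beta)\,d\mu \le \log(1/\beta)$; for Part~3 a Bregman-Pythagoras identity plus Pinsker applied to the telescoped layer-to-layer KL differences. This is essentially the same argument as Appendix~B, including the odd/even parity bookkeeping you gesture at when you say the copying inequality ``tames the drift of the tracked index.''

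One step in Part~1, however, is genuinely wrong as written. Proposition~1(4) gives $\alpha$-exp-concavity, not strong convexity in $w$. The per-example Hessian $\nabla^2\ell^{\geo}_t(w)$ is a rank-one matrix proportional to $\logit(p_t)\logit(p_t)^\top$, and the population Hessian $\E_\mu[\alpha\,\logit(p^*_{i-1}(z))\logit(p^*_{i-1}(z))^\top]$ need not be full rank (for instance when some of the input coordinates of $\logit(p^*_{i-1}(z))$ are linearly dependent as functions of $z$, which can certainly happen in deeper layers). So the minimizer $w^*_{ika}$ need not be unique, and your claimed conclusion $\frac{1}{n}\sum_t\|w^{(t)}_{ika}-w^*_{ika}\|^2_2\to 0$ is false in general. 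The fix is exactly the paper's: define $p^*_{ik}(z)=\sigma(w^*_{ika}\cdot\logit(p^*_{i-1}(z)))$ and observe that while $w^*_{ika}$ may be non-unique, the \emph{output} $p^*_{ik}$ is unique by strict convexity of $y\mapsto\cD(f(z),\sigma(y))$ for each $z$; then the no-regret bound plus a martingale law of large numbers yields convergence of $p_{ik}(\cdot\,;w^{(t)})$ to $p^*_{ik}$ in the Ces\`aro-sup sense without ever asserting convergence of the weight iterates themselves. With that substitution in place, the rest of your argument (Parts~2 and~3) goes through as you describe.
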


The assumption that the sequence of side information is independent and identically distributed 
may be relaxed significantly. A sufficient criteria is that $z_1,z_2,\ldots$ follows a Markov processes that mixes to stationary measure $\mu$. 
Unfortunately this requires a benign assumption on the stability of the learning algorithm, which is satisfied by most online learning algorithms including
online gradient descent with appropriately decreasing learning rates.

\begin{definition}
A learning procedure for GLNs is stable if there exist constants $r < 0$ and $C > 0$ such that for all data sequences $(x_t)_t$ and $(z_t)_t$ it holds that: 
\begin{align*}
\norm{w^{(t)}_{ika} - w^{(t+1)}_{ika}}_\infty \leq \mathds{1}_{c^{-1}_{ik}(a)}(z_t) C t^r \qquad \text{ for all } t\,.
\end{align*}
\end{definition}

The proofs of Theorems~\ref{thm:capacity} and \ref{thm:capacity-markov} are given in Appendices~\ref{sec:thm:capacity} and \ref{sec:thm:capacity-markov} respectively.

\begin{theorem}\label{thm:capacity-markov}
Suppose the learning procedure is stable and that all conditions of Theorem~\ref{thm:capacity} hold except that $(z_t)_t$ is sampled from an aperiodic and $\phi$-irreducible Markov chain with stationary distribution $\mu$.
Then (1), (2) and (3) from Theorem~\ref{thm:capacity} hold.
\end{theorem}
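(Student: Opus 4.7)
The plan is to reduce Theorem~\ref{thm:capacity-markov} to Theorem~\ref{thm:capacity} by showing that, under stability of the weight updates, the Markov dynamics are indistinguishable from i.i.d.\ dynamics in the time-averaged limits that matter. Two ingredients power the reduction: first, stability forces the weights to change by at most $O(t^r)$ per step with $r<0$, so on any sub-polynomial time-scale the weight iterates are essentially constant; second, an aperiodic $\phi$-irreducible Markov chain with stationary distribution $\mu$ is positive Harris recurrent, so Birkhoff's ergodic theorem delivers $\frac{1}{n}\sum_{t=1}^n g(z_t) \to \int g\,d\mu$ almost surely for every $\mu$-integrable $g$, and jointly for any countable family of such $g$'s by a standard separability argument. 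The pathwise no-regret bound \cref{eq:regret-guarantee} and the conditional-Bernoulli structure of $x_t \mid z_t$ are both independence-free and transfer verbatim.

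The concrete strategy is a block decomposition. Partition $\{1,\ldots,n\}$ into consecutive blocks $B_1,\ldots,B_{M_n}$ of common length $L_n$, with $L_n\to\infty$ slowly enough that $L_n n^{r}\to 0$. Stability gives $\sup_{s,t\in B_k}\norm{w^{(s)}_{ika}-w^{(t)}_{ika}}_\infty = o(1)$ uniformly in $k$, so within each block the weights may be frozen at their block-initial value $\bar{w}^{(k)}$ at a cost of $o(1)$ per block in the time-averaged loss; here the uniform Lipschitz continuity of $\ell^{ik}_t$ and $\nabla\ell^{ik}_t$ in the weights is inherited from compactness of $\cW_{ik}$ together with the inductive argument that clips each layer's inputs into $[\epsilon,1-\epsilon]$, as already used in the i.i.d.\ proof. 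With frozen per-block weights, every empirical quantity entering the i.i.d.\ proof takes the form $\frac{1}{|B_k|}\sum_{t\in B_k}g(z_t,x_t;\bar w^{(k)})$ for a bounded, Lipschitz $g$ (losses, gradients, and the Bregman improvements appearing in conclusion~(2)). Birkhoff's theorem, together with $x_t\mid z_t\sim \operatorname{Bernoulli}(f(z_t))$, implies these averages converge almost surely to $\int g(z,x;\bar w^{(k)})\,d\nu(z,x)$, where $\nu$ is the joint stationary law of $(z_t,x_t)$. Substituting these stationary expectations block-by-block into the chain of inequalities underlying conclusions (1), (2), and (3) of Theorem~\ref{thm:capacity} recovers all three statements in the Markov setting.

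The principal obstacle is the tight coupling between the weight trajectory $w^{(t)}$ and the entire past $(z_1,\ldots,z_t)$: the gradient $\nabla\ell^{ik}_t(w^{(t)})$ is a nonlinear functional of this joint history, so one cannot invoke the ergodic theorem directly on terms involving $w^{(t)}$. Stability together with the block decomposition is exactly what linearises this nonlinearity locally in time: replacing $w^{(t)}$ inside a block by its block-initial value $\bar w^{(k)}$ makes the per-block residuals pure functions of the chain over that block, to which ergodicity applies. The remaining work is quantitative bookkeeping: (i) showing the accumulated block-freezing error over the $M_n = n/L_n$ blocks is $o(n)$, and (ii) upgrading the per-block ergodic convergence to almost-sure statements via a Borel-Cantelli argument, choosing $L_n$ so that both the freezing error and the ergodic deviations are summable along an appropriate subsequence. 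Once this coupling is controlled, the combinatorial structure of the capacity proof (per-context regret summation, telescoping improvements across layers, Cauchy property of the layer-wise limits) carries over without change.
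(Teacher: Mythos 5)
Your proposal is correct and follows essentially the same route as the paper's proof: only Part~(1) needs modification, and both you and the paper handle it by a block decomposition in which stability (weight drift $O(t^r)$, $r<0$) freezes the iterate within growing blocks, and the ergodic/law-of-large-numbers theorem for aperiodic $\phi$-irreducible Markov chains replaces the martingale law of large numbers used in the i.i.d. case. The only cosmetic difference is that the paper blocks the per-context subsequence $N(\infty)=\{t:c_{ik}(z_t)=a\}$ directly with blocks of increasing size, whereas you block all of $\{1,\ldots,n\}$ with common length $L_n\to\infty$; this changes the bookkeeping slightly but not the substance of the argument.
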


Theorem \ref{thm:capacity-markov} suggests that GLNs could be well suited reinforcement learning applications.
For example, one could combine a GLN-based density model with the policy evaluation approach of \cite{VenessBHCD15}.

\paragraph{Interpretation of Theorems~\ref{thm:capacity} and \ref{thm:capacity-markov}}
The theorem has three parts. The first says the output of each neuron converges almost surely in Cesaro average to some nonrandom function on the support of $\mu$. 
Outside the support of $\mu$ the algorithm will never observe data, so provided there is no covariate shift we should not be concerned about this.
The second part gives a partial characterisation of what this function is. Let
\begin{align*}
\epsilon_i = \max_k \max_{c \in \cG_{i+1}} \left(\int_{c^{-1}(a)} (f(z) - p^*_{ik}(z)) d\mu(z) \right)^2\,. 
\end{align*}
The theorem says that $\sum_{i=1}^\infty \epsilon_i = O(1)$, which since $\epsilon_i$ is clearly positive implies that $\lim_{i\to\infty} \epsilon_i \to 0$. The convergence is fast in the
sense that $\epsilon_i$ is approximately subharmonic. Suppose that $\cG_i = \cG$ is the same for all layers and $\epsilon_i$ is small, then
for all $c \in \cG$ and $a \in \cC$ we have
\begin{align*}
\int_{c^{-1}(a)} f(z) d\mu(z) \approx \int_{c^{-1}(a)} p^*_{ik}(z) d\mu(z)\,,
\end{align*}
which means that $p^*_{ik}$ is approximately equal to $f$ on \text{average} on all sets $c^{-1}(a) \subset \R^d$. We will shortly see that if  
 $\{c^{-1}(a) : c \in \cG, a \in \cC\}$ is sufficiently rich, then $f(z) \approx p^*_{ik}(z)$ also holds.
Finally, the last part of the theorem shows that as the number of layers tends to infinity, the output of neurons in later layers converges to some single function.
Note that all results are about what happens in the limit of infinite data. In principle one could apply the regret guarantees from gradient descent to calculate a rate
of convergence, a task which we leave for the future.

\paragraph{Intuition on synthetic data}
We now illustrate the behaviour of a small network with various contexts in the simple case of $d = 1$. 
Figure \ref{fig:network_at_convergence} shows the chosen architecture and the problem setup. Each box relates to a non-bias neuron of the network, comparing the 
ground truth, $f(z)$ (black line) to the output of each neuron, $p_{ik}(x|z)$ (coloured line). Axes are identical for all boxes, and labelled on bottom left: 
The single-dimensional side information $z \in [-3, 3]$ is used to derive the Bernoulli parameters according to a Gaussian transformation $f(z) = e^{ -z^2 / 2}$. 
Thus, the training data consist of samples $z_t$ drawn uniformly over the range $[-3, 3]$, and their labels drawn from Bernoulli distributions parametrised by $f(z_t)$.

\begin{figure}[b!]
\begin{center}
\includegraphics[width=4.5cm]{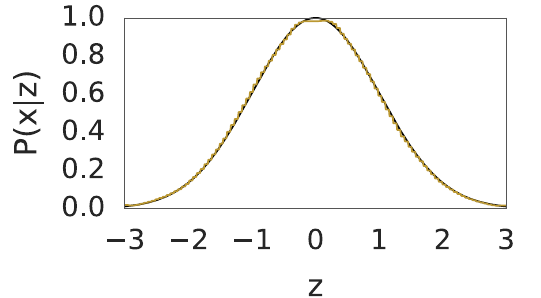}
\caption{Output of two-layer network with evenly spaced half-space contexts and $K_1 = 100$}
\label{fig:even_half_spaced_ctxs}
\end{center}
\end{figure}

The GLN has six layers, with 3/2/2/2/2/1 non-bias neurons in each layer reading from bottom to top (Fig. \ref{fig:network_at_convergence}), and a single base predictor $p_{00}(z) = \alpha$ 
for all $z$ (not shown). All bias neurons are also set to $p_{0k} = \beta$ (not shown). The half-space contexts of the non-bias neurons are parametrized by an offset, 
chosen randomly (coloured dashed lines in Fig. \ref{fig:network_at_convergence}).
The bottom row depicts the output of the network in the first layer after training. Each neuron learns to predict the average of $f$ on
the partitions of $[-3, 3]$ induced by $c_{1k}^{-1}(a)$ for each $a \in \{0, 1\}$.
Already in the second layer, the neurons' outputs gain in complexity, combining information not only from the neuron's own contexts, but also from the estimates of the neurons below. In effect, every output has 4 discontinuities (albeit two of them are very close, due to two contexts in the first layer having similar offsets).
The top panel shows the output of the non-bias neuron in the sixth layer, which even for this small network is predicting with reasonable accuracy.

\begin{figure}[t]
\begin{center}
\includegraphics{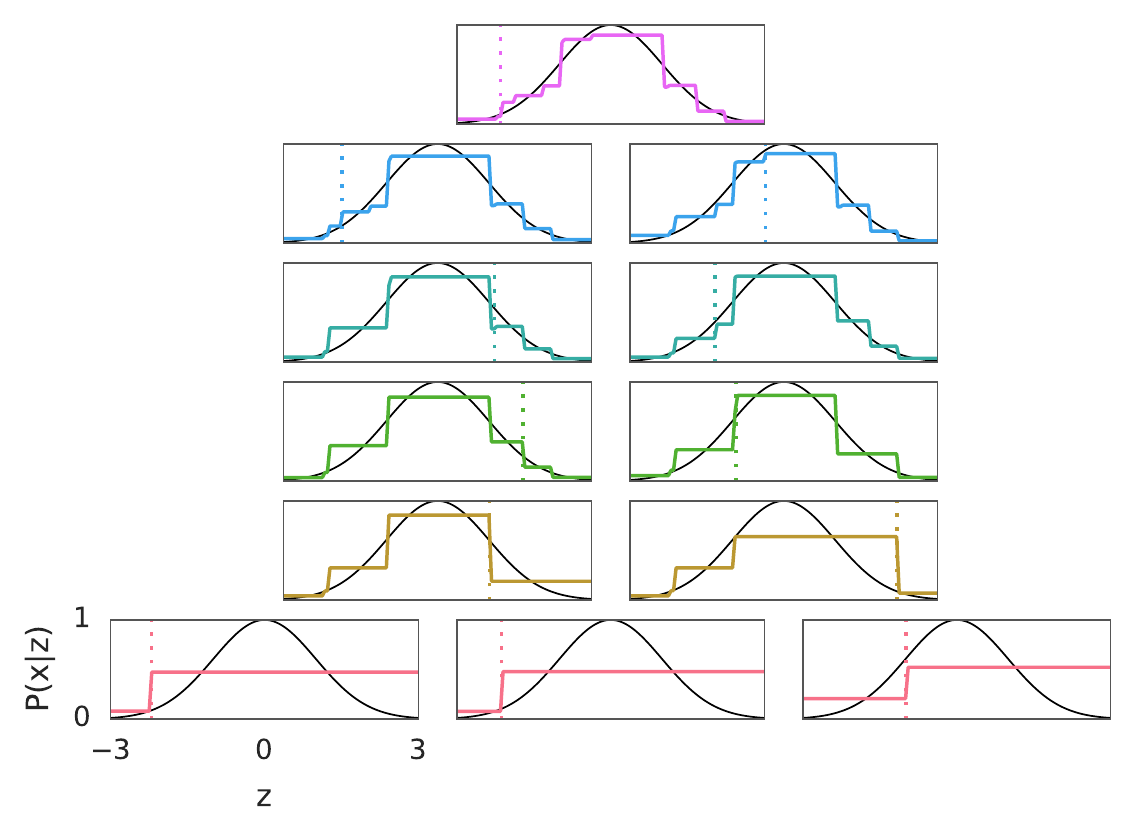}
\end{center}
\caption{Output of a six-layer network with half-space contexts. Each box represents a non-bias neuron in the network, the function to fit is shown in black, and the output distribution learnt by each neuron 
is shown in colour (for example, red for the first layer and purple for the top-most neuron). All axes are identical, as labeled on bottom left. The dashed coloured lines 
represent the choice of hyperplane for each neuron.}
\label{fig:network_at_convergence}
\end{figure}

The performance of the network is easily improved by increasing the number of neurons in the first layer, as shown in Figure \ref{fig:even_half_spaced_ctxs}.
If $K_1 = 100$, then the output of the non-bias neuron in the second layer is a near-perfect predictor.
As it happens, a two-layer network with half-space contexts and sufficiently many neurons in the first layer can
learn to approximate any continuous function, which explains the performance of such a shallow network on this problem.
Note that the base predictor is completely uninformative in this case and the half-space contexts are not tuned to the data.

\paragraph{Applications of Theorem~\ref{thm:capacity}}
We now see several applications of Theorem~\ref{thm:capacity} for different contexts.
If we assume that $\cG_i = \cG$ for all $i$, then the result shows that in the limit of infinite data a sufficiently deep network predicts like the average of $f$ on $c^{-1}(a)$ for all context functions $c \in \cG$ and
all $a \in \cC$. When the space of context functions is sufficiently rich this implies that the network indeed predicts like $f$.
Let $\cG$ be a (possibly infinite) set of context functions such that 
\begin{align*}
\norm{h}_{\cG} = \sup_{c \in \cG} \sum_{a \in \cC} \left|\int_{c^{-1}(a)} h(z) d\mu(z) \right|
\end{align*}
is a norm on the space of bounded measurable functions. Note that homogeneity, non-negativity and the triangle inequality are trivial, which leaves one only to check
that $\norm{h}_{\cG} = 0$ implies that $h = 0$ $\mu$-almost-everywhere.
Now suppose that $\cG_1 \subset \cG_2 \subset \cG_3 \subset \cdots$ is a sequence of finite subsets of $\cG$ such that
for any bounded measurable $h$
\begin{align}
\label{eq:approx-norm}
\lim_{i \to \infty} \norm{h}_{\cG_i} = \norm{h}_{\cG}\,,
\end{align}
which is true in the special case that $\cG$ is countable and $\cG_i \uparrow \cG$.
Then by Parts (2) and (3) of Theorem~\ref{thm:capacity} we have
\begin{align*}
0 = \lim_{i \to \infty} \max_{c \in \cG_i} \sum_{a \in \cC} \left|\int_{c^{-1}(a)} (f(z) - p_\infty^*) d\mu(z)\right|
= \lim_{i\to \infty} \norm{f - p_\infty^*}_{\cG_i}
= \norm{f - p_\infty^*}_{\cG}\,.
\end{align*}
Therefore $\mu(z : f(z) - p_\infty(z) = 0) = 1$, which means that almost surely the network asymptotically predicts like $f$ $\mu$-almost-everywhere.
We now give some example countable classes of contexts, which by \cref{eq:approx-norm} can be used to build a universal network. All the claims in the following enumeration
are proven in Appendix~\ref{app:topology}.

\begin{itemize}
\item (\textit{Covers})\,\, Let $B_r(x) = \{y : \norm{x - y}_2 < r\}$ be the open set of radius $r > 0$. 
If $\cG = \{ \mathds{1}_{B_r(x)} : x \in \Q^d, r \in \Q \cap (0, \infty) \}$, then $\norm{\cdot}_{\cG}$ is a norm on
the space of bounded measurable functions. 
\item (\textit{Covers cont.})\,\, The above result can easily be generalised to the situation where $\cG$ is the set of indicator functions 
on any countable base of $\R^d$.
\item (\textit{Hyperplanes})\,\, 
Now assume that $\mu$ is absolutely continuous with respect to the Lebesgue measure.
If $\cG$ is the space of context functions that are indicators on half-spaces $\cG = \{ \mathds{1}_{H_{v,c}} : v \in \Q^d, \norm{v} = 1, c \in \Q\}$, then $\norm{\cdot}_{\cG}$
is a norm on the spaces of bounded $\cF$-measurable functions.
\end{itemize}

We conclude this section with three remarks.

\begin{remark}[Asymptotic non-diversity]
Part (3) of Theorem~\ref{thm:capacity} shows that asymptotically in the depth of the network there is vanishingly little diversity in the predictors in the bottom layer. 
The intuition is quite simple. If the contexts are sufficient that the network learn the true function, then eventually all neurons will predict perfectly (and hence be the same).
On the other hand, if the class of contexts is not rich enough for perfect learning, then there comes a time when all neurons in a given layer cannot do better than copying the output of their
best parent. Once this happens, all further neurons make the same predictions.
\end{remark}

\begin{remark}[Universality of shallow half-spaces]
If $\mu$ is supported on a compact set and $f$ is continuous, then a two-layer network is sufficient to approximate $f$ to arbitrary precision with half-space contexts. 
A discussion of this curiosity is deferred to Appendix~\ref{app:2-layer-1-dim}. In practice we observe that except when $d = 1$ it is beneficial to use
more layers. And note that arbitrary precision is only possible by using arbitrarily large weights and wide networks.
\end{remark}

\begin{remark}[Effective capacity $\neq$ capacity]
Theorem~\ref{thm:capacity} shows that if the context functions are sufficiently rich and the network is 
deep, then the network can learn any well-behaved function using online gradient descent.
This is an effective capacity result because we have provided a model (the network) and a learning procedure (OGD) that work together
to learn any well-behaved function. 
For a fixed architecture, the capacity of a GLN can be much larger than the effective capacity. In particular, there exist functions that can be modelled 
by a particular choice of weights that online gradient descent will never find. (This does not contradict Theorem~\ref{thm:capacity}; one can always 
construct a larger GLN whose effective capacity is sufficiently rich to model such functions.)
\end{remark}

\vspace{-0.38cm} 

\begin{wrapfigure}[7]{r}{3cm}
\vspace{-0.4cm}
\begin{tikzpicture}[font=\tiny]
\draw[fill=blue,opacity=0.2] (0,0) rectangle (1,1);
\draw[fill=blue,opacity=0.2] (-1,-1) rectangle (0,0);
\draw[fill=red,opacity=0.2] (0,0) rectangle (-1,1);
\draw[fill=red,opacity=0.2] (0,0) rectangle (1,-1);
\draw[<->] (-1.5,0) -- (1.5,0);
\draw[<->] (0,-1.5) -- (0,1.5);
\node at (0.5,0.5) {$f(z) \!=\! 1$};
\node at (-0.5,-0.5) {$f(z)\! =\! 1$};
\node at (0.5,-0.5) {$f(z)\! =\! 0$};
\node at (-0.5,0.5) {$f(z) \!= \! 0$};
\end{tikzpicture}
\end{wrapfigure}
\itshape
To construct an example demonstrating this failure we exploit the fact that training each neuron against the target means that neurons do not coordinate to solve the following `exclusive-or' problem. 
Let $d = 2$ and $\mu$ be uniform on $[-1,1]^2$ and $f:\R^2 \to [0,1]$ be given by $f(z) = \ind{z_1z_2 \geq 0}$. Next suppose $\cG = \{c_1, c_2\}$ where $c_i(z) = \ind{z_i \geq 0}$.
Notice that $z_1$ or $z_2$ alone is insufficient to predict $f(z)$.
If a gated linear network with $K_i = 2$ for all $i \geq 2$ and contexts from $\cG$ is trained in the iid setting of Theorem~\ref{thm:capacity}, then
it is easy to check that $\lim_{i\to\infty} p_{ik}^*(z) = 1/2$ almost surely.
And yet there \textit{exists} a choice of weights such that $p_{ik}(z;w) = f(z)$ for all $z$. The problem is the optimal weights have neurons in the same layer coordinating so that a single neuron
in the next layer has access to $z_1$ and $z_2$ as inputs. But online gradient descent has neurons selfishly optimizing their own performance, which does not lead to coordination in this case.
\normalfont

\section{Adaptive Regularization via Sub-network Switching}

While GLNs can have almost arbitrary capacity in principle, large networks are susceptible to a form of the \emph{catch-up phenomenon} \citep{ErvenGR07}.
During the initial stages of learning neurons in the lower layers typically have better predictive performance than neurons in the higher layers. 
This section presents a solution to this problem based on \emph{switching} \citep{VSHB12}, a fixed share \citep{Herbster1998} variant tailored to the logarithmic loss.
The main idea is simple: as each neuron predicts the target, one can construct a switching ensemble across all neurons predictions.
This guarantees that the predictions made by the ensemble are not much worse than the predictions made by the best sparsely changing sequence of neurons.
We now describe this process in detail.

\subsection{Model}

\begin{figure}[t!]
\centering
\includegraphics[scale=.75]{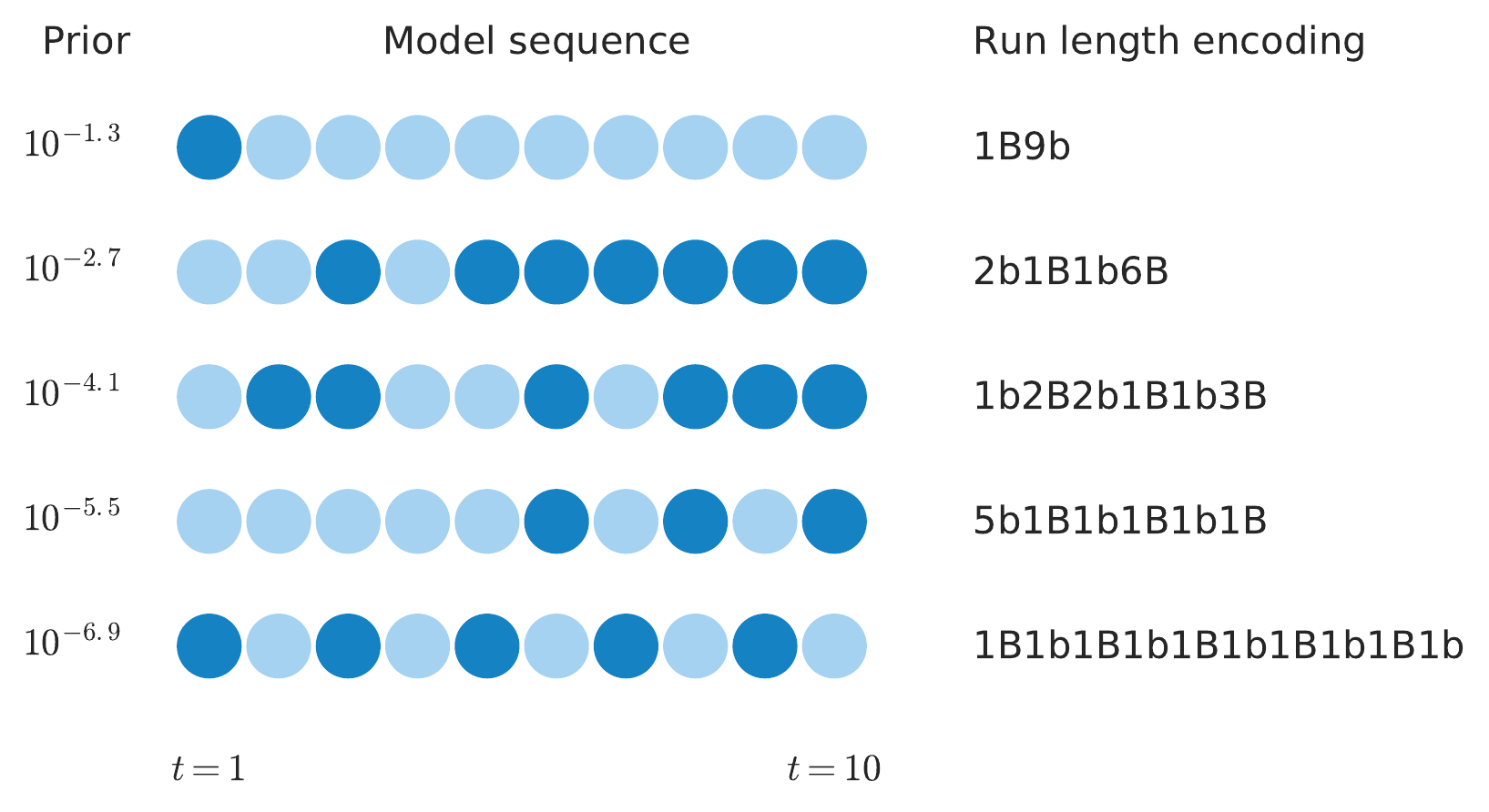}
\caption{A visual depiction of the run-length encoding prior.}
\label{pic:switch_prior}
\end{figure}

Let $\cM = \left \{\rho_{ij} : i \in [1,L], j \in [0, K_i-1] \right \}$ denote the model class consisting of all neurons that make up a particular GLN with $L$ layers and $K_i$ neurons in each layer.
Now for all $n \in \mathbb{N}$, for all $x_{1:n} \in \cX^n$, consider a Bayesian (non-parametric) mixture that puts a prior $w_\tau(\cdot)$ over all \emph{sequences} of neurons inside the index set $\cI_n(\cM) = \cM^n$, namely
\begin{equation}\label{eq:switch_distribution}
\tau(x_{1:n}) = \sum\limits_{\nu_{1:n} \in \cI_n(\cM)} w_\tau(\nu_{1:n}) \; \nu_{1:n}(x_{1:n})
\end{equation}
where $\nu_{1:n}(x_{1:n}) = \prod_{k=1}^n \nu_k(x_k \cdbar x_{<k})$.
As $w_\tau(\cdot)$ is a prior, it needs to be non-negative and satisfy $\sum_{\nu_{1:n}} w_\tau(\nu_{1:n}) = 1$ for all $n \in \mathbb{N}$. 
From the dominance property of Bayesian mixtures it immediately follows that for any $\nu^*_{1:n} \in \cI_n(\cM)$ we have
\begin{equation}\label{eq:switching_dominance}
-\log \tau(x_{1:n}) \leq -\log \left( w_\tau\left(\nu^*_{1:n}\right) \; \nu^*_{1:n}\left(x_{1:n}\right) \right) \leq -\log  w_\tau(\nu^*_{1:n})  -\log \nu^*_{1:n}(x_{1:n}). 
\end{equation}
Thus the regret $\cR_n = -\log \left( \frac{\tau(x_{1:n}) }{ \nu^*_{1:n}(x_{1:n})} \right)$ with respect to a sequence of models $\nu^*_{1:n}$ is upper bounded by $-\log  w_\tau(\nu^*_{1:n})$.
Putting a uniform prior over all neuron sequences would lead to a vacuous regret of $n \log(|\cM|)$, so we are forced to concentrate our prior mass on a 
smaller set of neuron sequences we think a-priori are likely to predict well.

Empirically we found that when the number of training examples is small neurons in the lower layers usually predict better than those in higher layers, but this reverses as more data becomes available.
Viewing the sequence of best-predicting neurons over time as a string, we see that
a run-length encoding gives rise to a highly compressed representation with a length linear in the number of times the best-predicting neuron changes.
Run-length encoding can be implemented probabilistically by using an arithmetic encoder with the following recursively defined prior:
\begin{equation}\label{eq:switch_prior_rec}
w_\tau(\nu_{1:n}) = \left\{
     \begin{array}{lr}
       1 \text{~~~~~~if~~~~} n = 0\\
       \tfrac{1}{|\cM|} \text{~~~if~~~~} n = 1 \\
       w_\tau(\nu_{<n}) \times  \left( \frac{n-1}{n} \mathds{1}[\nu_n = \nu_{n-1}] + \frac{1}{n (|\cM|-1)} \mathds{1}[\nu_n \neq \nu_{n-1}] \right) \text{ otherwise\,.}  
     \end{array}
   \right.
\end{equation}
A graphical depiction of this prior is shown in Figure \ref{pic:switch_prior}.
Multiple sequences of models are depicted by rows of light blue and dark blue circles, with each colour depicting a different choice of model.
A run length encoding of the sequence is given by a string made up of pairs of symbols, the first being an integer representing the length of the run and the second a character ($B$ for dark blue, $b$ for light blue) depicting the choice of model.
One can see from the figure that high prior weight is assigned to model sequences which have shorter run length encodings.
More formally, one can show (see Appendix \ref{app:switch_prior_proof}) that
\begin{equation}\label{eq:switching_prior_bound}
-\log w_{\tau}(\nu_{1:n}) \leq \left( s(\nu_{1:n})+1 \right) \left( \log |\cM| + \log n \right),
\end{equation}
for all $\nu_{1:n} \in \mathcal{I}_n(\mathcal{M})$, where $s(\nu_{1:n}) := \sum_{t=2}^n \mathbb{I}[\nu_t \neq \nu_{t-1}]$ denotes the number of switches from one neuron to another in $\nu_{1:n}$. 
Combining Equations \ref{eq:switching_dominance} and \ref{eq:switching_prior_bound} allows us to upper bound the regret $\cR_n$ with respect to an arbitrary $\nu^*_{1:n} \in \mathcal{I}_n(\mathcal{M})$ by $$\left( s(\nu^*_{1:n})+1 \right) \left( \log |\cM| + \log n \right).$$
Therefore, when there exists a sequence of neurons with a small number $s(\nu^*_{1:n}) \ll n$ of switches that performs well, only logarithmic regret is suffered, and one can expect the switching ensemble to predict almost as well as if we knew what the best performing sparsely changing sequence of neurons was in advance.
Note too that the bound holds in the case where we just predict using a single choice of neuron; here only $\log |\cM| + \log n$ is suffered, so there is essentially no downside to using this method if such architecture adaptivity is not needed.

\subsection{Algorithm}

A direct computation of Equation \ref{eq:switch_distribution} would require $|\cM|^n$ additions, which is clearly intractable. 
The switching algorithm of \cite{VSHB12} runs in time $O(n |\cM|)$, but was originally presented in terms of log-marginal probabilities, which requires a log-space implementation to avoid numerical difficulties.
An equivalent numerically robust formulation is described here, which incrementally maintains a weight vector that is used to compute a convex combination of model predictions at each time step.

Let $u^{(t)}_{ik} \in (0,1]$ denote the switching weight associated with the neuron $(i, k)$ at time $t$.
The weights will satisfy the invariant $\sum_{i=1}^{|L|} \sum_{k=0}^{K_i -1} u^{(t)}_{ik} = 1$ for all $t$.
At each time step $t$, the switching mixture outputs the conditional probability

\begin{equation*}
\tau(x_t  |  x_{<t}) = \sum_{i=1}^{|L|} \sum_{k=0}^{K_i -1} u^{(t)}_{ik}  \rho_{ik}(x_t  |  x_{<t}),
\end{equation*}
with the weights defined, for all $1 \leq i \leq L$ and $0 \leq k < K_i$, by $u^{(1)}_{ik} = 1 / |\cM|$ and
\begin{equation*}
u^{(t+1)}_{ik} = \frac{1}{(t+1)(|\cM|-1)} +
\left( \frac{t |\cM| - t -1}{(t+1)(|\cM|-1)} \right) \frac{u^{(t)}_{ik} \; \rho_{ik}(x_t | x_{<t})}{\tau(x_t  |  x_{<t})}.
\end{equation*}
This weight update can be straightforwardly implemented in $O(|\cM|)$ time per step.
To avoid numerical issues, we recommend enforcing the weights to sum to 1 by explicitly dividing by $\sum_{i=1}^{|L|} \sum_{k=0}^{K_i -1} u^{(t+1)}_{ik}$ after each weight update.

\subsection{Illustration}

\begin{figure}[t!]
\includegraphics{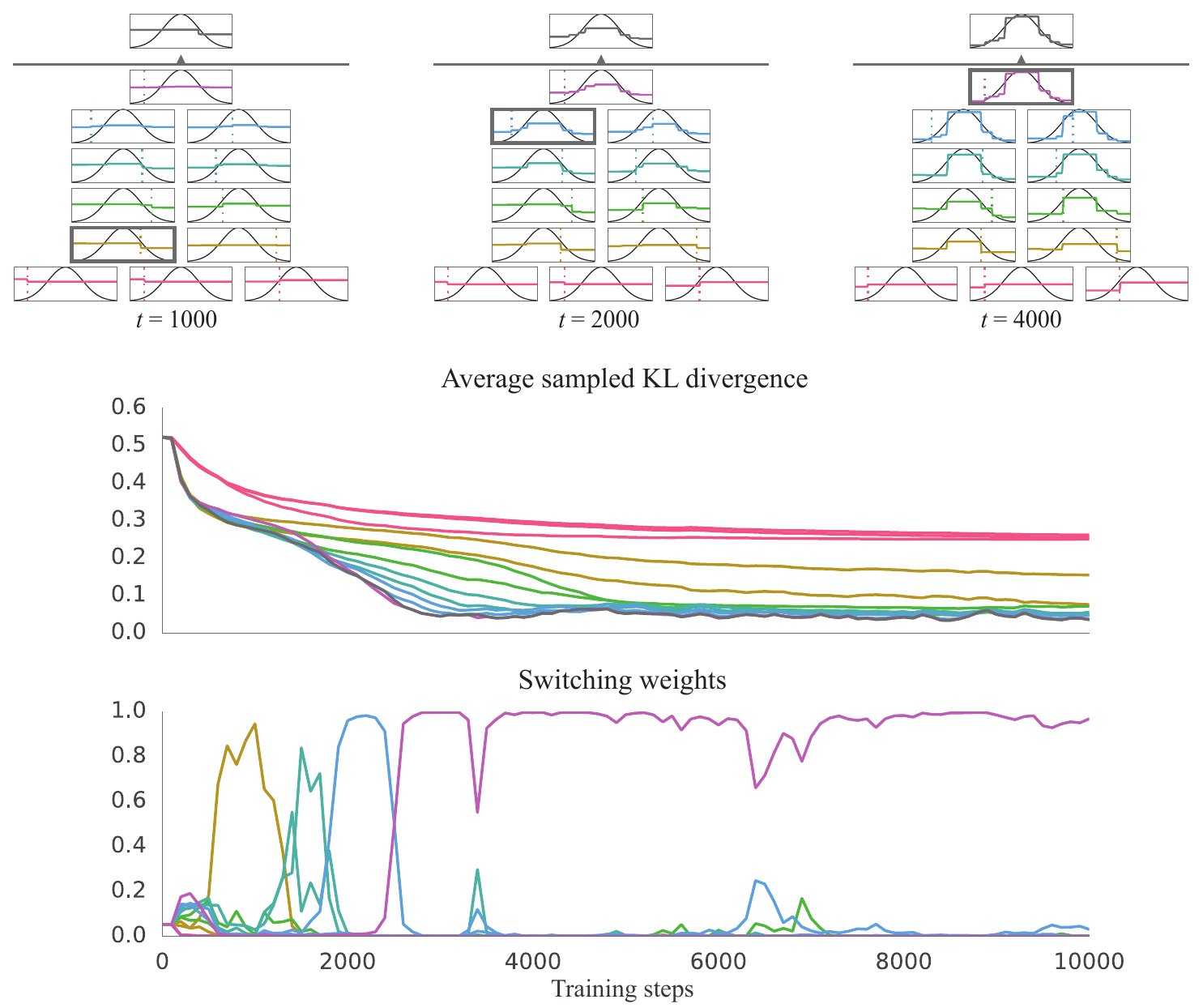}
\caption{Evolution of model fit over time}
\label{pic:switch_fitting}
\end{figure}

Figure \ref{pic:switch_fitting} shows an example of a small 6 layer GLN fitting a family of Bernoulli distributions parametrized 
by a Gaussian transformation $g(z) = e^{ -z^2 / 2}$ of a single dimensional side information value $z \in [-3,3]$,
which is the same setup depicted in \cref{fig:network_at_convergence} without switching.
The GLN has 6 layers, with 3/2/2/2/2/1 neurons on each layer reading from bottom to top.
The output distribution learnt by each neuron is shown in colour (for example, red for the bottom layer and purple for the top neuron).
The dashed vertical line in each distribution denotes the choice of half-space which controls the gating of input examples. 
Each example was generated by first sampling a $z$ uniformly distribution between $[-3,3]$ and then sampling a label from a Bernoulli distribution parameterized by $g(z)$.
The topmost box in each of the four subfigures shows the output distribution (in black) obtained using subnetwork switching.
The top-left subfigure shows the model fit at $t=1000$; the top-middle at $t=2000$; and the top-right the fit at $t=4000$.
One can clearly see the model fit improving over time, and that with sufficient training the higher level neurons better model the target family of distributions.
The bottom graph in the figure shows the evolution of the switching weights over time; here we see that initially a neuron on the 3rd layer dominates the predictions, then one in the 4th layer, 
then the 5th and subsequently settling down on the predictions made by the top-most neuron.

\section{Empirical Results}

We now present a short series of case studies to shed some insight into the capabilities of GLNs.
When describing our particular choice of architecture, we adopt the convention of describing the number of neurons on each layer by a dashed list of natural numbers going from the most shallow layer to the deepest layer.

\subsection{Non-linear decision boundaries with half-spaces}

\begin{figure}[t!]
\centering
\includegraphics[scale=0.16]{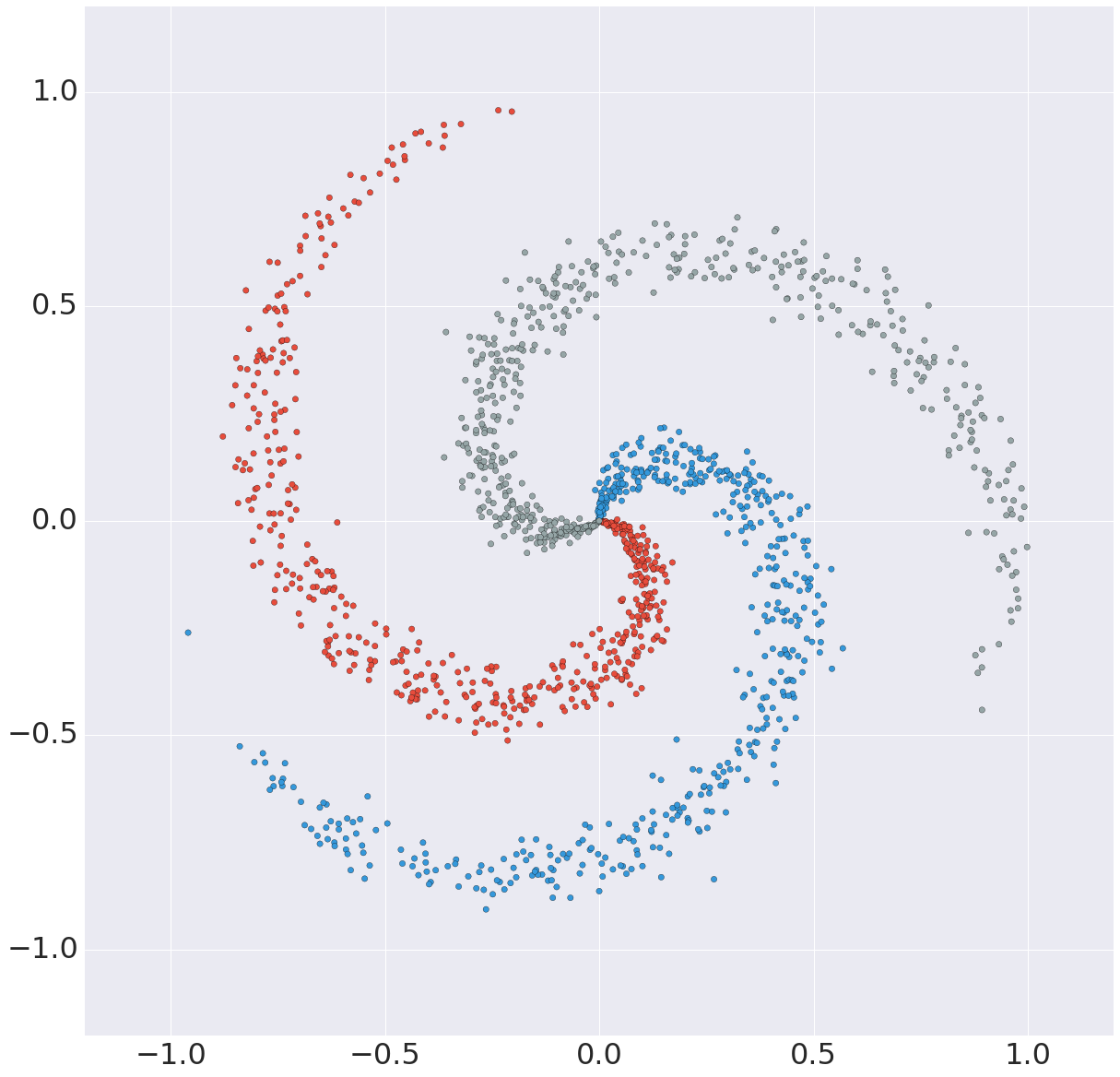}
\caption{The spiral dataset. The horizontal and vertical axes describe the values of the two dimensional side information, with the class label being indicated by either a red, grey or blue point.}
\label{pic:spiral_dataset}
\end{figure}

Our first experiment is on a synthetic `spiral' ternary classification task, which we use to demonstrate the ability of GLNs to model non-linear decision boundaries. 
A visualization of the dataset is shown in Figure \ref{pic:spiral_dataset}.
The horizontal and vertical axes describe the values of the two dimensional side information, with the class label being indicated by either a red, grey or blue point.
To address this task with a GLN, we constructed an ensemble of 3 GLNs to form a standard one-vs-all classifier. 
Each member of the ensemble was a 3 layer network consisting of 50-25-1 neurons which each used a half-space context for gating.
The half-space contexts for each neuron were determined by sampling a 2 dimensional normal vector whose components were distributed according to $\cN(0,36)$, and a bias weight distributed according to $\cN(0,9)$.
Each component of all weight vectors were constrained to lie within $[-200, 200]$.
The side information was the 2-dimensional x/y values, and the input to the network was the component-wise sigmoid of these x/y values (as GLNs require the input to be within [0,1]). 
The learning rate was set to $0.01$.
Figure \ref{pic:decision_boundaries} shows a plot of the resultant decision boundaries for neurons of differing depths after training on a single pass through all the examples.

\begin{figure}[t!]
\centering
\includegraphics[scale=0.08]{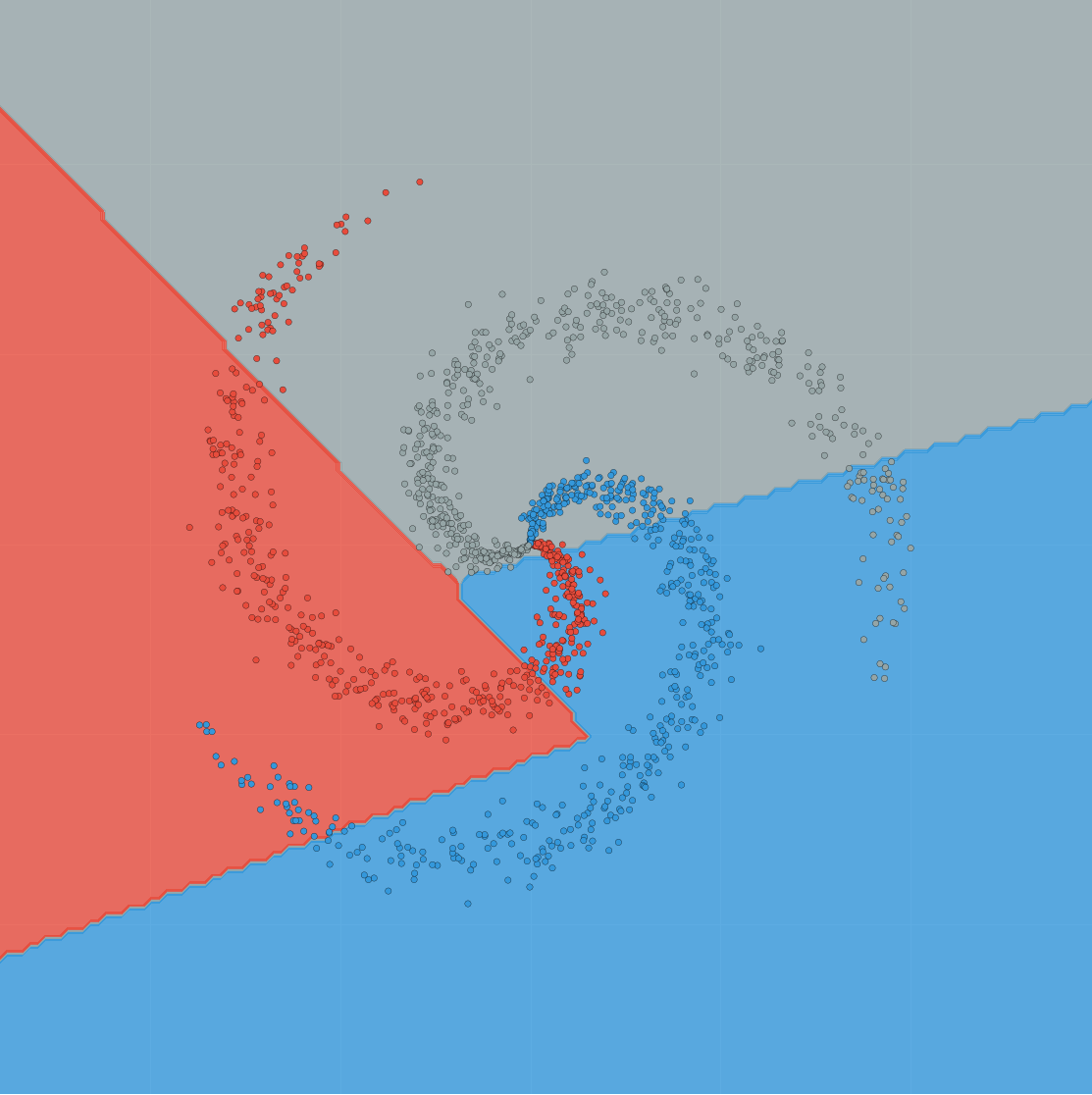}
~
\includegraphics[scale=0.08]{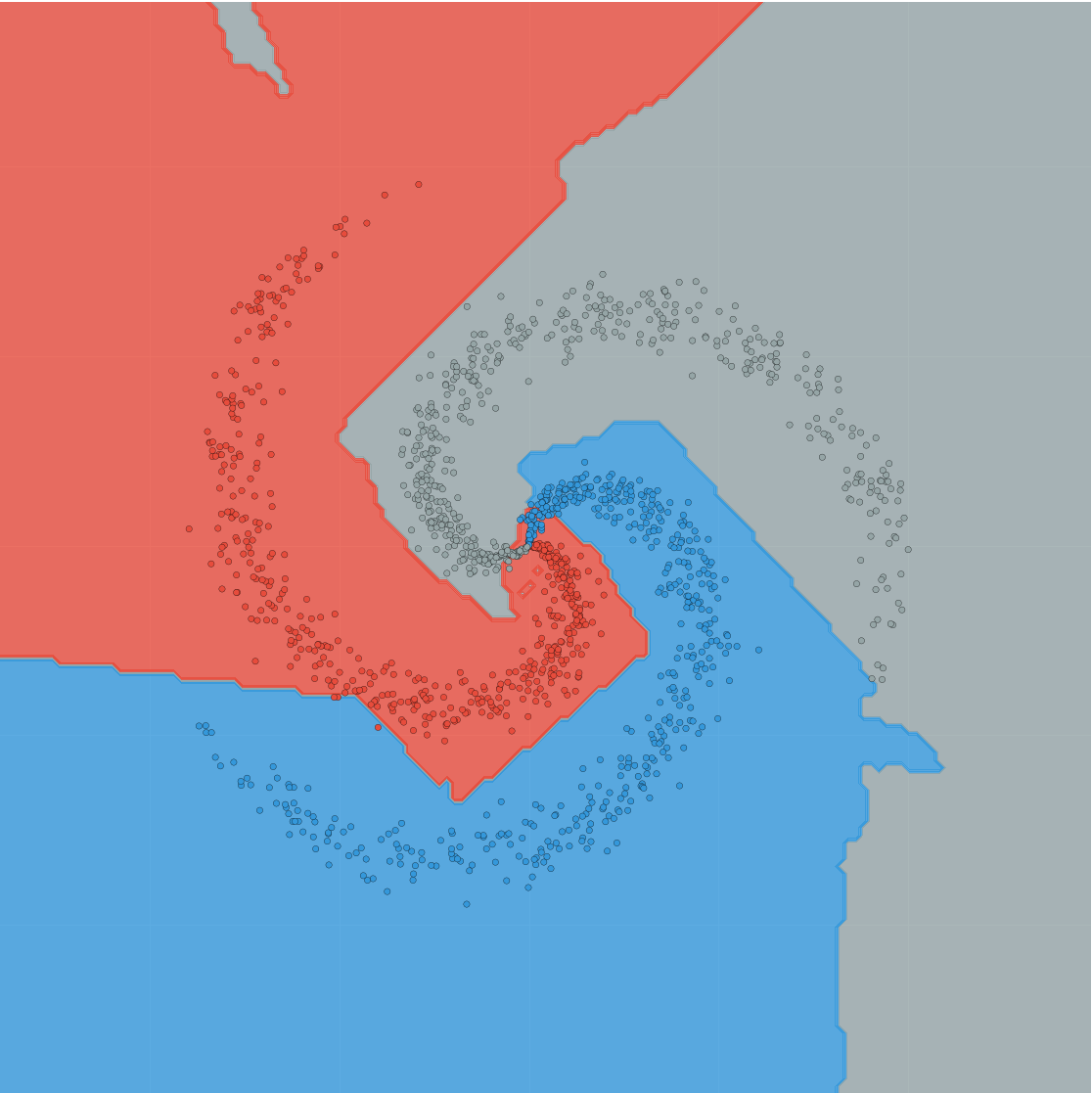}
~
\includegraphics[scale=0.08]{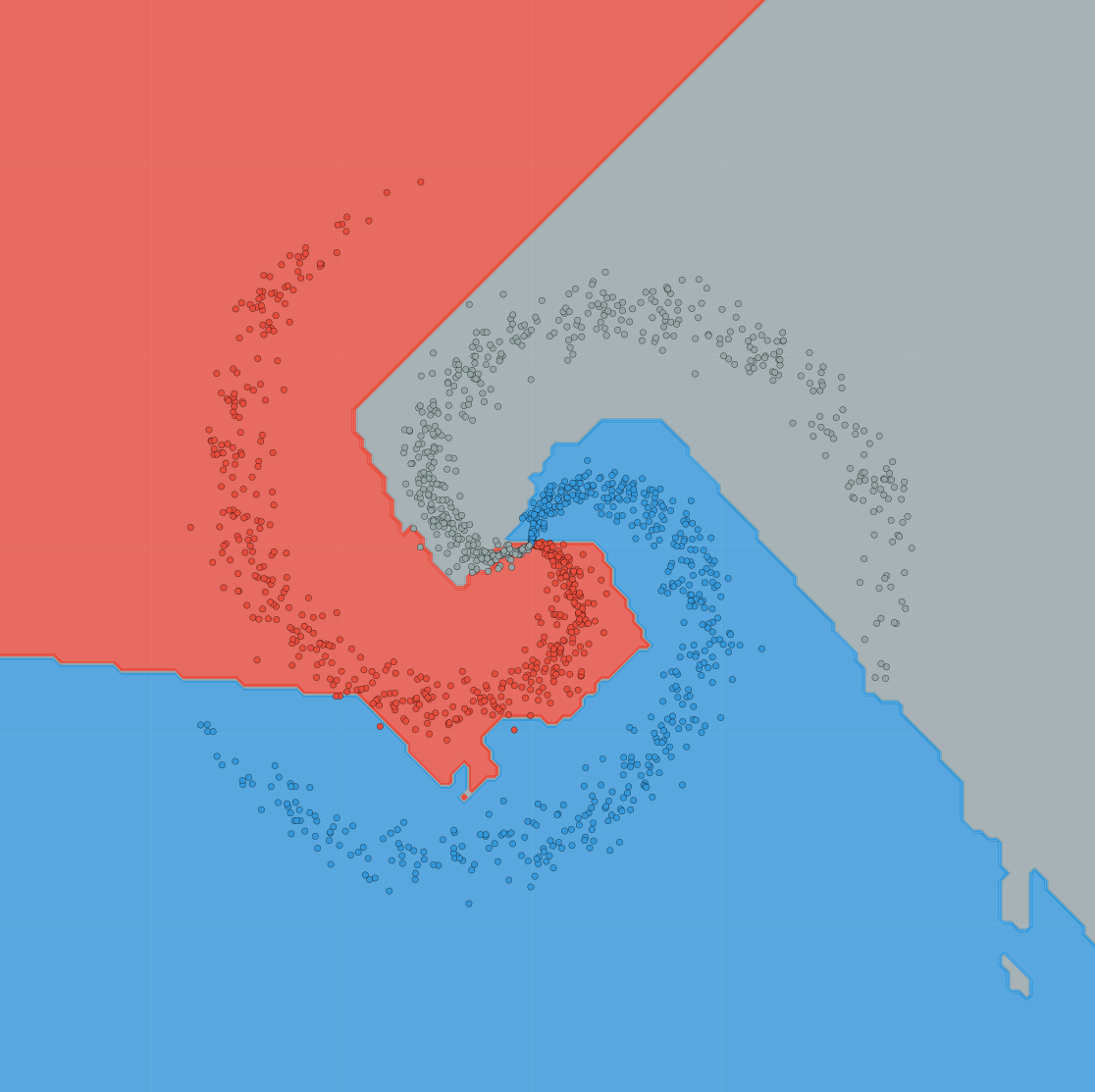}
~
\includegraphics[scale=0.08]{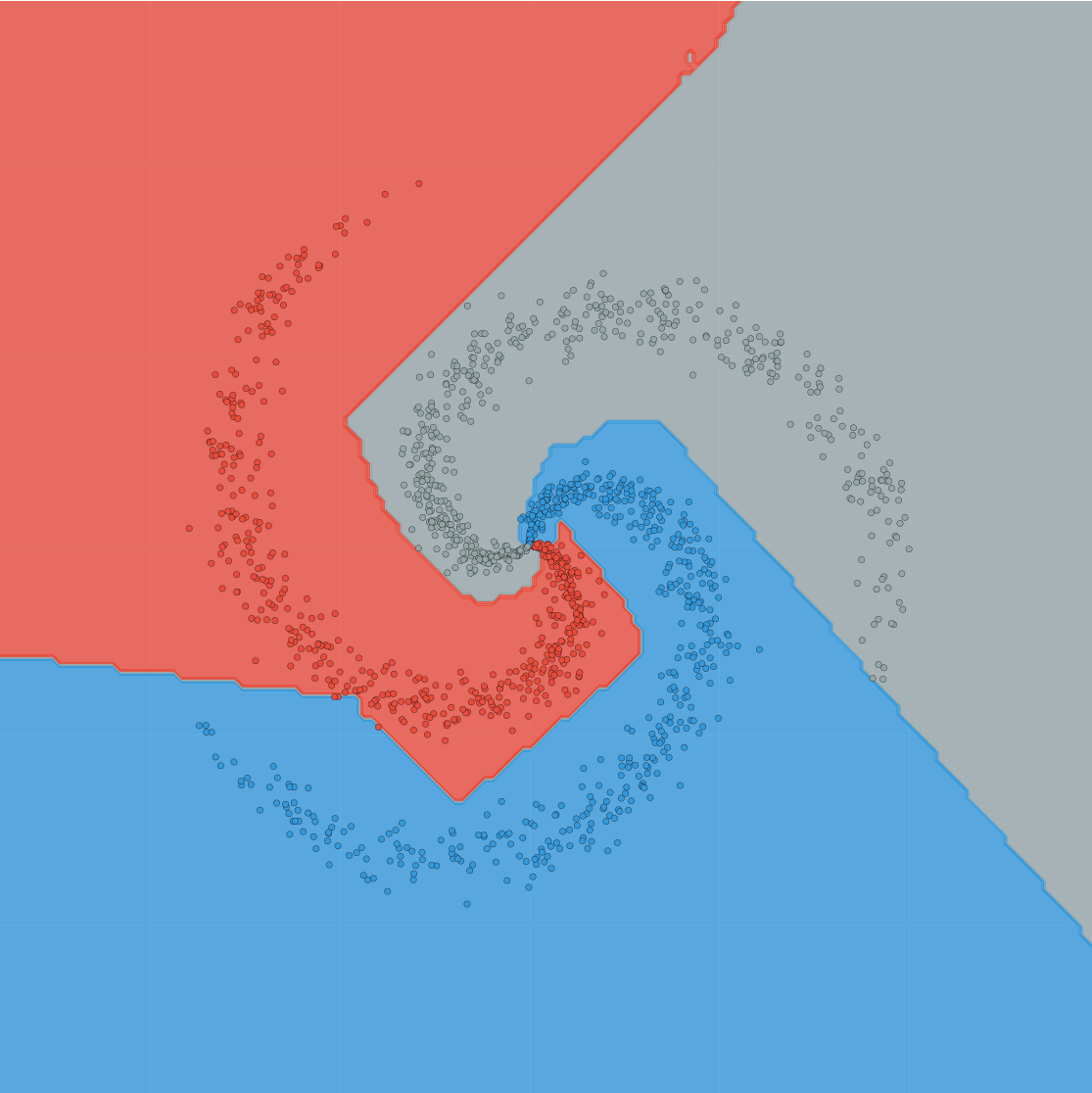}
\caption{From left to right, the decision boundary determined by a neuron on the bottom layer, the middle layer, and the final layer, and the top-level switching mixture. The fit of the bottommost neuron is quite poor. The middle neuron improves significantly, making mistakes only at the center of the spiral. The topmost neuron has learnt to fit the data almost perfectly. The final decision boundary of the switching ensemble resembles a smoothed version of the deepest neuron.}
\label{pic:decision_boundaries}
\end{figure}

\subsection{Online Half-space Classification: MNIST}

Next we explore the use of GLNs for classification on the well known MNIST dataset \citep{Lecun98}. 
Once again we used an ensemble of 10 GLNs to construct a one-vs-all classifier.
Each member of the ensemble was a 3 layer network consisting of 1500-1500-1 neurons, each of which used 6 half-space context functions (using the context composition technnique of Section \ref{sec:context_composition}) as the gating procedure, meaning that each neuron contained 64 different possible weight vectors. 
The half-space contexts for each neuron were determined by sampling a $784$ dimensional normal vector whose components were distributed according to $\cN(0,0.01)$, and a bias weight of $0$.
The learning rate for an example at time $t$ was set to $\min\{8000/t, 0.3\}$.
These parameters were determined by a grid search across the training data.
Each component of all weight vectors were constrained to lie within $[-200, 200]$.
The input features were pre-processed by first applying mean-subtraction and a de-skewing operation \citep{deskew}.

Running the method purely online across a single pass of the data gives an accuracy on the test set of 98.3\%.
If weight updating during the testing set is disabled, the test set accuracy drops slightly to 98.1\%.
Without the de-skewing operation, the accuracy drops to 96.9\%.
It is worth nothing that our classifier contains no image specific domain knowledge; in fact, one could even apply a (fixed across all images) permutation to each image input and get exactly the same result.

\subsection{Online Density Modeling: MNIST}

Our final result is to use GLNs and image specific gating to construct an online image density model for the binarized MNIST dataset \citep{larochelle11a}, a standard benchmark for image density modeling.
By exploiting the chain rule $$\mathbb{P}(X_1, X_2, \dots, X_d)=\prod_{i=1}^{d} \mathbb{P}(X_i \, | \, X_{<i})$$ of probability, we constructed a so-called autoregressive density model over the $28\times28$ dimensional binary space by using 784 GLNs to model the conditional distribution for each pixel; a row-major ordering was used to linearize the two dimensional pixel locations.

\paragraph{Base Layer}

The base layer for each GLN was determined by taking the outputs of a number of skip-gram models \citep{Guthrie06}.
The context functions used by these skip-gram models included a number of specific geometric patterns that are known to be helpful for lossless image compression \citep{Witten1999}, plus a large number of randomly sampled pixel locations.
\begin{figure}[h!]
\centering
\includegraphics[scale=0.08]{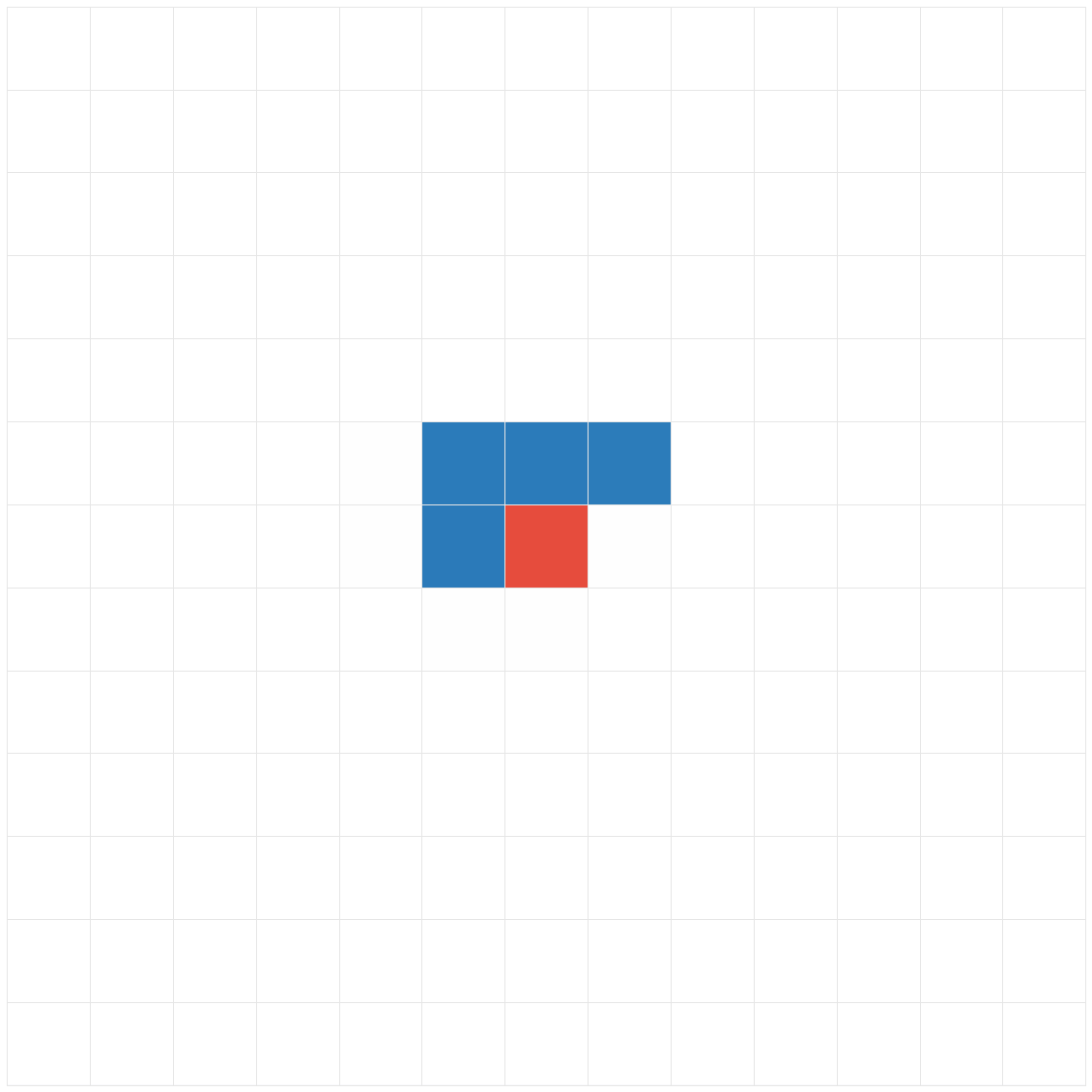}
~
\includegraphics[scale=0.08]{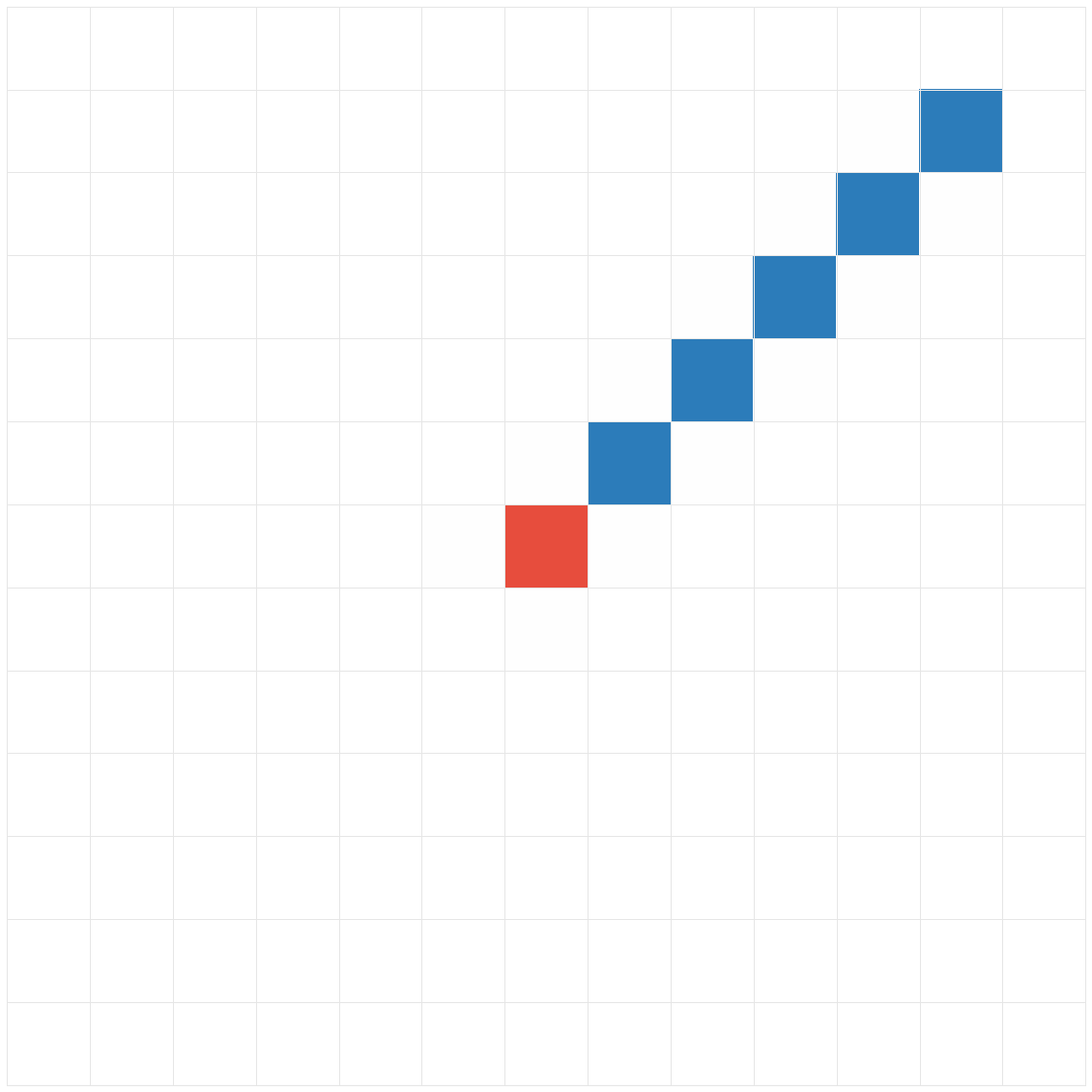}
~
\includegraphics[scale=0.08]{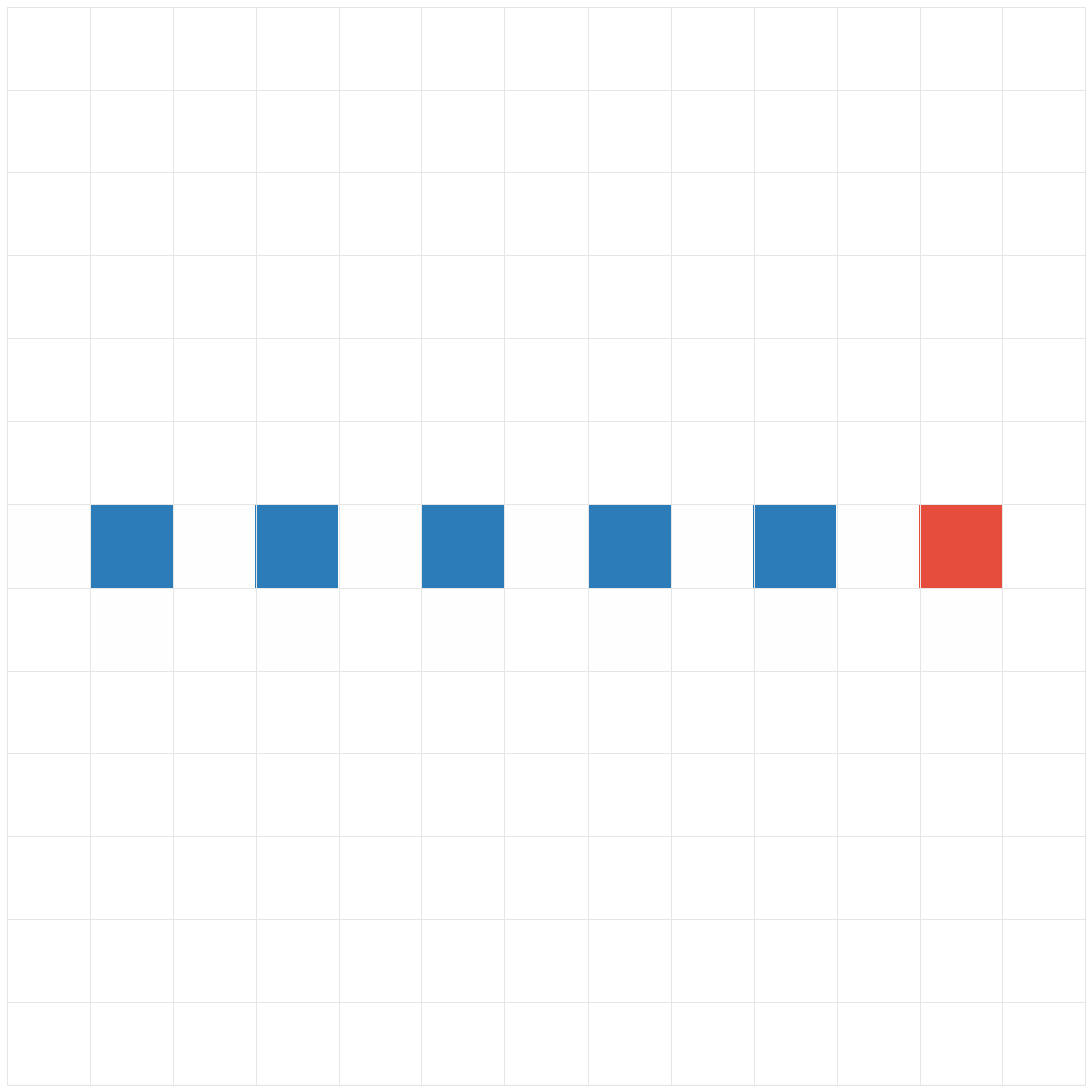}
~
\includegraphics[scale=0.08]{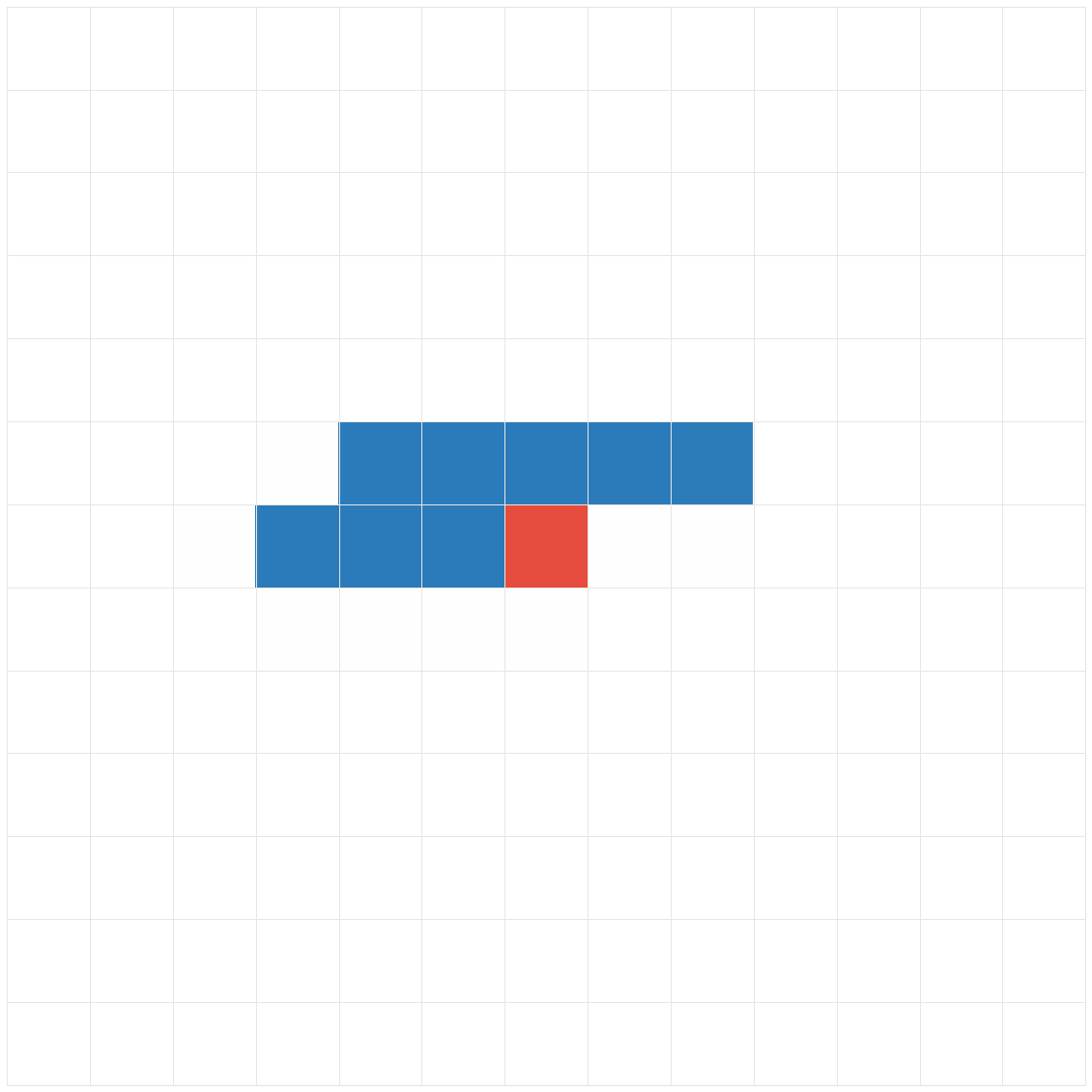}
\caption{Example skip-gram context functions. }
\label{pic:skip_gram_ctx}
\end{figure}
Some example skip-grams are shown in Figure \ref{pic:skip_gram_ctx}; the red box indicates the location of the pixel in the image, and the blue boxes indicate the pixels which are used to determine the skip-gram context.
For each pixel model, the base layer would consist of up to 600 skip-gram predictions; the exact number of predictions depends on the location of the pixel in the linear ordering; the set of permitted skip-gram models for pixel $i$ are those whose context functions do not depend on pixels greater than or equal $i$ within the linear ordering.
The Zero-Redundancy estimator \citep{Willems97} was used to estimate the per-context skip-gram probability estimates online. 

\paragraph{Image specific context functions}

Empirically we found that we could significantly boost the performance of our method by introducing two simple kinds of image specific context functions.

The first were \emph{max-pool} context functions, some examples of which are depicted in Figure \ref{pic:max_pool_ctx}.
Once again the red square denotes the location of the current pixel.
For each shaded region of a particular colour, the maximum of all binary pixel values was computed.
The max-pool context function returns the number represented by the binary representation obtained by concatenating these max-pooled values together (in some fixed order).

\begin{figure}[h!]
\centering
\includegraphics[scale=0.08]{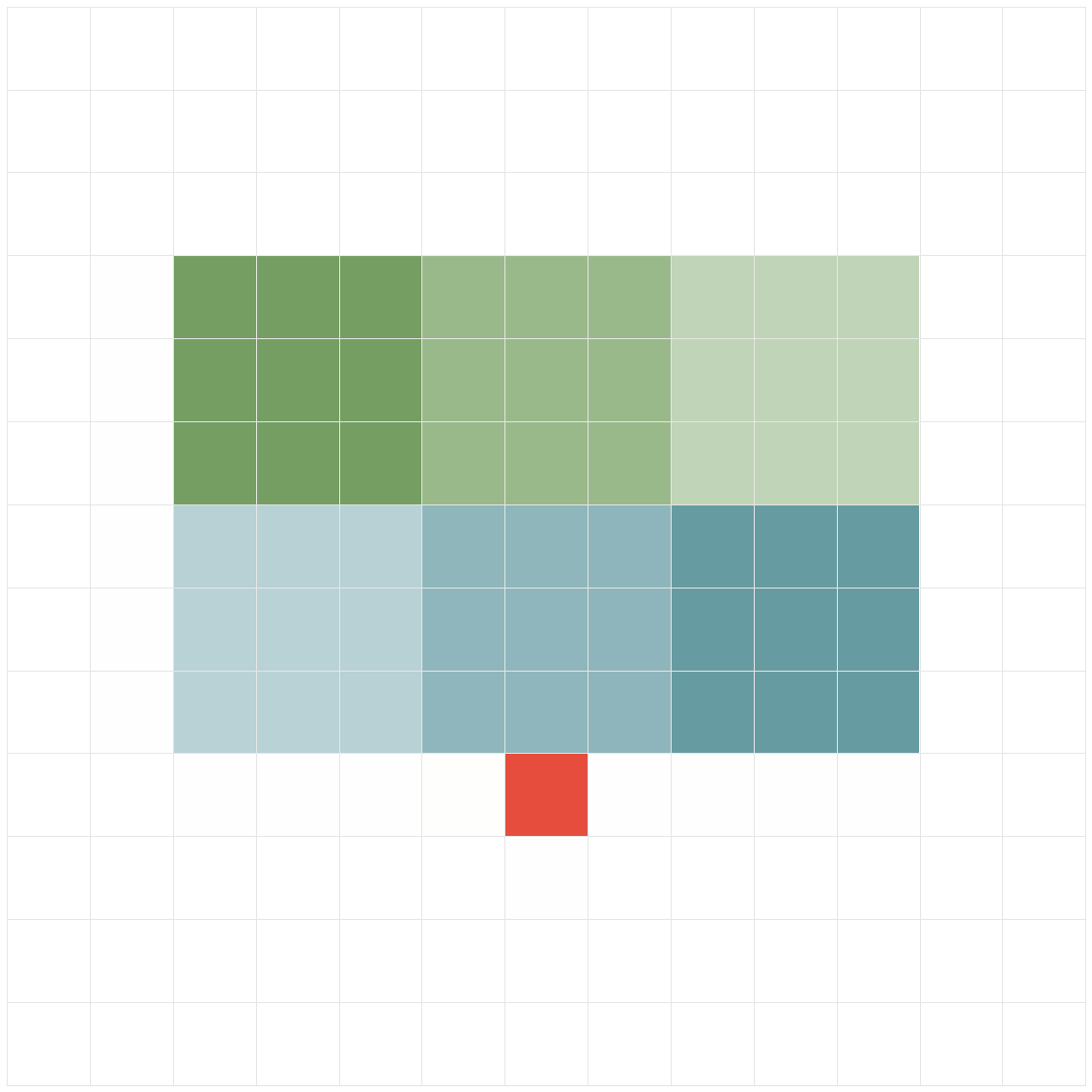}
~
\includegraphics[scale=0.08]{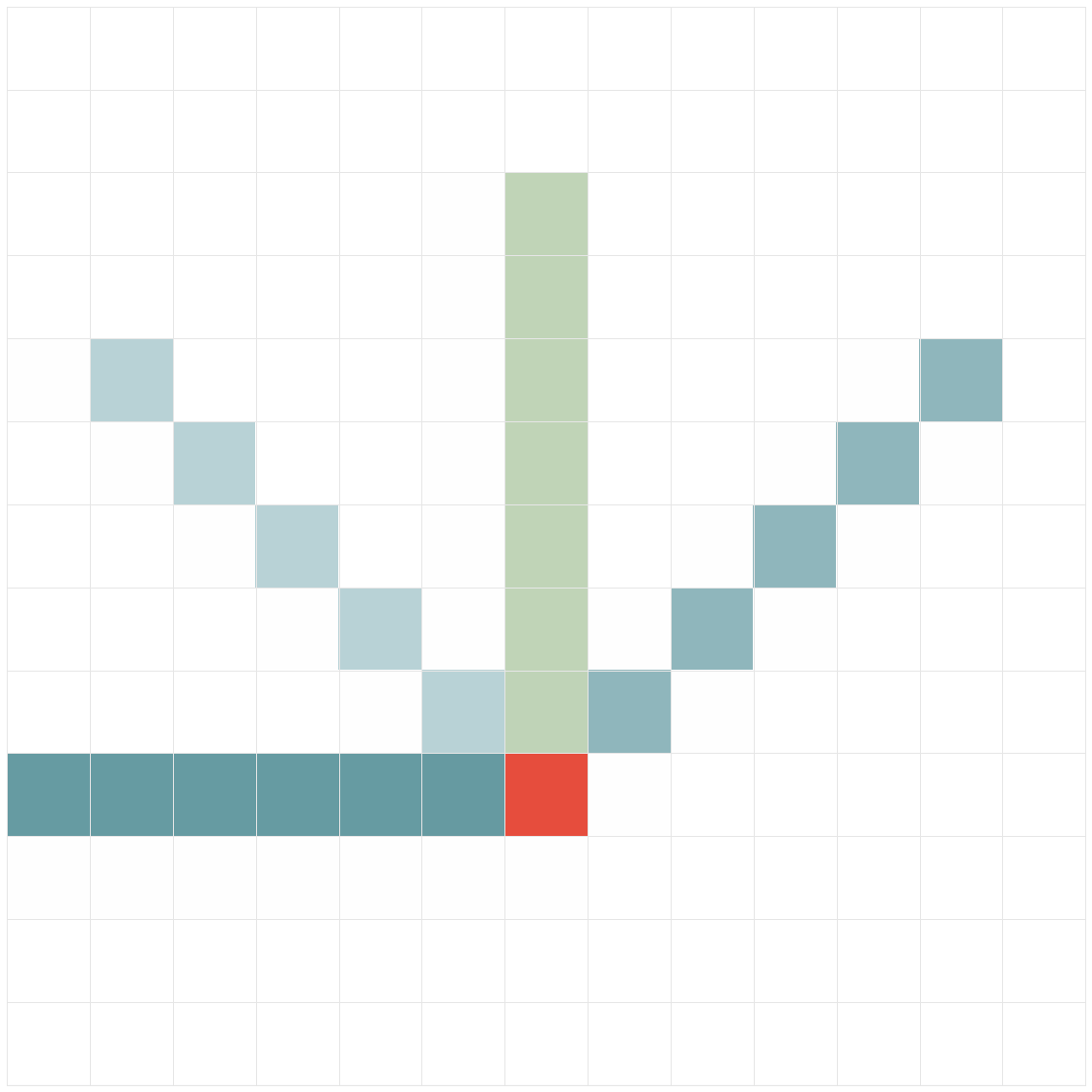}
~
\includegraphics[scale=0.08]{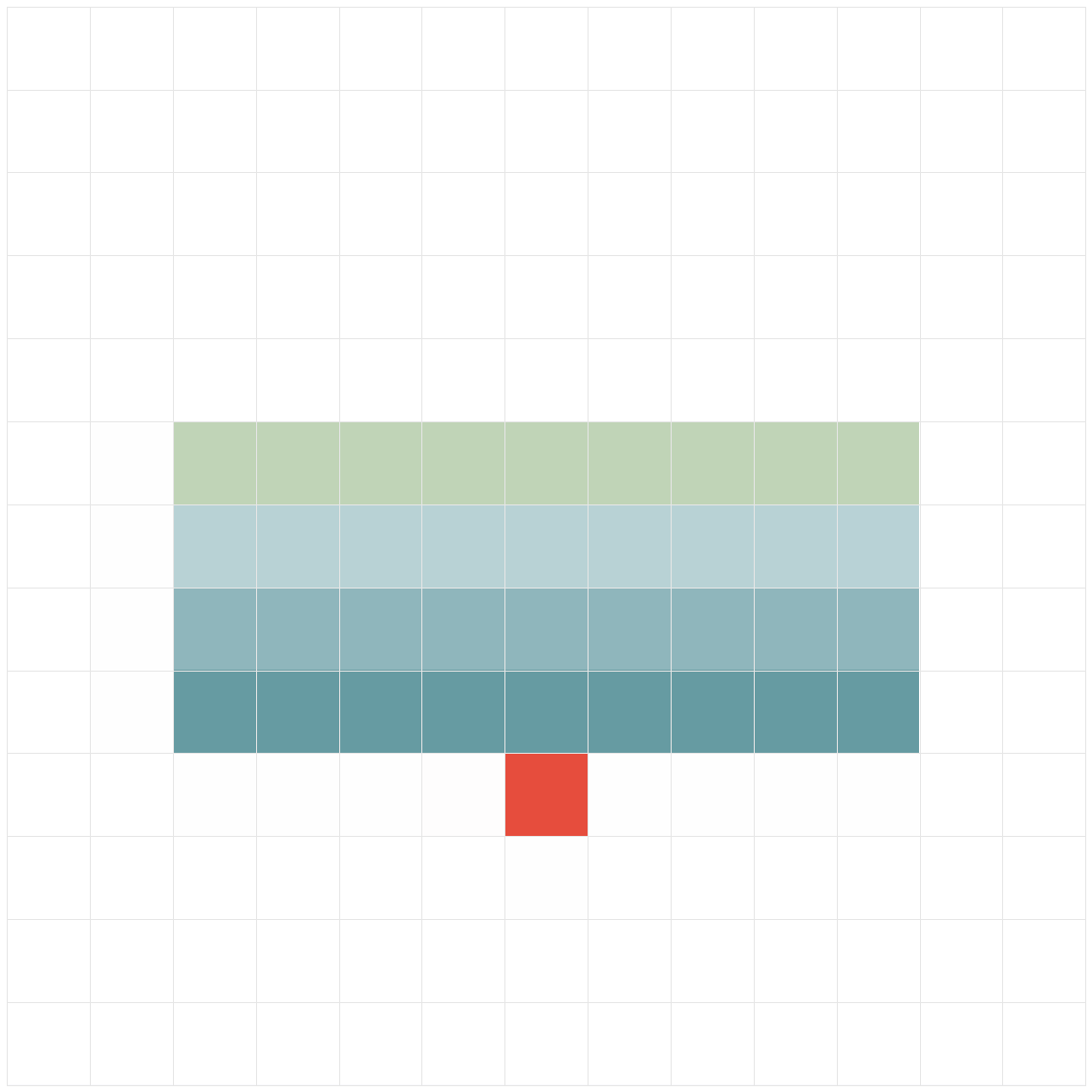}
~
\includegraphics[scale=0.08]{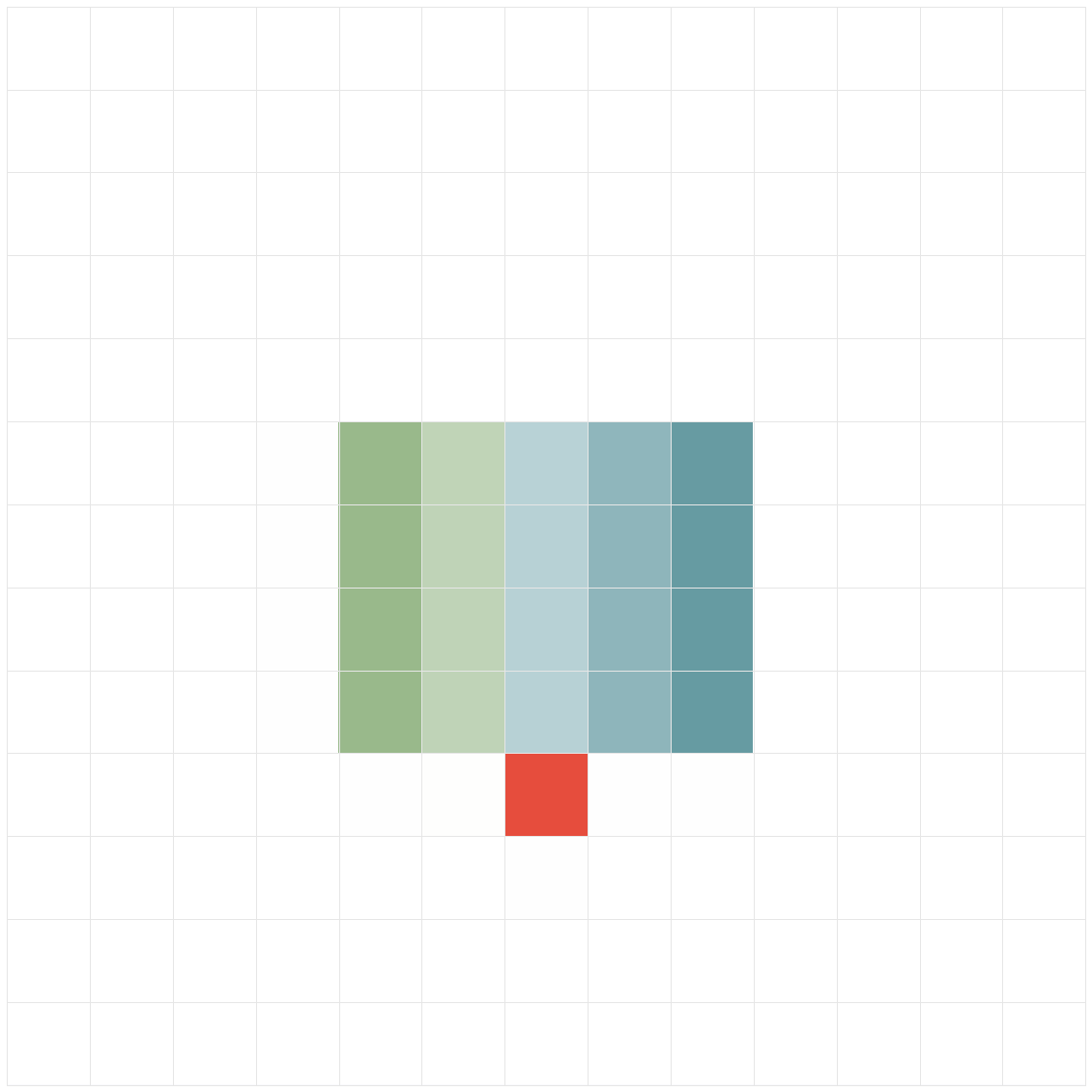}
\caption{Example max-pool context functions. }
\label{pic:max_pool_ctx}
\end{figure}

The next type of context we introduced were \emph{distance} context functions, some examples of which are depicted in Figure \ref{pic:distance_ctx}.
Once again the red square denotes the location of the current pixel.
Each non-white/non-red location is labeled with an index between 1 and the number of non-white/non-red locations; darker colours indicate smaller index values.
The distance context function returns the smallest index value whose pixel value is equal to 1.
This class of context allows one to measure distance to the nearest active pixel under various orderings of local pixel locations.

\begin{figure}[h!]
\centering
\includegraphics[scale=0.08]{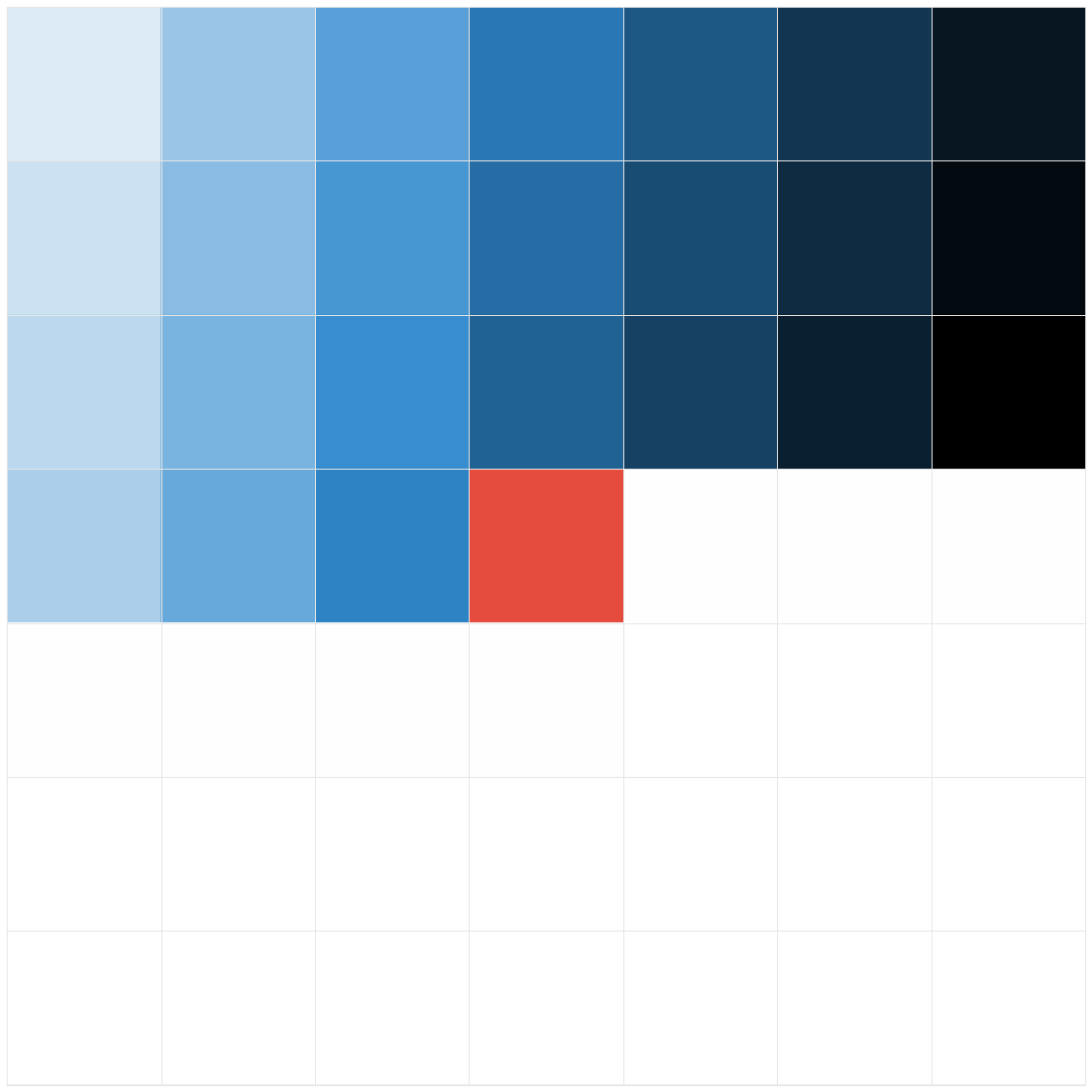}
~
\includegraphics[scale=0.08]{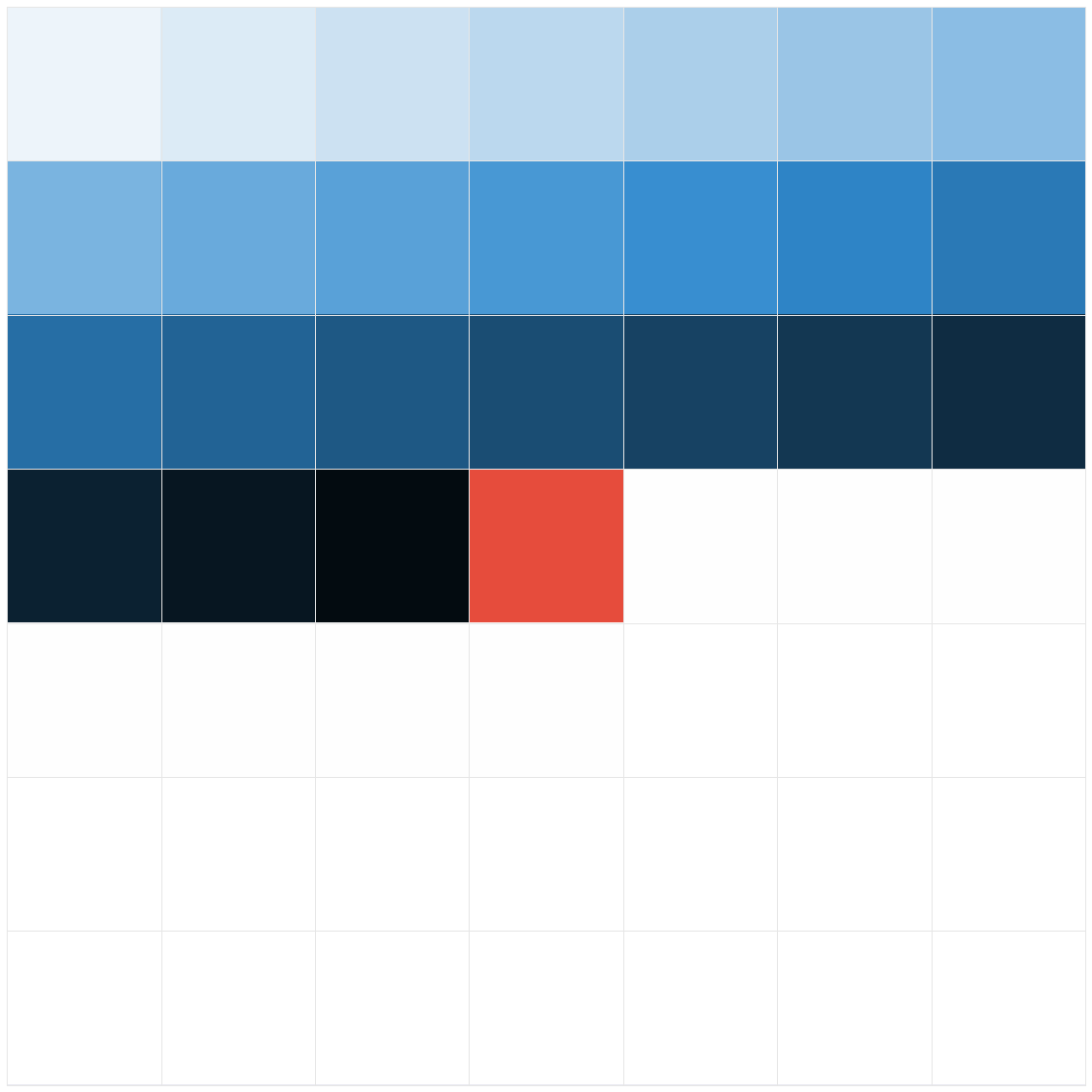}
~
\includegraphics[scale=0.08]{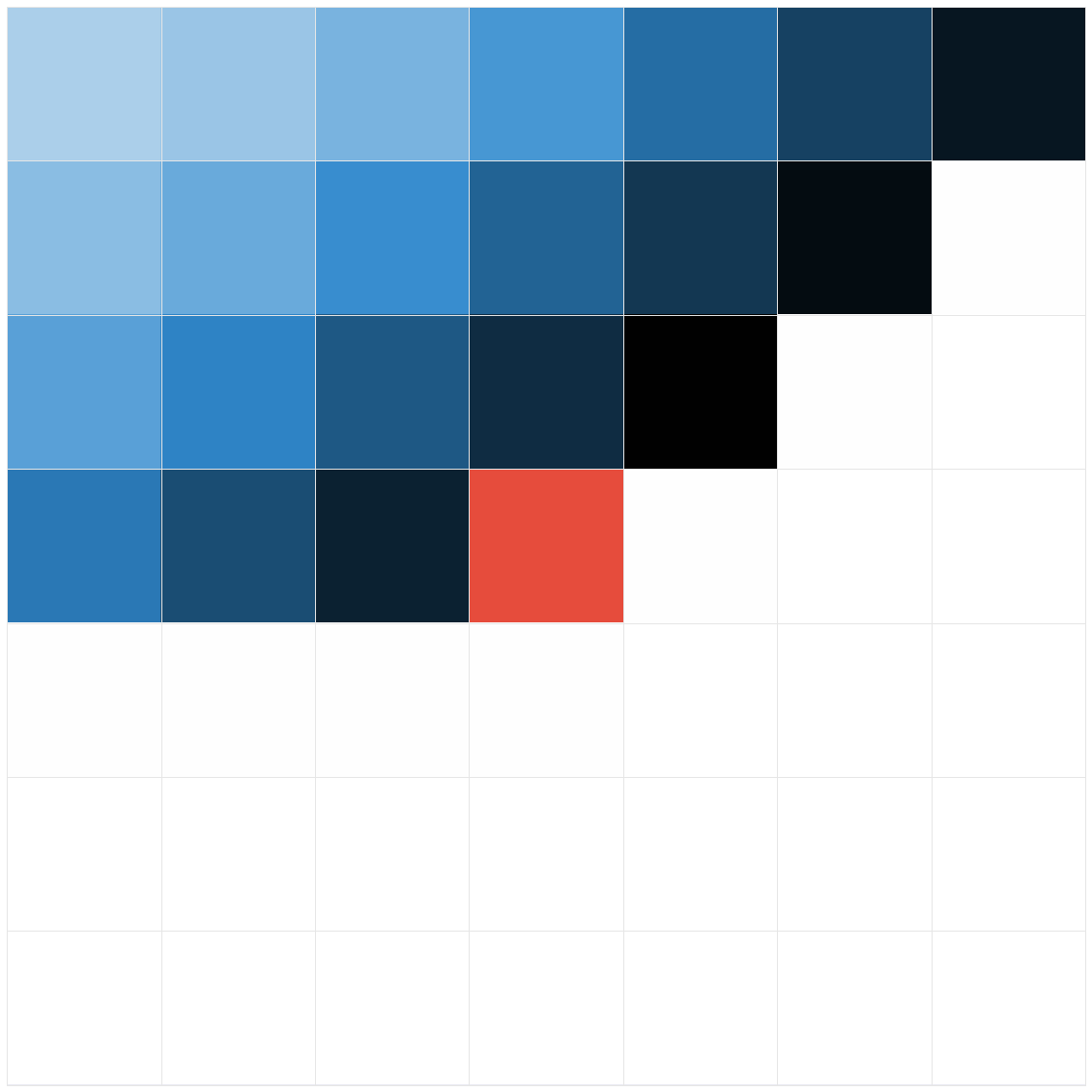}
~
\includegraphics[scale=0.08]{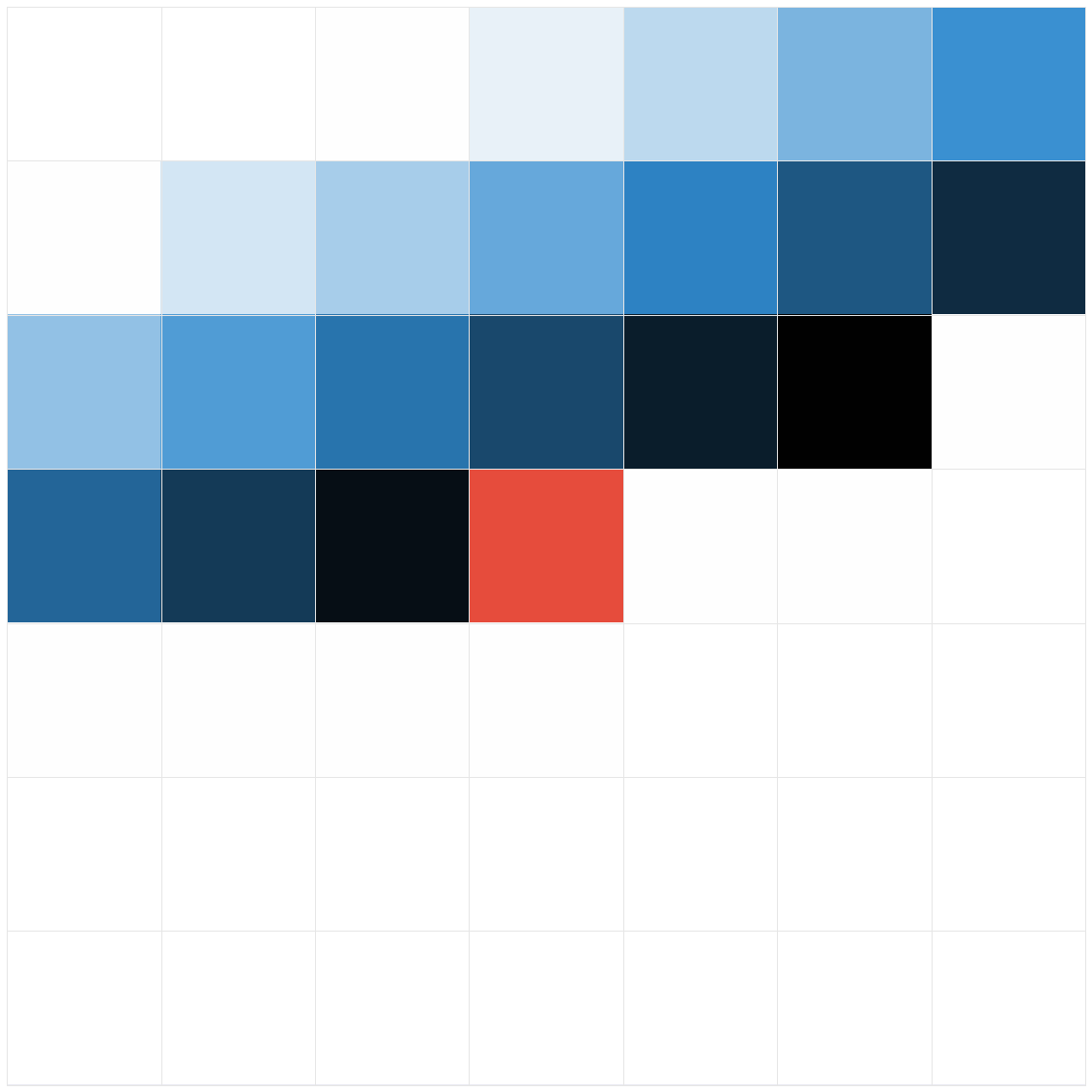}
\caption{Example distance context functions. }
\label{pic:distance_ctx}
\end{figure}

\paragraph{Network construction}

Each GLN used 4-layers, with the gating function for each neuron determined by a choice of skip-gram/max-pool/distance context function.
A set of 200 different context functions were defined, and these were randomly distributed across a network architecture whose shape was 35-60-35-70.
The learning rate for an example seen at time $t$ was set to $\min\{25/t, 0.005\}$.

\paragraph{Results}

Running the method purely online across a single pass of the data (we concatenated the training, validation and test sets into one contiguous stream of data) gave an average loss of $79.0$ nats per image across the test data.
To the best of our knowledge, this result matches the state of the art \citep{VanDenOord2016} of any batch trained density model which outputs exact probabilities, and is close to the estimated upper bounds reported for state of the art variational approaches \citep{bachman2016}.

\section{Relationship to the PAQ family compressors}

One of the main contributions of this paper is to justify and explain the empirical success of the PAQ family of compression programs.
At the time of writing, the best performing PAQ implementation in terms of general purpose compression performance is {\sc cmix}, an open-source project whose details are described by \cite{cmix} on his personal website.
{\sc cmix} uses a mixing network comprised of many gated geometric mixers, each of which use different skip-gram context functions to define the gating operations. 
While many of the core building blocks have been analyzed previously by \cite{Mattern16}, the reason for the empirical success of such locally trained mixing networks had until now remained somewhat of a mystery.
Our work shows that such architectures are a special cases of Gated Linear Networks, and that future improvements may result by exploring alternate no-regret learning methods, by using a broader class of context functions or by using our gated linear network formulation to enable efficient implementation on GPUs via vectorized matrix operations.
In principle the universality of GLNs suggests that the performance of PAQ-like approaches will continue to scale as hardware improves.
Furthermore, our MNIST density modeling results suggest that such methods can be competitive with state of the art deep learning approaches on domains beyond general purpose file compression.

\section{Conclusion}

We have introduced Gated Linear Networks, a family of architectures designed for online learning under the logarithmic loss.
Under appropriate conditions, such architectures are universal in the sense that they can can model any bounded Borel-measurable function; more significantly, 
they are guaranteed to find this solution using any appropriate choice of no-regret online optimization algorithm to locally adapt the weights of each neuron.
Initial experimental results suggest that the method warrants further investigation.

\acks{Thanks to A{\"a}ron van den Oord, Byron Knoll, Malcolm Reynolds, Kieran Milan, Simon Schmitt, Nando de Freitas, Chrisantha Fernando, Charles Blundell, Ian Osband, David Budden, Michael Bowling, Shane Legg and Demis Hassabis.}

\bibliography{xinet}

\appendix

\section{Proof of Proposition \ref{prop:loss-prop}}
\label{app:loss-prop}

Part (1):
If $x_t=1$ then $\ell^\geo_t(w) = -\log \sigma(\logit(p_t) \cdot w)$ from Equation \ref{eq:geomix2}, hence using $\sigma'(x) = \sigma(x) [1 - \sigma(x)]$ and Equation \ref{eq:geomix2} once more gives
\begin{equation}\label{eq:loss_pdiff1}
\frac{\partial \ell^\geo_t}{\partial w_i} = [ \sigma(\logit(p_t) \cdot w) - 1 ] \logit(p_{t,i}) = (\geo_w(1 \; ; \; p_t) - x_t ) \logit(p_{t,i}).
\end{equation}
Similarly, if $x_t=0$ then $\ell^\geo_t(w) = -\log \left( 1 - \sigma\left(\logit(p_t) \cdot w\right) \right)$, and so 
\begin{equation}\label{eq:loss_pdiff2}
\frac{\partial \ell^\geo_t}{\partial w_i} =  \sigma(\logit(p_t) \cdot w) \logit(p_{t,i}) = (\geo_w(1 \; ; \; p_t) - x_t ) \logit(p_{t,i}).
\end{equation}
Hence $\nabla \ell^\geo_t(w) = (\geo_w(1 \; ; \; p_t) - x_t ) \logit(p_t)$.
\\

\noindent Part (2):
As $| (\geo_w(1 \; ; \; p_t) - x_t ) | \leq 1$ it holds that $\norm{\nabla \ell^\geo_t(w)}_2 \leq \norm{\logit(p)}_2$ for all $w \in \cC$.
\\

\noindent Part (3): the convexity of $\ell^\geo_t$ has been established already by \cite{Mattern13}.
\\

\noindent 
Part (4a): 
We make use of Lemma 4.1 in \cite{Hazan16}, which states that a twice differentiable function $ f : \mathbb{R}^m \to \mathbb{R}$ is $\alpha$-exp-concave at $x \in \mathbb{R}^m$ if and only if there exists a scalar $\alpha > 0$ such that $\nabla^2 f(x) - \alpha \nabla f(x) \nabla f(x)^\top$ is positive semi-definite.

We can calculate the Hessian of the loss directly from Equations~\ref{eq:loss_pdiff1} and \ref{eq:loss_pdiff2} by observing that
\begin{equation*}
\frac{\partial^2 \ell^\geo_t}{\partial w_j \, \partial w_i} = 
\logit(p_{t,i}) \logit(p_{t,j}) \, \sigma(\logit(p_t) \cdot w) [ 1 - \sigma(\logit(p_t) \cdot w)],
\end{equation*}
and so
\vspace{-1em}
\begin{align}
\label{eq:loss-lemma-hessian}
\nabla^2 \ell^\geo_t(w) &= 
\sigma(\logit(p_t) \cdot w) [ 1 - \sigma(\logit(p_t) \cdot w)] \, \logit(p_{t}) \logit(p_{t})^\top \\
&= \geo_w(1 \; ; \; p_t) [1 - \geo_w(1 \; ; \; p_t)] \, \logit(p_{t}) \logit(p_{t})^\top. \notag 
\end{align}
Furthermore, from Part (1), we have
\begin{equation*}
\nabla \ell^\geo_t(w) \, \nabla \ell^\geo_t(w)^\top = (\geo_w(1 \; ; \; p_t) - x_t )^2 \logit(p_t) \logit(p_t)^\top.
\end{equation*}
Therefore, letting $q = \geo_w(1 \; ; \; p_t)$, we have that
\begin{align*}
A: &= \nabla^2 \ell^\geo_t(w) - \alpha \nabla \ell^\geo_t(w) \, \nabla \ell^\geo_t(w)^\top \\
&=  \logit(p_{t}) \logit(p_{t})^\top [q (1-q) - \alpha (q - x_t)^2 ]
\end{align*}
Now if we could show $k := q (1-q) - \alpha (q - x_t)^2 \geq 0$, then we would have that $A$ is positive-semidefinite as $x^\top A x = k \, x^\top \logit(p_t) \logit(p_t)^\top x = k \, (x \cdot \logit(p_t))^2 \geq 0$ for any non-zero real vector $x$.
Considering the two cases $x_t=0$ and $x_t=1$ separately, for $k \geq 0$ to hold, we must have either $\alpha \leq (1-q)/q$ and $\alpha \leq q / (1-q)$; these conditions can be met independently of $x_t$ by choosing $\alpha = \epsilon_0 / (1-\epsilon_0)$, where $\epsilon_0$ is the smallest possible value of $\geo_w(1 \; ; \; p_t)$ for any $p_t$ or $w$.
Given the assumption that $p_t \in [\epsilon, 1 - \epsilon]^d$ for some $\epsilon \in (0,1/2)$, we conclude that $\ell^\geo_t(w)$ is $\alpha$-exp-concave with 
\begin{align*}
\alpha = \min_{w \in \cW} \sigma( w \cdot \logit(\epsilon)) 
= \min_{w \in \cW} \sigma \left( \prod_{i=1}^m \log \left( \frac{\epsilon}{1 - \epsilon} \right)^{w_i} \right)
= \sigma \left( \log\left( \frac{\epsilon}{1 - \epsilon} \right)^{\max_{w \in \cW} \norm{w}_1} \right)
.
\end{align*}
Also, notice that \cref{eq:loss-lemma-hessian} is clearly positive-semidefinite, confirming Part (3). 
\\

\noindent 
Part (4b): 
Applying Part (2), and using the assumption that $p_{t} \in [\epsilon, 1-\epsilon]^m$ we have
\begin{align*}
\norm{\nabla \ell(w)}_2 
&\leq \norm{\logit(p)}_2
= \sqrt{\sum_{i=1}^m \logit^2(p_{t,i})}
\leq \sqrt{m \log^2\left(\frac{1-\epsilon}{\epsilon}\right)} \\
&= \sqrt{m} \left( \log\left(1-\epsilon\right) - \log\left(\epsilon\right) \right)
\leq \sqrt{m} \log \left(\frac{1}{\epsilon} \right).
\end{align*}

\section{Proof of Theorem~\ref{thm:capacity}}
\label{sec:thm:capacity}

The proof of Theorem~\ref{thm:capacity} relies on several simple technical lemmas.
Recall for $p,q \in (0,1)$ that $\cD(p,q) = p \log (p/q) + (1-p) \log((1-p)/(1-q))$ is the relative entropy (or Kullback-Leibler divergence) between Bernoulli distributions with biases $p$ and $q$ respectively.

\begin{lemma}\label{lem:capacity-tech1}
Let $p \in [0,1]$ and $g_p(y) = \cD(p, \sigma(y))$, then:
\begin{enumerate}
\item $g_p'(\logit(q)) = q - p$.
\item $g_p''(y) = \exp(y) / (1 + \exp(y))^2 \in (0,1/4]$.
\item $g_p(\Delta + \logit(q)) \leq \cD(p,q) + \Delta (q-p) + \Delta^2 / 8$.
\end{enumerate}
\end{lemma}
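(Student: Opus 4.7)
The plan is direct: Part (3) will follow immediately from Parts (1) and (2) via the standard second-order Taylor bound for a smooth function, so the real work is just the two calculus computations in (1) and (2).

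For Part (1), I would expand
$g_p(y) = p\log p + (1-p)\log(1-p) - p\log\sigma(y) - (1-p)\log(1-\sigma(y))$,
and differentiate term by term using the identity $\sigma'(y)=\sigma(y)(1-\sigma(y))$. The $-p\log\sigma(y)$ term contributes $-p(1-\sigma(y))$ and the $-(1-p)\log(1-\sigma(y))$ term contributes $(1-p)\sigma(y)$, which telescope into $g_p'(y)=\sigma(y)-p$. Evaluating at $y=\logit(q)$ and using $\sigma(\logit(q))=q$ yields $g_p'(\logit(q))=q-p$. For Part (2), I simply differentiate once more: $g_p''(y)=\sigma'(y)=\sigma(y)(1-\sigma(y))=e^y/(1+e^y)^2$, which does not depend on $p$. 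Positivity is immediate, and the bound $g_p''(y)\le 1/4$ follows from AM-GM applied to $\sigma(y)(1-\sigma(y))$, with equality at $y=0$.

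For Part (3), since $g_p''(y)\le 1/4$ uniformly on $\R$, the function $g_p$ is $(1/4)$-smooth, so the standard one-dimensional Taylor-with-integral-remainder estimate gives
\begin{equation*}
g_p(y+\Delta) \le g_p(y) + \Delta\, g_p'(y) + \tfrac{1}{2}\bigl(\sup_u g_p''(u)\bigr)\Delta^2.
\end{equation*}
Specializing to $y=\logit(q)$, substituting $g_p(\logit(q))=\cD(p,q)$, and using Part (1) for $g_p'(\logit(q))=q-p$ together with $\sup g_p''\le 1/4$ from Part (2), the right hand side becomes exactly $\cD(p,q)+\Delta(q-p)+\Delta^2/8$, which is the desired inequality.

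There is no genuine obstacle here; the only minor care is the boundary case $p\in\{0,1\}$, where one term of the KL divergence vanishes under the convention $0\log 0=0$ and all the derivative calculations continue to hold, so the statement extends to $p\in[0,1]$ as claimed.
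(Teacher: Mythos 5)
Your proposal is correct and matches the paper's proof: the paper also treats Parts (1) and (2) as routine calculus and establishes Part (3) via the exact same second-order Taylor bound $g_p(x+\Delta) \leq g_p(x) + \Delta g_p'(x) + \tfrac{1}{2}\max_y g_p''(y)\,\Delta^2$, then substitutes $x = \logit(q)$. The only difference is that you spell out the derivative computations that the paper dismisses as "trivial."
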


\begin{proof}
The first and second parts are trivial.
The third follows from the fact that $g_p$ is everywhere twice differentiable for all $p \in [0,1]$, which implies that
\begin{align*}
g_p(x+\Delta) 
&\leq g_p(x) + \Delta g_p'(x) + \max_y g_p''(y) \frac{\Delta^2}{2} 
\leq g_p(x) + \Delta g_p'(x) + \frac{\Delta^2}{8}\,.
\end{align*}
Then choose $x = \logit(q)$ and note that $g_p(\logit(q)) = \cD(p, q)$.
\end{proof}

\begin{lemma}\label{lem:kl2}
Given $f,p,q \in (0,1)$,
\begin{align*}
\cD(f, q) - \cD(f, p) = \cD(p, q) + \frac{\partial}{\partial \alpha} \cD(f, \sigma((1 - \alpha) \logit(p) + \alpha \logit(q))) \Bigg|_{\alpha = 0}
\end{align*}
\end{lemma}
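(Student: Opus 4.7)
The identity reduces to a clean algebraic calculation once one recognizes that the right-hand side is a first-order Taylor term along the straight line in logit-space from $p$ to $q$. My plan is to evaluate both sides independently and verify they agree.

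First I would expand the difference of KL divergences on the left directly from the definition $\cD(f,r) = f\log(f/r) + (1-f)\log((1-f)/(1-r))$. The $f\log f$ and $(1-f)\log(1-f)$ terms cancel, leaving $f\log(p/q) + (1-f)\log((1-p)/(1-q))$. Subtracting $\cD(p,q) = p\log(p/q) + (1-p)\log((1-p)/(1-q))$ collects into $(f-p)\log(p/q) - (f-p)\log((1-p)/(1-q))$, and the bracket simplifies to $\logit(p) - \logit(q)$. Thus the left-hand side equals $(f-p)\bigl(\logit(p) - \logit(q)\bigr)$.

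Next I would compute the derivative on the right using the chain rule. Let $g_f(y) = \cD(f, \sigma(y))$ and $y(\alpha) = (1-\alpha)\logit(p) + \alpha \logit(q)$, so $y(0) = \logit(p)$ and $y'(\alpha) = \logit(q) - \logit(p)$. Lemma~\ref{lem:capacity-tech1}, part (1), applied with the roles of the arguments read off as $g_f'(\logit(r)) = r - f$, yields $g_f'(\logit(p)) = p - f$. The chain rule then gives
\begin{align*}
\frac{\partial}{\partial \alpha} \cD\bigl(f, \sigma(y(\alpha))\bigr)\Big|_{\alpha = 0}
= (p - f)\bigl(\logit(q) - \logit(p)\bigr)
= (f - p)\bigl(\logit(p) - \logit(q)\bigr).
\end{align*}
This matches the expression obtained for the left-hand side, completing the identity.

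There is essentially no obstacle here: both computations are elementary, and the only small care needed is bookkeeping of signs when applying Lemma~\ref{lem:capacity-tech1}(1) (its statement reads $g_p'(\logit(q)) = q - p$, which we are using with the symbols $p$ and $f$ playing swapped roles). The identity itself is a Bregman-style decomposition of $\cD(f,q)$ around $p$, which gives useful intuition but is not needed for the formal verification above.
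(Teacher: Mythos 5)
Your proof is correct and follows essentially the same route as the paper's: compute the derivative of $\cD(f,\sigma(y(\alpha)))$ at $\alpha=0$ via the chain rule using Lemma~\ref{lem:capacity-tech1}(1), then verify that $\cD(f,q)-\cD(f,p)-\cD(p,q)$ equals the same quantity by direct expansion of the definitions. One minor point worth flagging: the paper's one-line proof states the derivative as $(f-p)(\logit(q)-\logit(p))$, which has the wrong sign; the correct value is $(p-f)(\logit(q)-\logit(p)) = (f-p)(\logit(p)-\logit(q))$, which you obtained, and with which the lemma's identity does hold.
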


\begin{proof}
Note that 
\begin{align*}
\frac{\partial}{\partial \alpha} \cD(f, \sigma((1-\alpha) \logit(p) + \alpha \logit(q)))\Bigg|_{\alpha = 0} = (f - p) (\logit(q) - \logit(p))\,.
\end{align*}
The proof follows from the definition of $\cD(\cdot, \cdot)$.
\end{proof}

\begin{proof}[of Theorem \ref{thm:capacity}] 
\paragraph{Part (1)} 
Recall that for all layers $i$ (including $i = 0$) the output of the bias neuron in the $i$th layer is $p^*_{i0}(z) = \beta$.
Given non-bias neuron $(i,k)$ and context $a \in \cC$ let 
\begin{align*}
N_{ika}(n) = \{t \leq n : c_{ik}(z_t) = a\}
\end{align*}
be the set of rounds when the context for neuron $(i,k)$ is $a$.
Now define
\begin{align*}
w_{ika}^* &= \argmin_{w \in \cW_{ik}} \int_{c_{ik}^{-1}(a)} \cD(f(z), \sigma(w \cdot \logit(p^*_{i-1}(z)))) d\mu(z) \\
p^*_{ik}(z) &= \sigma(w_{ika}^* \cdot \logit(p^*_{i-1}(z)))\,.
\end{align*}
Note that $w_{ika}^*$ need not be unique, but the strict convexity of $\cD(f(z), \sigma(\cdot))$ for all $z$ ensures that
$p^*_{ik}$ does not depend on this choice.
We prove by induction that the following holds almost surely for all non-bias neurons $(i,k)$ 
\begin{align}
\lim_{n \to \infty} \frac{1}{n} \sum_{t=1}^n \sup_{z \in \supp(\mu)} \left|p_{ik}(z;w^{(t)}) - p_{ik}^*(z)\right| = 0\,.
\label{eq:capacity-induction}
\end{align}
Let $(i,k)$ be a non-bias neuron and $c = c_{ik}$ be its context function and assume the above holds for all preceding layers. 
Fix $a \in \cC$ and abbreviate
$N(n) = N_{ika}(n)$ and $q_t = \logit(p_{i-1}(z_t;w^{(t)}))$ be the input to the $i$th layer in the $t$th round $q^*(z) = \logit(p^*_{i-1}(z))$ and
\begin{align*}
\ell_t(w) 
&= x_t \log\left(\frac{1}{\sigma(w \cdot q_t))}\right) + (1 - x_t) \log\left(\frac{1}{1 - \sigma(w \cdot q_t)}\right)\,. \\
\ell_t^*(w)
&= x_t \log\left(\frac{1}{\sigma(w \cdot q^*(z_t))}\right) + (1 - x_t) \log\left(\frac{1}{1 - \sigma(w \cdot q^*(z_t))}\right)\,.
\end{align*}
Let $\ell^*(w) = \E[\ell_t^*(w) \mathds{1}_{N(n)}(t)]$, which does not depend on $t$ by the assumption that $(z_t,x_t)$ is independent and identically distributed. Furthermore,
\begin{align}
\ell^*(w) = \int_{c^{-1}(a)} \cD(f(z), \sigma(w\cdot q^*(z))) d\mu(z) + C\,,
\label{eq:ells}
\end{align}
where $C$ is a constant depending only on $\mu$ and $f$ and given by:
\begin{align*}
C = -\int_{c^{-1}(a)} \left(f(z) \log f(z) + (1 - f(z)) \log (1 - f(z))\right) d\mu(z)\,.
\end{align*}
Since $\cW_{ik}$ is bounded it follows that $q_t$ is bounded.
Therefore $\ell_t(u)$ is bounded and continuous in $u$ and $q_t$, which by the induction hypothesis means that 
\begin{align*}
\lim_{n\to\infty} \frac{1}{n} \sum_{t \in N(n)} \sup_{u \in \cW_{ik}} |\ell_t(u) - \ell^*_t(u)| = 0\,. 
\end{align*}
Since the weights are learned using an algorithm with sublinear regret it follows that
\begin{align*}
0 
\geq \lim_{n \to \infty} \min_{u \in \cW_{ik}} \frac{1}{n} \sum_{t\in N(n)}\left( \ell_t(w^{(t)}) - \ell_t(u)\right)   
= \lim_{n \to \infty} \min_{u \in \cW_{ik}} \frac{1}{n} \sum_{t\in N(n)}\left( \ell^*_t(w^{(t)}) - \ell^*_t(u)\right)\,.
\end{align*}
For each $u \in \cW_{ik}$ define martingale
\begin{align}
M_n(u) = \sum_{t \in N(n)} \left(\ell_t^*(w^{(t)}) - \ell_t^*(u) - (\ell^*(w^{(t)}) - \ell^*(u))\right)\,.
\label{eq:martingale}
\end{align}
By the martingale law of large numbers $\Prob{\lim_{n \to \infty} M_n(u) / n = 0} = 1$. 
Since $\ell_t^*(u)$ is uniformly Lipschitz in $u$ for all $t$, a union bound on a dense subset of $\cW_{ik}$ is enough to ensure
that with probability one it holds for all $u \in \cW_{ik}$ that
\begin{align*}
0 
&\geq \lim_{n \to \infty}\frac{1}{n} \sum_{t \in N(n)} \ell_t^*(w^{(t)})- \ell_t^*(u) 
= \lim_{n \to \infty}\frac{1}{n}\sum_{t \in N(n)} \left(\ell^*(w^{(t)}) - \ell^*(u)\right)\,.
\end{align*}
The conclusion follows from the definition of $\ell^*$ in \cref{eq:ells}.

\paragraph{Part (2)} We begin by noting that for any non-bias neuron $(i,k)$ it holds that
\begin{align*}
D_{ik} 
&= \int_{\R^d} \cD(f(z), p^*_{ik}(z)) d\mu(z) \\ 
&= \sum_{a \in \cC} \int_{c_{ik}^{-1}(a)} \cD(f(z), p^*_{ik}(z)) d\mu(z) \\
&= \sum_{a \in \cC} \min_{w \in \cW_{ik}} \int_{c_{ik}^{-1}(a)} \cD(f(z), \sigma(w \cdot \logit(p^*_{i-1}(z)))) d\mu(z) \\
&\leq \min_{w \in \cW_{ik}} \int_{\R^d} \cD(f(z), \sigma(w \cdot \logit(p^*_{i-1}(z)))) d\mu(z) \\
&\leq \min_{j} \int_{\R^d} \cD(f(z), p^*_{i-1,j}(z)) d\mu(z) 
= D_{i-1}^*\,,
\end{align*}
where the first and last equalities serve as definitions and in the last line we made use of the assumption that $e_j \in \cW_{ik}$ for each neuron $(i-1,j)$.
Therefore the expected loss of any neuron in the $i$th layer is smaller than that of the best neuron in the $(i-1)$th layer.
Since $p^*_{i0}(z) = \beta$ for all layers $i$ and $z \in \R^d$ we have
\begin{align}
\int_{\R^d} \cD(f(z), p^*_{i0}(z)) d\mu(z) 
&= \int_{\R^d} \cD(f(z), \beta) d\mu(z) 
\leq \cD(0, \beta) = -\log(\beta)\,.
\label{eq:D1s}
\end{align}
Next fix a layer $i$ and let $(i,k)$ and $(i-1,j)$ be neurons maximising  
\begin{align*}
B_i &= \sum_{a \in \cC} \left(\int_{c_{ik}^{-1}(a)} \left(f(z) - p^*_{i-1,j}(z)\right) d\mu(z)\right)^2 \in [0,1]\,. 
\end{align*}
Let $\Delta_a = \int_{c_{ik}^{-1}(a)} (f(z) - p^*_{i-1,j}(z)) d\mu(z)$ and $\tilde \Delta_a = \sign(\Delta_a) \min\{|\Delta_a|, \delta/4\}$.
Then by Lemma~\ref{lem:capacity-tech1} we have
\begin{align*}
D^*_i
&\leq D_{ik} 
=\sum_{a \in \cC} \min_{w \in \cW_{ik}} \int_{c_{ik}^{-1}(a)} \cD(f(z), \sigma(w \cdot p^*_{i-1}(z))) d\mu(z) \\
&\leq \sum_{a \in \cC} \int_{c_{ik}^{-1}(a)} \cD\left(f(z), \sigma\left(4\tilde \Delta_{a} + \logit(p^*_{i-1,j}(z))\right)\right) d\mu(z) \\
&\leq \sum_{a \in \cC} \int_{c_{ik}^{-1}(a)} \left(\cD(f(z), p^*_{i-1,k_{i-1}}(z)) + 4\tilde \Delta_{a} (p^*_{i-1,j}(z) - f(z)) + 2\tilde \Delta_{a}^2\right) d\mu(z) \\
&\leq \int_{\R^d} \cD(f(z), p^*_{i-1,j}(z)) d\mu(z) - 4 \sum_{a \in \cC} \Delta_a \tilde \Delta_a + 2 \sum_{a \in \cC} \tilde \Delta_{a}^2 \\
&\leq \int_{\R^d} \cD(f(z), p^*_{i-1,j}(z)) d\mu(z) - 2 \sum_{a \in \cC} \Delta_a \tilde \Delta_a \\ 
&\leq \int_{\R^d} \cD(f(z), p^*_{i-1,j}(z)) d\mu(z) - 2 \min\left\{1, \frac{\delta}{4}\right\} \sum_{a \in \cC} \Delta_a^2 \\ 
&= D_{i-1,j} - \min\left\{2, \frac{\delta}{2}\right\} B_i  
= D_{i-2}^* - \min\left\{2, \frac{\delta}{2}\right\} B_i\,. 
\end{align*}
Therefore by telescoping the sum over odd and even natural numbers and \cref{eq:D1s} we have
\begin{align*}
\sum_{i=1}^\infty B_i 
\leq \frac{2}{\min\left\{2, \frac{\delta}{2}\right\}} \log\left(\frac{1}{\beta}\right) = \max\left\{1, \frac{4}{\delta}\right\} \log\left(\frac{1}{\beta}\right)\,, 
\end{align*}
which implies the result.

\paragraph{Part (3)} 
Let $(i,k)$ and $(i-1,j)$ be two non-bias neurons and
abbreviate $p(z) = p^*_{ik}(z) \in \R$ and $q(z) = p_{i-1,j}^*(z) \in \R^{K_{i - 1}}$. 
Let
\begin{align*}
\epsilon = \int_{\R^d} \left(\cD(f(z), q(z)) - \cD(f(z), p(z))\right) d\mu(z)\,. 
\end{align*}
Then by Lemma~\ref{lem:kl2} and Pinsker's inequality 
\begin{align*}
\epsilon 
&= \int_{\R^d} \left(\cD(p(z),q(z)) + \frac{\partial}{\partial \alpha} \cD(f(z), \sigma((1 - \alpha) \logit(p(z)) + \alpha \logit(q(z))))\Bigg|_{\alpha=0}\right) d\mu(z) \\
&\geq \int_{\R^d} \cD(p(z), q(z)) d\mu(z) 
\geq 2 \int_{\R^d} (p(z) - q(z))^2 d\mu(z)\,,
\end{align*}
where the first inequality follows from the definition of $p = p^*_{ik}$, which ensures the derivative is nonnegative and the second follows from Pinsker's inequality.
Therefore for all layers $i$ and non-bias neurons $(i,k)$ and $(i-1,j)$ it holds that 
\begin{align*}
\int_{\R^d} \left(p^*_{ik}(z) - p^*_{i-1,j}(z)\right)^2 d\mu(z) 
&\leq \frac{1}{2} \int_{\R^d} \left(\cD(f(z), p^*_{i-1,j}(z)) - \cD(f(z), p^*_{i,k}(z))\right) d\mu(z) \\
&\leq \frac{1}{2} (D^*_{i-2} - D^*_i)\,.
\end{align*}
By telescoping the sum over odd and even $i$ there exists a $p^*_\infty$ such that 
\begin{align*}
\lim_{i\to \infty} \max_{k > 1} \int_{\R^d} (p^*_{ik}(z) - p^*_\infty(z))^2 d\mu(z) = 0\,.
\end{align*}
Since $p^*_{ik}$ is $\cF$-measurable for all neurons $(i,k)$ and pointwise convergence preserves measurability it follows that $p^*_\infty$ is also $\cF$-measurable.
\end{proof}

\section{Proof of Theorem~\ref{thm:capacity-markov}}
\label{sec:thm:capacity-markov}

We use the same notation as the proof of Theorem~\ref{thm:capacity} in the previous section.
Only the first part is different. Since the side information is no longer independent and identically distributed we cannot claim anymore 
that $\E[\ell_t^*(u)] = \ell^*(u)$. The idea is to use the stability of the learning procedure to partition the data into chunks of increasing size on which the
weights are slowly changing, but where the mixing of the Markov chain will eventually ensure that the empirical distribution converges to stationary.
Let $(S_i)_i$ be a random contiguous partition of $N(\infty) = \{t \in \N : c_{ik}(z_t) = a\}$, which means that $\max S_i < \min S_{i+1}$ for all $i$ and $\bigcup_{i=1}^\infty S_i = N(\infty)$. 
By the stability assumption we may choose $(S_i)_i$ such that $\lim_{i\to \infty} |S_i| = \infty$ and $\max_{s,t \in S_i} \norm{w^{(s)} - w^{(t)}}_\infty \leq \epsilon_i$ with
$\lim_{i\to\infty} \epsilon_i = 0$ for all possible data sequences. Then abusing notation by letting $w^{(i)} = w^{(\min S_i)}$ and letting $n_I = \sum_{i=1}^I |S_i|$ we have the following holding almost surely:
\begin{align*}
0 
&\geq \lim_{n\to\infty} \frac{1}{n} \sum_{t=1}^n \left(\ell_t^*(w^{(t)}) - \ell_t^*(u)\right) 
= \lim_{I\to\infty} \frac{1}{n_I} \sum_{i=1}^I \sum_{t \in S_i} \left(\ell_t^*(w^{(t)}) - \ell_t^*(u)\right) \\
&= \lim_{I\to\infty} \frac{1}{n_I} \sum_{i=1}^I \sum_{t \in S_i} \left(\ell_t^*(w^{(i)}) - \ell_t^*(u) + \ell_t^*(w^{(t)}) - \ell_t^*(w^{(i)})\right) \\
&\geq \lim_{I\to\infty} \frac{1}{n_I} \sum_{i=1}^I \sum_{t \in S_i} \left(\ell_t^*(w^{(i)}) - \ell_t^*(u) - \epsilon_i \right) \\
&= \lim_{I\to\infty} \frac{1}{n_I} \sum_{i=1}^I \sum_{t \in S_i} \left(\ell_t^*(w^{(i)}) - \ell_t^*(u)\right) 
= \lim_{I\to\infty} \frac{1}{n_I} \sum_{i=1}^I |S_i| \left(\ell^*(w^{(i)}) - \ell^*(u)\right) \\
&= \lim_{I\to\infty} \frac{1}{n_I} \sum_{i=1}^I \sum_{t \in S_i} \left(\ell^*(w^{(t)}) - \ell^*(u)\right) 
= \lim_{n\to\infty} \frac{1}{n} \sum_{t=1}^n \left(\ell^*(w^{(t)}) - \ell^*(u)\right)\,,
\end{align*}
where we used the convergence theorem for aperiodic $\phi$-irreducible Markov chains \cite[Chap.\ 10]{MT12}.
The proof is concluded in the same way as the proof of Theorem~\ref{thm:capacity}.

\section{Norms and topological arguments}
\label{app:topology}

Recall that $(\R^d, \cF, \mu)$ is a probability space with $\cF$ the Lebesgue $\sigma$-algebra and for $\cG$ a set of $\cF$-measurable functions from $\R^d$ to $\cC$ we define
\begin{align*}
\norm{h}_{\cG} = \sup_{c \in \cG} \sum_{a \in \cC} \left|\int_{c^{-1}(a)} h(x) d\mu(x)\right|\,.
\end{align*}
Of course $\norm{\cdot}_{\cG}$ is not a norm for all $\cG$. In the following we establish that it is a norm for a few natural choices.
First we note that $\norm{\cdot}_{\cG}$ is `almost' a norm for all $\cG$ as summarised in the following trivial lemma.

\begin{lemma}
Let $g,h: \R^d \to \R$ be $\mu$-integrable and $\cG \subset \cC \to \R^d$. Then
\begin{enumerate}
\item $\norm{g + h}_\cG \leq \norm{g}_\cG + \norm{h}_{\cG}$ 
\item $\norm{a g}_\cG = a \norm{g}_\cG$
\item $\norm{0}_{\cG} = 0$.
\end{enumerate}
\end{lemma}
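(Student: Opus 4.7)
The plan is to reduce each of the three claims to elementary properties of Lebesgue integration: linearity, the triangle inequality for absolute values, and the fact that the zero function integrates to zero. Since the quantity $\norm{\cdot}_{\cG}$ is defined as a supremum over $c \in \cG$ of a sum over $a \in \cC$ of absolute values of integrals over the pieces $c^{-1}(a)$, in each part I would work inside the supremum and then pass to the sup at the end.

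For part (1), I would fix an arbitrary $c \in \cG$ and $a \in \cC$. By linearity of the integral we have $\int_{c^{-1}(a)} (g+h)\,d\mu = \int_{c^{-1}(a)} g\,d\mu + \int_{c^{-1}(a)} h\,d\mu$, and the ordinary triangle inequality for the absolute value then gives $\left|\int_{c^{-1}(a)} (g+h)\,d\mu\right| \leq \left|\int_{c^{-1}(a)} g\,d\mu\right| + \left|\int_{c^{-1}(a)} h\,d\mu\right|$. Summing over $a \in \cC$ and then taking the supremum over $c \in \cG$ majorises each summand by $\norm{g}_\cG$ and $\norm{h}_\cG$ respectively, proving subadditivity.

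For part (2), the key observation is linearity in the integrand, $\int_{c^{-1}(a)} ag\,d\mu = a\int_{c^{-1}(a)} g\,d\mu$, whence $\left|\int_{c^{-1}(a)} ag\,d\mu\right| = |a| \left|\int_{c^{-1}(a)} g\,d\mu\right|$. Summing over $a$ and taking the supremum over $c$ yields $\norm{ag}_\cG = |a|\,\norm{g}_\cG$; the stated version (without the absolute value) is the specialisation to $a \geq 0$, which is what is actually used in the sequel, and I would comment on the missing absolute value. For part (3), every integral of the zero function over any measurable set is zero, so the sum inside the supremum is identically zero and the sup equals zero.

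The main obstacle is essentially trivial: there is no obstacle. The only subtlety worth flagging in writing is the homogeneity statement in part (2), where strict equality with $a$ rather than $|a|$ only holds for nonnegative scalars; I would state and prove the $|a|$-homogeneous version so that the resulting functional is a genuine seminorm, which is what the surrounding discussion (the countable covers and hyperplane examples) implicitly relies upon to conclude that $\norm{\cdot}_\cG$ is a norm once $\norm{h}_\cG = 0 \Rightarrow h = 0$ $\mu$-a.e.\ is separately established for the specific choice of $\cG$.
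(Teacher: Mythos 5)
Your proof is correct, and indeed the paper does not write one out at all — it labels the lemma ``trivial'' and moves on — so there is nothing to diverge from: linearity of the integral, the triangle inequality, and subadditivity of the supremum are the only ingredients. You are also right to flag that part (2) as stated holds only for $a \geq 0$ and should read $\norm{a g}_{\cG} = |a|\,\norm{g}_{\cG}$ to give genuine absolute homogeneity; this is just a slip in the paper's statement and does not affect the downstream use.
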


Thus in order to show that $\norm{\cdot}_{\cG}$ is a norm it is only necessary to prove that $\norm{h}_{\cG} = 0$ implies that $h = 0$ $\mu$-almost-everywhere.
We are especially interested in countable $\cG$, since we can construct (infinite) networks that eventually use all contexts in such sets. The following lemma
allows us to connect Theorem~\ref{thm:capacity} to a claim about the asymptotic universality of a network.

\begin{lemma}
If $\cG =\{c_1,c_2,\ldots\}$ is countable and $\norm{\cdot}_{\cG}$ is a norm on the space of $\mu$-integrable $h$, then for $\cG_i = \{c_1,c_2,\ldots,c_i\}$ it holds that
$\lim_{i\to \infty} \norm{h}_{\cG_i} = \norm{h}_{\cG}$ for all $h$.
\end{lemma}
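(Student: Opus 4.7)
The plan is to prove the lemma via a routine monotone convergence argument combined with the countability of $\cG$. There is essentially no hard content here; the main observation is that any single context function witnessing the supremum to within $\varepsilon$ must eventually be enumerated in $\cG_i$.

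First I would observe that since $\cG_i \subseteq \cG_{i+1} \subseteq \cG$, the sequence $(\norm{h}_{\cG_i})_i$ is monotonically nondecreasing and bounded above by $\norm{h}_{\cG}$, because the supremum defining $\norm{h}_{\cG_i}$ is taken over a subset of the context functions used to define $\norm{h}_{\cG}$. Hence the limit $L := \lim_{i\to\infty} \norm{h}_{\cG_i}$ exists and satisfies $L \leq \norm{h}_{\cG}$.

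For the reverse inequality, fix $\varepsilon > 0$ and use the definition of the supremum to choose $c \in \cG$ with
\begin{align*}
\sum_{a \in \cC} \left| \int_{c^{-1}(a)} h(x) \, d\mu(x) \right| \;>\; \norm{h}_{\cG} - \varepsilon\,.
\end{align*}
Because $\cG = \{c_1, c_2, \ldots\}$ is countable, there is some index $N$ with $c = c_N$, and therefore $c \in \cG_i$ for every $i \geq N$. For such $i$ the supremum defining $\norm{h}_{\cG_i}$ is taken over a set that contains $c$, giving $\norm{h}_{\cG_i} > \norm{h}_{\cG} - \varepsilon$. Letting $i \to \infty$ yields $L \geq \norm{h}_{\cG} - \varepsilon$, and since $\varepsilon > 0$ was arbitrary we conclude $L \geq \norm{h}_{\cG}$.

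Combining the two directions gives $\lim_{i\to\infty} \norm{h}_{\cG_i} = \norm{h}_{\cG}$. The argument works uniformly whether $\norm{h}_{\cG}$ is finite or infinite: in the infinite case one instead chooses $c$ so that the sum exceeds an arbitrary $M$, and the same reasoning shows $L \geq M$ for every $M$. The only potential subtlety — and this is what I would double-check — is that the result as stated does not require that $h$ be $\mu$-integrable for the equality to hold; the monotonicity and countability argument goes through without using integrability at all, so no obstacle arises there.
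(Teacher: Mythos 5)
Your proof is correct, and it is the standard monotone-convergence-of-suprema argument that the paper's authors evidently considered too routine to write out (the lemma is stated without proof in Appendix~\ref{app:topology}). The two directions are exactly right: monotonicity and the upper bound $\norm{h}_{\cG_i} \leq \norm{h}_{\cG}$ come from $\cG_i \subseteq \cG$, and the lower bound comes from the fact that any $c \in \cG$ achieving the supremum to within $\varepsilon$ has a finite index in the enumeration and hence lies in $\cG_i$ for all large $i$. Your closing remark is also sound: the convergence argument is purely about suprema over nested index sets and uses integrability only to ensure the quantities $\int_{c^{-1}(a)} h \, d\mu$ are well-defined; in fact, since $\cC$ is finite and $h$ is $\mu$-integrable, the quantity $\norm{h}_{\cG}$ is automatically finite (bounded by $|\cC| \int |h| \, d\mu$), so the infinite case you guard against cannot actually occur under the stated hypotheses, though it costs nothing to include it.
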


For the remainder we assume that $\mu$ is absolutely continuous with respect to the Lebesgue measure. 

\begin{lemma}
Let $\cG = \{\mathds{1}_{B_r(x)} : x \in \Q^d, r \in (0,\infty) \cap \Q\}$
and suppose that $h$ is bounded and measurable with $\norm{h}_{\cG} = 0$. Then $h = 0$ $\mu$-almost-everywhere.
\end{lemma}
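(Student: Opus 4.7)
The plan is to reduce the statement to the classical Lebesgue differentiation theorem via the Radon-Nikodym derivative. First I would unpack the hypothesis $\norm{h}_{\cG}=0$: for every rational ball $B_r(x)$, the sum $\left|\int_{B_r(x)} h \, d\mu\right| + \left|\int_{\R^d \setminus B_r(x)} h \, d\mu\right|$ vanishes, which forces
\begin{align*}
\int_{B_r(x)} h \, d\mu = 0 \qquad \text{for all } x \in \Q^d,\; r \in \Q \cap (0,\infty).
\end{align*}
Then, using the assumption that $\mu$ is absolutely continuous with respect to Lebesgue measure $\lambda$, I would write $d\mu = \rho\, d\lambda$ for some nonnegative $\rho \in L^1_{\mathrm{loc}}(\lambda)$. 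Since $h$ is bounded, the product $h\rho$ lies in $L^1_{\mathrm{loc}}(\lambda)$, and the above equation becomes $\int_{B_r(x)} h\rho\, d\lambda = 0$ for all rational centers and radii.

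Next I would extend this vanishing from rational to all $(x,r) \in \R^d \times (0,\infty)$. Fix any $x \in \R^d$ and $r>0$, and choose sequences $x_n \to x$ with $x_n \in \Q^d$ and $r_n \to r$ with $r_n \in \Q \cap (0,\infty)$. Since the boundary sphere $\{y: \|y - x\|=r\}$ has zero Lebesgue measure, the indicator functions $\mathds{1}_{B_{r_n}(x_n)}$ converge to $\mathds{1}_{B_r(x)}$ pointwise $\lambda$-almost everywhere, and they are all dominated by a single integrable function (e.g.\ $\mathds{1}_{B_{r+1}(x)} |h\rho|$). Dominated convergence then yields
\begin{align*}
\int_{B_r(x)} h\rho\, d\lambda = \lim_n \int_{B_{r_n}(x_n)} h\rho \, d\lambda = 0\qquad \text{for all } x \in \R^d,\; r > 0.
\end{align*}

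With the integral of $h\rho$ over every open ball equal to zero, I would invoke the Lebesgue differentiation theorem: at $\lambda$-almost every $x \in \R^d$,
\begin{align*}
h(x)\rho(x) = \lim_{r \downarrow 0} \frac{1}{\lambda(B_r(x))} \int_{B_r(x)} h\rho\, d\lambda = 0.
\end{align*}
Hence $h\rho=0$ $\lambda$-almost everywhere. Since $\mu$ is supported on $\{\rho > 0\}$, this forces $h = 0$ $\mu$-almost everywhere, finishing the proof.

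The main obstacle, such as it is, lies in the extension step: one must be careful enough about which $x,r$-sequences to pick so that dominated convergence applies, but the null-measure of ball boundaries under $\lambda$ makes this entirely routine. The rest is just correctly invoking Radon-Nikodym and Lebesgue differentiation, which requires only boundedness of $h$ and absolute continuity of $\mu$, both of which are explicit hypotheses.
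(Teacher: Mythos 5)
Your proof is correct, but it is a genuinely different argument from the one in the paper. The paper argues by contradiction: if $|h| > \epsilon$ on a set of $\mu$-measure $> \delta$, it takes a set $A$ where this happens, uses outer regularity of $\mu$ to sandwich $A$ inside an open $U$ with $\mu(U\setminus A)$ small, and then attempts to write $U$ as a \emph{disjoint} union of rational balls to contradict $\sum_i \int_{B_i} h\,d\mu = 0$. You instead pass to the Lebesgue density $\rho = d\mu/d\lambda$, propagate the vanishing of $\int_{B} h\rho\,d\lambda$ from rational balls to all balls via dominated convergence, and then invoke the Lebesgue differentiation theorem to conclude $h\rho = 0$ $\lambda$-a.e.

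Your route is arguably cleaner. The paper's sentence ``write $U = \bigcup_i B_i$ where $(B_i)$ are disjoint open balls with rational-valued centers and radii'' is, as literally stated, false for $d \geq 2$: open subsets of $\R^d$ generally cannot be tiled by disjoint open balls. The step can be rescued with a Vitali covering argument, which produces disjoint rational balls covering $U$ up to a $\lambda$-null set, and absolute continuity of $\mu$ is then invoked to discard that null set --- but this is precisely the delicate point the paper glosses over. Your approach uses absolute continuity more transparently (once, at the Radon--Nikodym step), sidesteps the ball-decomposition issue entirely, and replaces the outer-regularity argument by the standard Lebesgue differentiation theorem. Both proofs use the boundedness of $h$ and the absolute continuity of $\mu$ (the latter stated just above the lemma in the paper and correctly imported by you), so there is no difference in hypotheses; the trade-off is that the paper's outline is shorter but relies on a machinery (Vitali/ball decomposition) that it does not spell out, whereas yours is slightly longer but elementary and fully airtight. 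One minor remark: the extension from rational to arbitrary balls could also be avoided by noting that, for fixed $x$, the family of rational radii is already dense enough to drive the differentiation limit, but your explicit dominated-convergence step is perfectly valid and makes the argument self-contained.
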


\begin{proof}
Assume without loss of generality that $\sup_{x \in \R^d} |h(x)| \leq 1$.
Suppose that $|h|$ does not vanish $\mu$-almost everywhere. Then there exists an $\epsilon > 0$ and $\delta > 0$ such that 
$\mu(|h| > \epsilon) > \delta$. 
Let $A = \{x : |h(x)| > \epsilon\}$, which is measurable because $|h|$ is measurable.
Therefore by outer-regularity there exists an open $U \subseteq \R^d$ with $A \subset U$ and $\mu(U) \leq \mu(A) + \epsilon \delta / 2$. 
Therefore $\int_U h(x) d\mu(x) = \int_A h(x) d\mu(x) + \int_{U-A} h(x) d\mu(x) \geq \epsilon \delta / 2$.
Next write $U = \bigcup_i B_i$ where $(B_i)$ are disjoint open balls with rational-valued centers and radii. Then by assumption
$\int_U h(x) d\mu(x) = \sum_{i=1}^\infty \int_{B_i} h(x) d\mu(x) = 0$, which is a contradiction.
\end{proof}

\begin{lemma}
Let $\cB$ be a countable base for $\R^d$ and
$\cG = \{\mathds{1}_U : U \in \cB\}$ and $\norm{h}_{\cG} = 0$ for some bounded and $\cF$-measurable $h$. 
Then $h = 0$ $\mu$-almost-everywhere.
\end{lemma}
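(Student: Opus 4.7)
My plan is to run an argument directly analogous to that of the previous lemma, using the signed measure $\nu(A) := \int_A h\,d\mu$ and the fact that $\cB$ generates the Borel $\sigma$-algebra $\cF$, with Dynkin's $\pi$-$\lambda$ theorem to close the loop.

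First I would extract the content of $\norm{h}_{\cG} = 0$ for $\cG = \{\mathds{1}_U : U \in \cB\}$. For each $U \in \cB$ the context $\mathds{1}_U$ partitions $\R^d$ into $U$ and $U^c$, so the hypothesis gives $\nu(U) = 0$ \emph{and} $\nu(U^c) = 0$ for every $U \in \cB$. Adding these identities yields $\nu(\R^d) = 0$. Then I would introduce $\cD = \{A \in \cF : \nu(A) = 0\}$ and check that $\cD$ is a $\lambda$-system: it contains $\R^d$; is closed under complements because $\nu(\R^d) = 0$; and is closed under countable disjoint unions by $\sigma$-additivity of $\nu$ (which is a finite signed measure since $h$ is bounded and $\mu$ is a probability measure). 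Thus $\cD$ already contains $\cB$ and all complements of elements of $\cB$.

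To apply Dynkin's theorem I would show that $\cD$ contains a $\pi$-system generating $\cF$. The natural candidate is $\cP$, the collection of all finite intersections of elements of $\cB$; since $\cB$ is a base, $\cP$ is countable and $\sigma(\cP) = \sigma(\cB) = \cF$. Once $\cP \subseteq \cD$, Dynkin's $\pi$-$\lambda$ theorem gives $\cD = \cF$, so $\nu \equiv 0$ and therefore $h = 0$ $\mu$-almost everywhere.

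The main obstacle is showing $\cP \subseteq \cD$, i.e., that $\nu$ vanishes on finite intersections of base elements. This is the step that does not follow formally from $\lambda$-system closure, and it is the real content of the lemma. The plan for this step is to repeat the contradiction argument of the previous lemma in this setting: suppose $\nu(V) \ne 0$ for some open $V$ (in particular some $V \in \cP$). Assume WLOG $|h| \le 1$ and $\mu(\{h > \epsilon\} \cap V) > \delta$ for suitable $\epsilon, \delta > 0$ (passing to $-h$ or shrinking $V$ if needed). Using outer regularity of $\mu$ find an open set $U \subseteq V$ with $|\nu(U)| \ge \epsilon\delta/2$. Then write $U = \bigsqcup_i V_i$ with $V_i = U_i \setminus \bigcup_{j<i} U_j$ for $U_i \in \cB$, so that $\nu(U) = \sum_i \nu(V_i)$ by $\sigma$-additivity. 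The delicate task is to show each $\nu(V_i) = 0$; this is done inductively in $i$ using that $\nu$ vanishes on $U_i$ and on $U_i^c$ (from the two-sided hypothesis), combined with inclusion--exclusion reducing everything to intersections $\bigcap_{j \in S} U_j$ that are themselves open and handled by the same contradiction scheme on a strictly smaller ``residual'' open set, so the induction terminates. Once the vanishing on $\cP$ is established, the conclusion is immediate from Dynkin's theorem. An alternative, slightly cleaner packaging of the same idea is to reduce all the way to the previous lemma: use the above contradiction scheme to show $\nu(B_r(x)) = 0$ for every rational open ball $B_r(x)$, and then invoke the previous lemma directly.
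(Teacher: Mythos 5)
Your high-level framing is reasonable: $\nu(A)=\int_A h\,d\mu$ is a finite signed measure (because $h$ is bounded and $\mu$ is a probability measure), $\cD=\{A\in\cF:\nu(A)=0\}$ is a $\lambda$-system once one notes $\nu(\R^d)=0$, and to conclude via Dynkin you would want $\nu$ to vanish on a generating $\pi$-system. The trouble is that the hypothesis only supplies $\nu(U)=0$ for $U\in\cB$, and $\cB$ itself is typically not a $\pi$-system. You correctly identify that the real work is showing $\nu$ vanishes on finite intersections $\cP$, and it is exactly this step that your proposal does not actually carry out. The ``induction in $i$'' you sketch is circular: writing $V_i = U_i\setminus\bigcup_{j<i}U_j$ and applying inclusion--exclusion to $\nu(V_i)$ produces terms of the form $\nu\bigl(U_i\cap\bigcap_{j\in S}U_j\bigr)$, and these are precisely the finite intersections you set out to control. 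You then propose to handle those ``by the same contradiction scheme on a strictly smaller residual open set,'' but an intersection of base elements is just another open set --- there is no quantity that decreases from one invocation to the next, so the recursion has no base case. Note also that the ``two-sided'' hypothesis buys you nothing beyond $\nu(\R^d)=0$: once that is known, $\nu(U^c)=0$ is automatic from $\nu(U)=0$, so it cannot be leveraged to unlock intersections. Your proposed alternative --- first establishing $\nu(B_r(x))=0$ for rational balls and then invoking the previous lemma --- suffers from the same circularity, since showing $\nu$ vanishes on a ball requires covering it by base elements and controlling the overlaps.

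The paper's ``as above'' intends a shorter route that sidesteps intersections entirely: repeat the contradiction argument of the preceding lemma verbatim, i.e.\ assume $h\neq 0$ on a set of positive $\mu$-measure, use outer regularity to produce an open $U$ with $\nu(U)\neq 0$, and then write $U$ as a countable \emph{disjoint} union of base elements so that $\nu(U)=\sum_i\nu(U_i)=0$ by countable additivity --- a contradiction. The disjointness is the crux: it is what makes the additivity immediate without any inclusion--exclusion or $\pi$-$\lambda$ machinery. If you wish to rescue your Dynkin approach instead, the clean fix is to enlarge (or assume) $\cB$ to be closed under finite intersections, in which case $\cP=\cB$ is already a $\pi$-system and the hypothesis supplies everything; but as written your argument has a genuine gap at the step ``$\cP\subseteq\cD$.''
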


\begin{proof}
As above.
\end{proof}

\begin{lemma}\label{lem:half-space-norm}
Let $\cG = \{\mathds{1}_{H_{\nu,c}} : \nu \in \Q^d, \norm{\nu} = 1, c \in \Q\}$. If $h:\R^d \to \R$ is measurable and $\norm{h}_{\cG} = 0$, then $h = 0$ almost-everywhere.
\end{lemma}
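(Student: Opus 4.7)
The plan is to reduce the statement to the classical injectivity of the Radon transform on $L^1(\R^d)$. As in the preceding lemmas I will work under the mild hypothesis that $h$ is bounded (which is the only case used in the paper, since Theorem~\ref{thm:capacity} is ultimately applied with $h = f - p^*_\infty$ and $f, p^*_\infty \in [0,1]$). Writing $\rho = d\mu/d\lambda$ for the Radon-Nikodym derivative of $\mu$ with respect to Lebesgue measure $\lambda$, set $g = h \rho \in L^1(\R^d, \lambda)$. Because each $c \in \cG$ takes values in $\cB = \{0,1\}$ with $c^{-1}(1) = H_{\nu, b}$, the hypothesis $\norm{h}_\cG = 0$ is equivalent to
\[
\int_{\nu \cdot x \geq b} g(x)\, dx = 0 \quad \text{and} \quad \int_{\nu \cdot x < b} g(x)\, dx = 0 \qquad \text{for all } \nu \in \Q^d,\ \norm{\nu}=1,\ b \in \Q.
\]

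The first step is to extend this vanishing from rational parameters to all of $S^{d-1} \times \R$ by continuity. For fixed $\nu$ and $b_n \to b$, the indicators $\mathds{1}_{\{\nu \cdot x \geq b_n\}}$ converge pointwise to $\mathds{1}_{\{\nu \cdot x \geq b\}}$ off the hyperplane $\{\nu \cdot x = b\}$, which is $\lambda$-null, so dominated convergence (with $|g|$ as envelope) gives continuity in $b$; the same argument, using that any single hyperplane has Lebesgue measure zero, handles continuity in $\nu$. Hence $\int_{H_{\nu,b}} g\, dx = 0$ for every unit $\nu \in \R^d$ and every $b \in \R$, and taking differences yields $\int_{\{b_1 \leq \nu \cdot x \leq b_2\}} g\, dx = 0$ for every slab.

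The second step is to recognise this slab identity as the vanishing of the Radon transform. For each unit $\nu$ the pushforward signed measure $(\pi_\nu)_*(g\, d\lambda)$ on $\R$ assigns zero mass to every interval, hence is the zero measure; by Fubini its density is the Radon transform $(Rg)(\nu, t) = \int_{\{\nu\cdot x = t\}} g\, dS_{d-1}$, so $(Rg)(\nu, \cdot) \equiv 0$ for every $\nu \in S^{d-1}$. The Fourier slice theorem then gives $\hat g(s\nu) = \widehat{(Rg)(\nu,\cdot)}(s) = 0$ for every $s \in \R$ and $\nu \in S^{d-1}$, i.e.\ $\hat g$ vanishes identically on $\R^d$; uniqueness of the Fourier transform on $L^1(\R^d)$ forces $g = 0$ Lebesgue-a.e. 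Since $\mu$ is supported on $\{\rho > 0\}$ and $g = h\rho$, this yields $h = 0$ $\mu$-a.e., as required.

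The only nontrivial ingredient is the injectivity of the Radon transform on $L^1$, which is a standard consequence of the Fourier slice theorem; the rest is dominated convergence, Fubini, and bookkeeping. The main thing to be careful about is the density argument: I need continuity jointly in $(\nu, b)$, which relies crucially on Lebesgue-nullness of individual hyperplanes, and the reduction to $L^1(\lambda)$ via the Radon-Nikodym derivative, which uses the hypothesis that $\mu \ll \lambda$ stated at the start of the appendix.
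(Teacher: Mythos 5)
Your proof is correct and follows essentially the same route as the paper's: extend the vanishing from rational to all half-spaces by absolute continuity, interpret the resulting hyperplane integrals as the Radon transform of $h\,d\mu/d\lambda$, and invoke injectivity of the Radon transform on $L^1$. You fill in one step the paper glosses over — passing from half-space integrals to hyperplane integrals via the pushforward-measure/Fubini argument — but the key lemma and overall structure are identical.
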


\begin{proof}
Since $\Q$ is dense in $\R$ it follows from absolutely continuity of $\mu$ that if $\nu \in \R^d$ and $c \in \R$ 
and $\norm{h}_{\cG} = 0$, then $\int_{H} h d\mu = 0$ for all half-spaces $H$.
Therefore if $\lambda$ is the Lebesgue measure, then 
\begin{align*}
\hat h(\partial H) = \int_{\partial H} h \frac{d\mu}{d\lambda} d\lambda = 0
\end{align*}
for all hyperplanes $\partial H$. But $\hat h$ is the Radon transform of $h d\mu/d\lambda$, which is zero everywhere only if $h d\mu/d\lambda$ is zero almost everywhere
\citep{Hel11}. Therefore
$h \frac{d\mu}{d\lambda} \equiv 0$ and so $h \equiv 0$ almost everywhere.
\end{proof}

\section{Capacity for Half-Space Contexts with Two Layers}
\label{app:2-layer-1-dim}

We now argue that isolation networks can approximate continuous functions to arbitrary precision with just two layers where the
first layer is sufficiently wide and consists of half-space contexts and the second layer (output) has just one neuron and no context. 
For dimension one we give an explicit construction, while for higher dimensions we sketch the argument via the Radon transform inversion formula.
The interestingness of these results is tempered by the fact that the precision of the approximation can only be made arbitrarily small by 
having an arbitrarily wide network \textit{and} that the weights must also be permitted to be arbitrarily large. In practice we did not find two-layer
half-space contexts to be effective beyond the one-dimensional case.

\begin{theorem}
Let $f:\R \to [0,1]$ be continuous and $\mu$ have compact support and be absolutely continuous with respect to the Lebesgue measure. Then for any $\epsilon > 0$ there 
exists a two-layer network with half-space contexts in the first layer and a single non-bias
neuron in the second layer with a trivial context function such that $\norm{p_{21}^* - f}_\infty < \epsilon$, where $p_{21}^*(z)$ is given in Theorem~\ref{thm:capacity}.
\end{theorem}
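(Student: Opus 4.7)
The plan is to exhibit an explicit two-layer network and analyse the recursive optima $p^*_{1j}$ and $p^*_{21}$ defined in the proof of Theorem~\ref{thm:capacity}. First I would use uniform continuity of $f$ on the compact set $\supp(\mu)$ to choose thresholds $t_1 < \cdots < t_n$ partitioning a neighbourhood of $\supp(\mu)$ into intervals $I_0,\ldots,I_n$ on each of which the oscillation of $f$ is below $\epsilon/4$. Then I would place one first-layer neuron per threshold with half-space context $c_j(z) = \mathds{1}\{z \geq t_j\}$, take the base layer to be a single bias neuron (output $\beta$), and place a single non-bias neuron with trivial context in the second layer. The weight hypercubes $\cW_{ij}$ will be chosen large enough to contain the specific weights produced by the analysis.

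Next I would compute the first-layer optima explicitly. Since the only input is the constant $\logit(\beta) = 1$, the recursion in the proof of Theorem~\ref{thm:capacity} reduces to $p^*_{1j}(z) = \sigma(w^*_{1j, c_j(z)})$, where $w^*_{1j,a}$ is the minimizer of $\int_{c_j^{-1}(a)} \cD(f(z), \sigma(w))\, d\mu(z)$. The KL-minimizing Bernoulli parameter is the mean, so $\sigma(w^*_{1j,a}) = \bar f_{j,a} := \E_\mu[f \mid c_j = a]$. A preliminary perturbation of the partition points ensures $\bar f_{j,-} \neq \bar f_{j,+}$ for every $j$, which is generic when $f$ is non-constant (the constant case is trivial).

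The crux is a short linear-algebra calculation. The vector-valued map $v(z) := \logit(p^*_1(z)) \in \R^{n+1}$ (including the bias coordinate, constant $1$) is piecewise constant on $\{I_k\}_{k=0}^n$. I would show that the $(n+1)\times(n+1)$ matrix $V$ whose rows are the values $v|_{I_k}$ is invertible: consecutive-row differences are supported on a single coordinate with nonzero entry $\logit(\bar f_{k,+}) - \logit(\bar f_{k,-})$, revealing (after a reorder) a triangular structure with nonzero diagonal. Consequently the map $w \mapsto (\sigma(w \cdot v|_{I_k}))_k$ from $\R^{n+1}$ to $(0,1)^{n+1}$ is surjective, so the integrated KL loss decouples across intervals and is minimised at the unique $w^*$ satisfying $\sigma(w^* \cdot v|_{I_k}) = \bar f_k := \E_\mu[f \mid I_k]$. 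Hence on each $I_k \cap \supp(\mu)$ we get $p^*_{21}(z) = \bar f_k$, which is within $\epsilon/4$ of $f(z)$ by the uniform-continuity choice of partition.

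The two main obstacles are (i) guaranteeing the span condition $\bar f_{j,-} \neq \bar f_{j,+}$ for every threshold, and (ii) the possibility that $\bar f_k$ approaches $0$ or $1$, in which case the required weight $w^* = V^{-1}\logit(\bar f)$ has large components and needs a correspondingly large hypercube $\cW_{21}$. Obstacle (i) can be dealt with by perturbing the thresholds slightly (the conditional-mean gap is a continuous function of $t_j$ that vanishes only on a negligible set for non-constant $f$) or by dropping redundant neurons. Obstacle (ii) is handled either by enlarging $\cW_{21}$ to contain the finite vector $V^{-1}\logit(\bar f)$, or in pathological cases by first replacing $f$ with $\tilde f = \max(\eta,\min(1-\eta,f))$ for $\eta = \epsilon/4$, running the above argument against $\tilde f$, and absorbing the $\eta$ sup-norm error via the triangle inequality. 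I expect the linear-algebra step establishing invertibility of $V$ to be the most delicate, since it is what licences the decoupling argument that identifies $p^*_{21}$ with the conditional mean on each interval.
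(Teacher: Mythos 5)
Your approach is correct in its core logic but takes a genuinely different route from the paper. Both constructions start from the same place: use Theorem~\ref{thm:capacity} Part (1) to identify the first-layer optima as logits of half-line conditional means. From there the paper proceeds \emph{analytically}: it mollifies $f$ so it is differentiable, writes $\logit f(z) = \logit f(0) + \int_0^z v(x)\left(\ell(x)-\bar\ell(x)\right)dx$ by the fundamental theorem of calculus with $v$ chosen to cancel the common factor, and then observes that this integral is approximated by the finite sum in the output of the second-layer neuron for a fine enough Riemann partition and weights $w_k \propto v(b_k)\Delta b_k$. Your argument is \emph{algebraic}: you observe that the vector $\logit(p_1^*(z))$ is piecewise constant across the cells $I_k$ of the partition, show that consecutive-cell differences yield a triangular matrix $V$ with nonzero diagonal $\logit(\bar f_{j,+}) - \logit(\bar f_{j,-})$, conclude that the second-layer map $w \mapsto \left( \sigma(w\cdot v|_{I_k}) \right)_k$ is onto $(0,1)^{n+1}$, and hence that the $\mu$-integrated KL loss decouples across cells with the unique minimizer equal to the cell-conditional mean $\bar f_k$ on each $I_k$. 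Each approach has something the other lacks: the paper's route avoids your genericity condition ($\bar f_{j,+}\neq\bar f_{j,-}$, which you handle by perturbation or neuron-dropping, though that step is only sketched and needs care when $g(t)=\bar f_+(t)-\bar f_-(t)$ vanishes on a positive-length interval), while your route avoids the paper's smoothness hypothesis entirely and, more importantly, directly characterizes the actual KL-minimizer $p^*_{21}$ as the cell-conditional mean rather than merely exhibiting a weight vector that would work --- a step the paper's sketch glosses over. Both sketches share the genuine remaining technicality that $\|\cdot\|_\infty$ can realistically be controlled only on a compact neighbourhood of $\supp(\mu)$ where the partition is fine, and both require $\cW_{21}$ large enough to contain the produced weights, which you both flag.
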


\begin{proof}
By applying a mollifier, any continuous $f$ may be approximated to arbitrary precision by a differentiable function 
with range a subset of $(\epsilon, 1-\epsilon)$ for small $\epsilon > 0$.
From now on we simply assume that $f$ is differentiable.
For $b \in (0,1)$ let 
\begin{align*}
\ell(b) &= \logit\left(\frac{1}{\mu([b,1])}\int^1_b f(z) d\mu(z)\right) &
\bar \ell(b) &= \logit\left(\frac{1}{\mu([0,b])} \int^b_0 f(z) d\mu(z) \right)
\end{align*}
and $\ell = \int^1_0 f(z) d\mu(z)$. Because $f(z) \in (\epsilon, 1-\epsilon)$ it follows that $\ell(b)$ and $\bar\ell(b)$ are also bounded.
By the first part of Theorem~\ref{thm:capacity}, if neuron $(1,k)$ uses context function $c_{1k}(z) = \mathds{1}_{z \geq b_k}(z)$, then
\begin{align*}
\logit(p^*_{1k}(z)) = \begin{cases}
\ell(b_k) & \text{if } z \geq b_k \\
\bar \ell(b_k) & \text{otherwise}\,.
\end{cases}
\end{align*}
Since neuron $(2,1)$ has a trivial context function there exists an $a \in \cC$ such that $c_{21}(z) = a$ for all $z \in \R$.
Abbreviating $w_k = w_{21ak}$ we can write the output of neuron $(2,1)$ as
\begin{align}
p^*_{21}(z) 
&= \sigma\left(\sum_{k=0}^{K_1} w_k \logit(p^*_{1k}(z))\right) \nonumber \\
&= \sigma\left(w_0 + \sum_{k : b_k \leq z} w_k \ell(b_k) + \sum_{k: b_k > z} w_k \bar\ell(b_k)\right) \nonumber \\
&= \sigma\left(w_0 + \sum_{k : b_k \leq z} w_k (\ell(b_k) - \bar \ell(b_k)) + \sum_k w_k \bar\ell(b_k)\right)\,.\label{eq:finite-sum}
\end{align}
The main idea is to write an approximation of $f$ as an integral that in turn may be approximated by the sum above for sufficiently large $K_1$ and well chosen
$w_k$. Define
\begin{align*}
v(x) = \frac{f'(x)}{f(x)(1-f(x))} \left(\frac{1}{\ell(x) - \bar \ell(x)}\right)\,.
\end{align*}
By differentiating the logit function and the fundamental theorem of calculus and the assumption that $f$ is differentiable and $f(z) \in (\epsilon, 1-\epsilon)$ we have
\begin{align*}
f(z) 
&= \sigma\left(\logit(f(0)) + \int^z_0 \frac{f'(x)}{f(x)(1-f(x))}dx \right) \\
&= \sigma\left(\logit(f(0)) + \int^z_0 v(x)(\ell(x) - \bar \ell(x)) dx \right)\,.
\end{align*}
The result follows by approximating the integral above with the sum in \cref{eq:finite-sum}.
\end{proof}

The situation for higher dimensions is more complicated, but the same theorem may be proven for $d > 1$ by appealing 
to the Radon transform inversion formula \citep[Chap 1]{Hel11},
which says that any infinitely differentiable and compactly supported function  $g:\R^d \to \R$ may be represented by
\begin{align*}
g(z) = \int_S \tilde g(\eta, \ip{z, \eta}) d\eta\,,
\end{align*}
where $S = \{x \in \R^d : \norm{x}_2 = 1\}$ is the sphere and the integral is with respect to the uniform measure and $\tilde g:S \times \R \to \R$
is carefully chosen. The precise form of $\tilde g$ is quite complicated, but only depends on the Radon transform of $g$, which is the operator
$(Rg): S \times \R \to \R$ given by $(Rg)(\eta, x) = \int_{\partial H_{\eta,x}} f(z) dz$. This result means that sufficiently regular functions are entirely
determined by their integrals on hyperplanes.
It remains to note that for wide two-layer networks with arbitrarily large weights and half-space contexts, the output of neuron $(2,1)$ can
approximate the above integral to arbitrary precision. We leave the details for the interested reader.

\section{Proof of Equation \ref{eq:switching_prior_bound}}
\label{app:switch_prior_proof}

This result is the same as that of Lemma 1 in \citep{VSHB12}.
We provide the proof here for the sake of completeness. 

\begin{lemma}
If $s(\nu_{1:n}) = \sum_{k=2}^n \mathds{1}[\nu_k \neq \nu_{k-1}]$, then for all $\nu_{1:n} \in \mathcal{I}_n(\mathcal{M})$, we have
\begin{equation*}
-\log w_{\tau}(\nu_{1:n}) \leq \left( s(\nu_{1:n})+1 \right) \left( \log |\cM| + \log_2 n \right).
\end{equation*}

\end{lemma}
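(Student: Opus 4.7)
The plan is to unroll the recursion defining $w_\tau$, separate the resulting telescoping product into ``no-switch'' and ``switch'' factors, and then bound each piece crudely.

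First I would expand the recursion in \cref{eq:switch_prior_rec} to obtain the closed form
\begin{equation*}
w_\tau(\nu_{1:n}) = \frac{1}{|\cM|} \prod_{k=2}^n \left(\frac{k-1}{k}\mathds{1}[\nu_k=\nu_{k-1}] + \frac{1}{k(|\cM|-1)}\mathds{1}[\nu_k\ne\nu_{k-1}]\right),
\end{equation*}
and then take $-\log$ of both sides to convert the product into a sum:
\begin{equation*}
-\log w_\tau(\nu_{1:n}) = \log|\cM| + \sum_{k=2}^n \mathds{1}[\nu_k=\nu_{k-1}]\log\tfrac{k}{k-1} + \sum_{k=2}^n \mathds{1}[\nu_k\ne\nu_{k-1}]\log\bigl(k(|\cM|-1)\bigr).
\end{equation*}

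Next I would bound the two sums separately. For the no-switch sum, every term is positive, so dropping the indicator only increases the value, giving the telescoping bound
\begin{equation*}
\sum_{k=2}^n \mathds{1}[\nu_k=\nu_{k-1}]\log\tfrac{k}{k-1} \le \sum_{k=2}^n \log\tfrac{k}{k-1} = \log n.
\end{equation*}
For the switch sum, I would use $k \le n$ and $|\cM|-1 \le |\cM|$ to get $\log(k(|\cM|-1)) \le \log n + \log|\cM|$, which combined with $\sum_{k=2}^n \mathds{1}[\nu_k\ne\nu_{k-1}] = s(\nu_{1:n})$ yields
\begin{equation*}
\sum_{k=2}^n \mathds{1}[\nu_k\ne\nu_{k-1}]\log\bigl(k(|\cM|-1)\bigr) \le s(\nu_{1:n})\bigl(\log n + \log|\cM|\bigr).
\end{equation*}

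Finally, adding the three bounds gives $-\log w_\tau(\nu_{1:n}) \le (s(\nu_{1:n})+1)(\log|\cM| + \log n)$, as desired. There is no real obstacle here: the only mild subtlety is recognizing that the ``no-switch'' telescoping contributes the extra $\log n$ factor which combines with the leading $\log|\cM|$ term to give the ``$+1$'' in the final coefficient. The change of log base in the lemma statement is immaterial since the same argument goes through verbatim in any base.
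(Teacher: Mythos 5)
Your proof is correct and follows essentially the same approach as the paper: both unroll the recursion, separate into switch and no-switch contributions, bound the switch terms using $k \le n$ and $|\cM|-1 \le |\cM|$, and telescope the no-switch terms, with your version simply dropping the no-switch indicator (bounding $\log(n-m)$ by $\log n$) a step earlier than the paper does.
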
  
\begin{proof}
Consider an arbitrary $\nu_{1:n} \in \mathcal{I}_n(\mathcal{M})$.
Letting $m = s(\nu_{1:n})$, from Equation \ref{eq:switch_prior_rec} we have that
\begin{eqnarray*}
-\log w_{\tau}(\nu_{1:n}) &=& \log|\cM| -\log  \prod\limits_{t=2}^n \left( \tfrac{t-1}{t} \mathds{1}[\nu_t = \nu_{t-1}] + \tfrac{1}{t (|\cM|-1)} \mathds{1}[\nu_t \neq \nu_{t-1}] \right) \\
&\leq& \log|\cM| -\log  \prod\limits_{t=2}^n \left( \tfrac{t-1}{t} \mathds{1}[\nu_t = \nu_{t-1}] + \tfrac{1}{n (|\cM|-1)} \mathds{1}[\nu_t \neq \nu_{t-1}] \right)\\
&\leq& \log|\cM| -\log \left(  n^{-m} (|\cM|-1)^{-m} \prod\limits_{t=2}^{n-m} \tfrac{t-1}{t} \right) \\
&=& \log|\cM| + m \log n + m \log (|\cM|-1) + \log(n-m) \\
&\leq& (m+1) [\log|\cM| + \log n].
\end{eqnarray*}
\end{proof}

\end{document}